\documentclass[letterpaper]{article} 
\usepackage{aaai2027}  
\usepackage[hyphens]{url}  
\usepackage{graphicx} 
\usepackage{natbib}  
\usepackage{caption} 
\usepackage{algorithm}
\usepackage{algorithm}
\usepackage{algpseudocode}
\usepackage{newfloat}
\usepackage{listings}
\DeclareCaptionStyle{ruled}{labelfont=normalfont,labelsep=colon,strut=off} 
\floatstyle{ruled}
\newfloat{listing}{tb}{lst}{}
\floatname{listing}{Listing}

\usepackage[utf8]{inputenc} 
\usepackage{url}            
\usepackage{booktabs}       
\usepackage{amsfonts}       
\usepackage{nicefrac}       
\usepackage{xcolor}         
\usepackage{amsmath}
\usepackage{multirow}
\usepackage{array}
\usepackage{colortbl}
\usepackage{amssymb}
\usepackage{mathtools}
\usepackage[most]{tcolorbox}
\definecolor{algblue}{RGB}{86,142,150}
\definecolor{algpink}{RGB}{220,80,150}
\definecolor{algviolet}{RGB}{110,110,255}
\definecolor{alggray}{RGB}{70,70,70}
\definecolor{algnavy}{RGB}{38,55,120}

\usepackage{subcaption}
\newcommand{\tblstrut}{\rule{0pt}{2.6ex}}
\usepackage{tabularx} 
\usepackage{pifont}
\newcommand{\cmark}{\ding{51}}

\usepackage{amsthm}
\usepackage{comment}
\theoremstyle{plain}
\newtheorem{proposition}{Proposition}
\usepackage[export]{adjustbox}

\usepackage{float}
\usepackage{xcolor}

\usepackage{bm}
\definecolor{lightgray}{HTML}{D0D3D8}
\definecolor{clementine}{HTML}{FF6666}
\definecolor{mustard}{HTML}{FFD43D}
\definecolor{afmblue}{HTML}{3A98FD}
\definecolor{lightafmblue}{HTML}{66B2FF}
\definecolor{yellow}{HTML}{DEB91C}

\newlength{\colw}
\newlength{\colww}
\newlength{\colwww}
\newcommand{\SuppTOCLine}[2]{%
  \noindent\makebox[\linewidth][l]{#1\dotfill\ #2}%
}

\title{Continuous Adversarial MeanFlow Transfer}

\author{
    Yara Bahram\equalcontrib\textsuperscript{\rm 1,\rm 2,\rm 3},
    Zahra Dehghani\equalcontrib\textsuperscript{\rm 1,\rm 2,\rm 3,\rm 4},
    Mélodie Desbos\textsuperscript{\rm 1,\rm 2,\rm 3},\\
    Eric Granger\textsuperscript{\rm 1,\rm 2,\rm 3},
    Pablo Piantanida\textsuperscript{\rm 2,\rm 4,\rm 5},
    Mohammadhadi Shateri\textsuperscript{\rm 1,\rm 2,\rm 3}
}

\affiliations{
    \textsuperscript{\rm 1}LIVIA, \textsuperscript{\rm 2}ILLS, \textsuperscript{\rm 3}ÉTS Montreal, \textsuperscript{\rm 4}Mila -- Quebec AI Institute\\
    \textsuperscript{\rm 5}CNRS, CentraleSupélec - Université Paris-Saclay\\
    yara.mohammadi-bahram@livia.etsmtl.ca,
    zahra.dehghani-tafti.1@ens.etsmtl.ca
}

\begin{document}


\maketitle
\begin{abstract}
 
Training fast generators on new domains with limited data remains challenging for two reasons. First, adapting a pretrained diffusion or flow model to a new domain leaves its costly multi-step sampling unaddressed, and existing acceleration methods are tied to the source parameterization--$\epsilon$, $x$, $v$, or $u$--leaving heterogeneous pretrained models with no common acceleration target. Second, while adversarial refinement is proven effective for few-step quality, it is formulated only for instantaneous-velocity flows, not for the finite-interval average velocities that MeanFlow (MF) models predict. We address both problems. We propose \textbf{MeanFlow-Transfer (\texttt{MF-T})}, which
maps heterogeneous source outputs into a shared velocity representation, uses it to initialize an MF generator from the source weights, and optimizes an MF objective on the target domain.
This unifies adaptation and acceleration in a single training loop across a broad range of pretrained models. We then introduce \textbf{Continuous Adversarial MeanFlow (\texttt{CAMF})}, a post-training stage that extends continuous adversarial flow models from instantaneous velocities to MF's finite-interval average velocities. \texttt{CAMF} contrasts changes in a learned potential between real and predicted interval endpoints, recovering fine detail that MF regression averages away, and reduces to the instantaneous criterion in the vanishing-interval limit. Adapting four ImageNet-based source models--DiT ($\epsilon$), SiT ($v$), JiT ($x$), iMF ($u$)--to five target domains, \texttt{MF-T} with \texttt{CAMF} matches or exceeds the fine-tuned teacher in FID and FDD at up to $125\times$ fewer Neural Function Evaluations (NFEs), while \texttt{CAMF} improves \texttt{MF-T}'s few-step FID by $29\%$ on average\footnote{\texttt{Project page: }\url{https://yasaman-dt.github.io/CAMFT/}}.
\end{abstract}

\section{Introduction}

\begin{figure}[ht]
    \centering
    \includegraphics[width=0.41\textwidth]{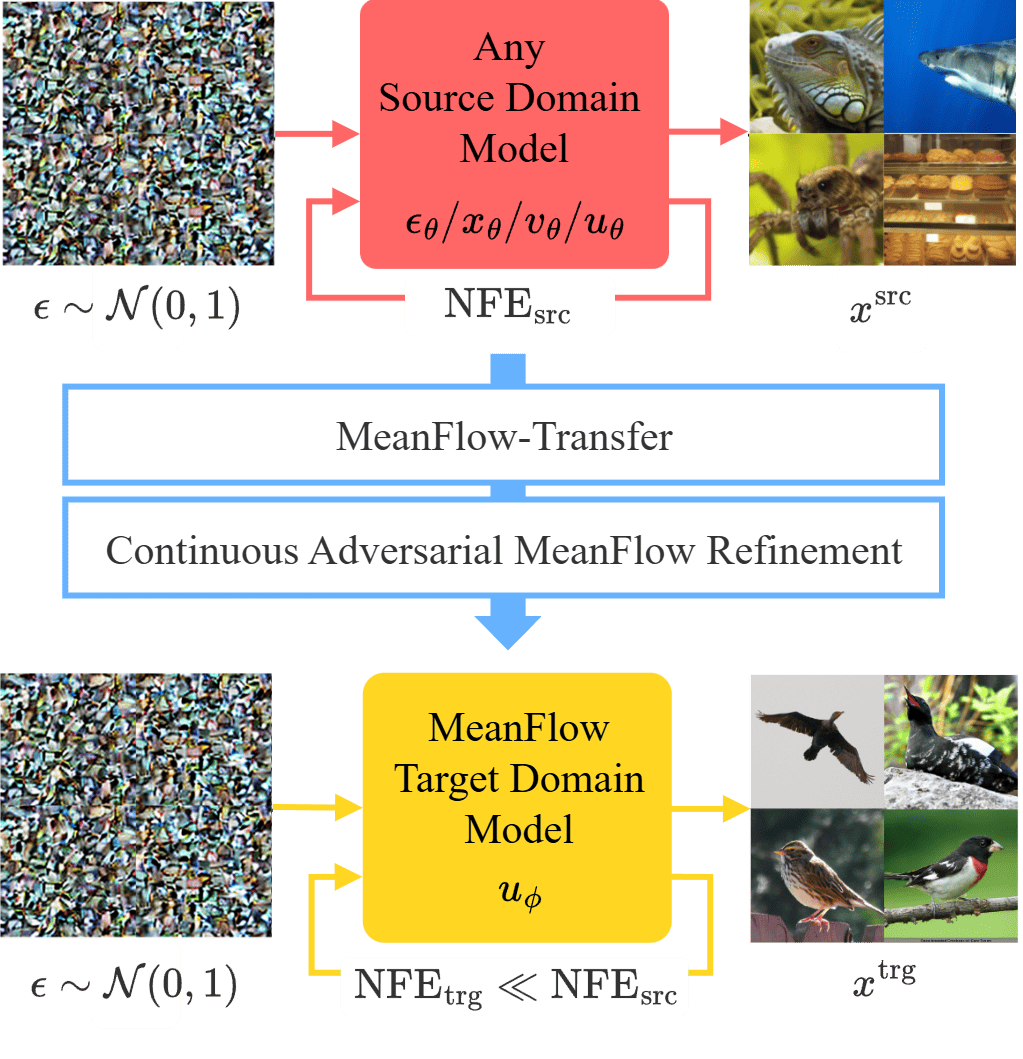}
    \vspace{-8pt}
    \caption{\texttt{MF-T + CAMF} adapts a source-domain diffusion or flow model with any output parameterization into a high-quality few-step MeanFlow (MF) generator 
    in a target domain with $\text{NFE}_\text{trg} \mathop{\ll} \text{NFE}_\text{src}$ ($\text{NFE}_\text{trg} \sim \text{NFE}_\text{src}$ if source model is MF). Source and target domains: ImageNet~\cite{deng2009imagenet} and Birds~\cite{wah2011caltech}.}
    \label{fig:mfa_intro}
    \vspace{-10pt}
\end{figure}

Pretrained diffusion and flow-based generators provide strong generic priors for image synthesis, making them attractive source models for transfer to new domains under limited data~\citep{ouyang2024transfer,cao2024few,moon2022fine,xie2023difffit,zhong2025domain,bahram2026dogfit,zhong2024diffusion,bahram2026uni}. Yet, adaptation leaves the source generator's multi-step procedure intact, limiting use in applications that require fast and interactive generation. In parallel, few-step generation has advanced rapidly, but these methods are typically standalone generation paradigms rather than recipes for transferring a pretrained model to a new domain~\cite{salimans2022progressive,song2023consistency,yin2024one,sauer2024adversarial,geng2025mean}. Thus, acceleration and adaptation have been pursued separately, and where combined only within a single fixed source parameterization~\cite{bahram2026uni}. This exposes the first obstacle to transfer across models: pretrained generators are not expressed in a common prediction parameterization--they may predict the input $x$, noise $\epsilon$, instantaneous velocity $v$, or mean velocity $u$--so no single accelerate-and-adapt procedure applies across model families. This leaves open how to obtain a fast, high-quality target-domain generator from heterogeneous pretrained models under limited data.

\newcommand{\dittwolinenfe}{\includegraphics[width=\colwww,height=\colwww,keepaspectratio]{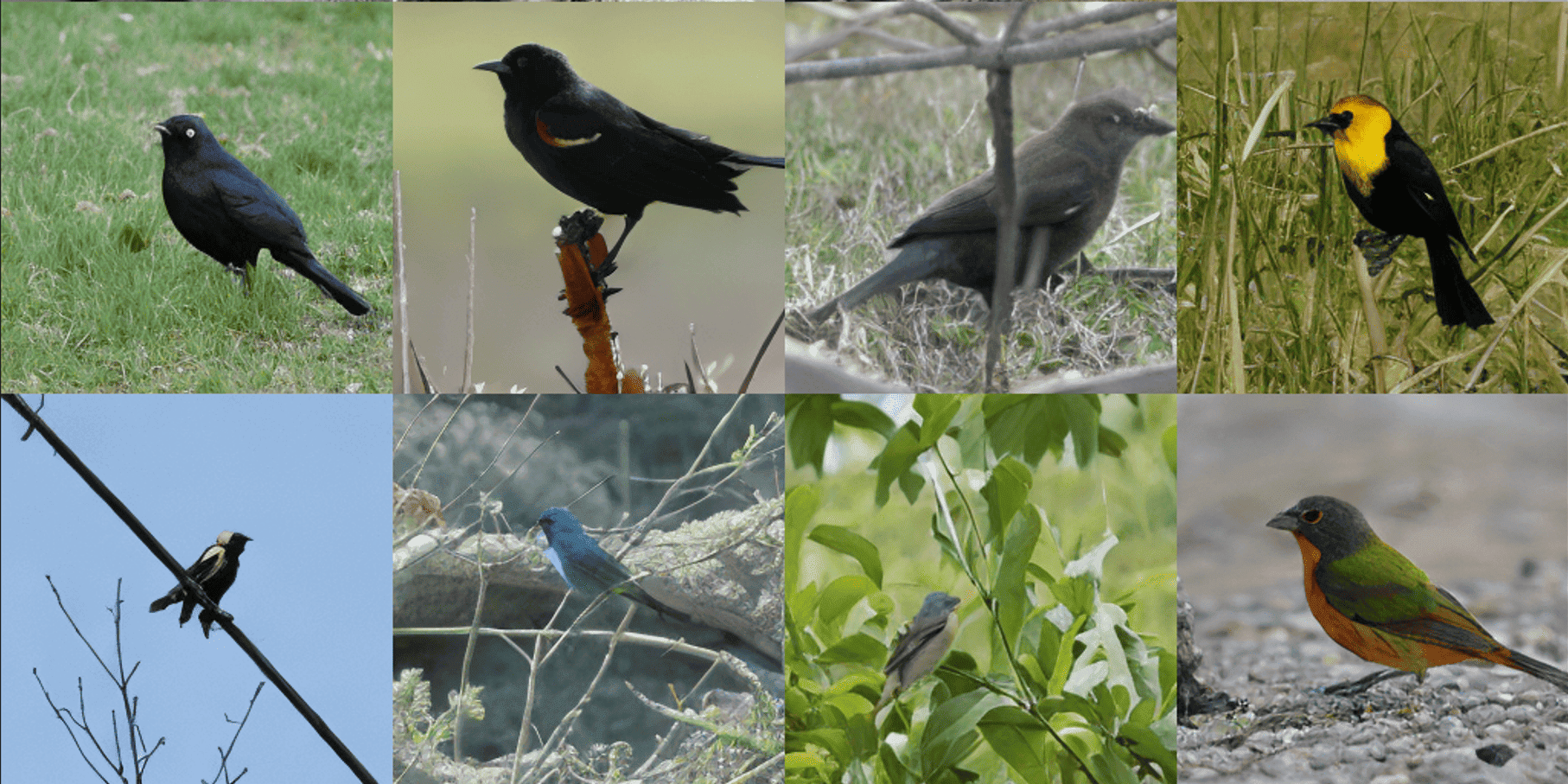}}

\newcommand{\jittwolinenfe}{\includegraphics[width=\colwww,height=\colwww,keepaspectratio]{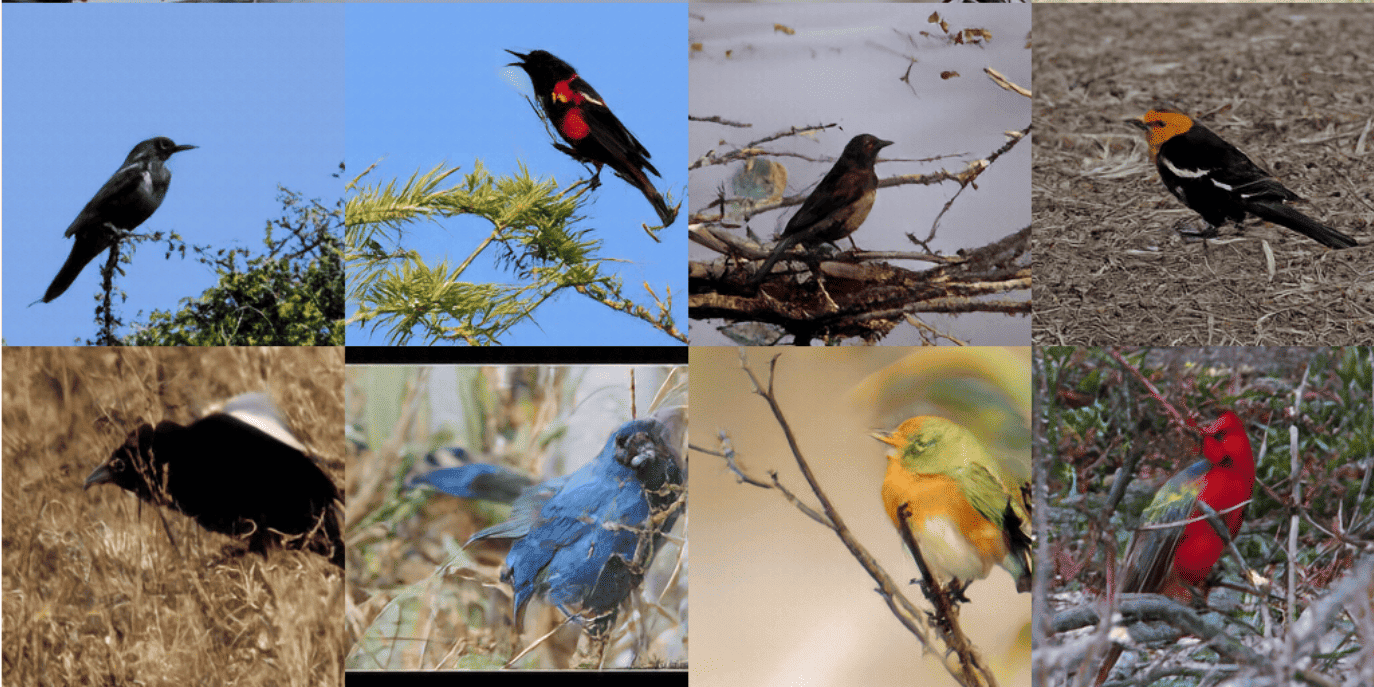}}

\newcommand{\imftwolinesfournfe}{\includegraphics[width=\colwww,height=\colwww,keepaspectratio]{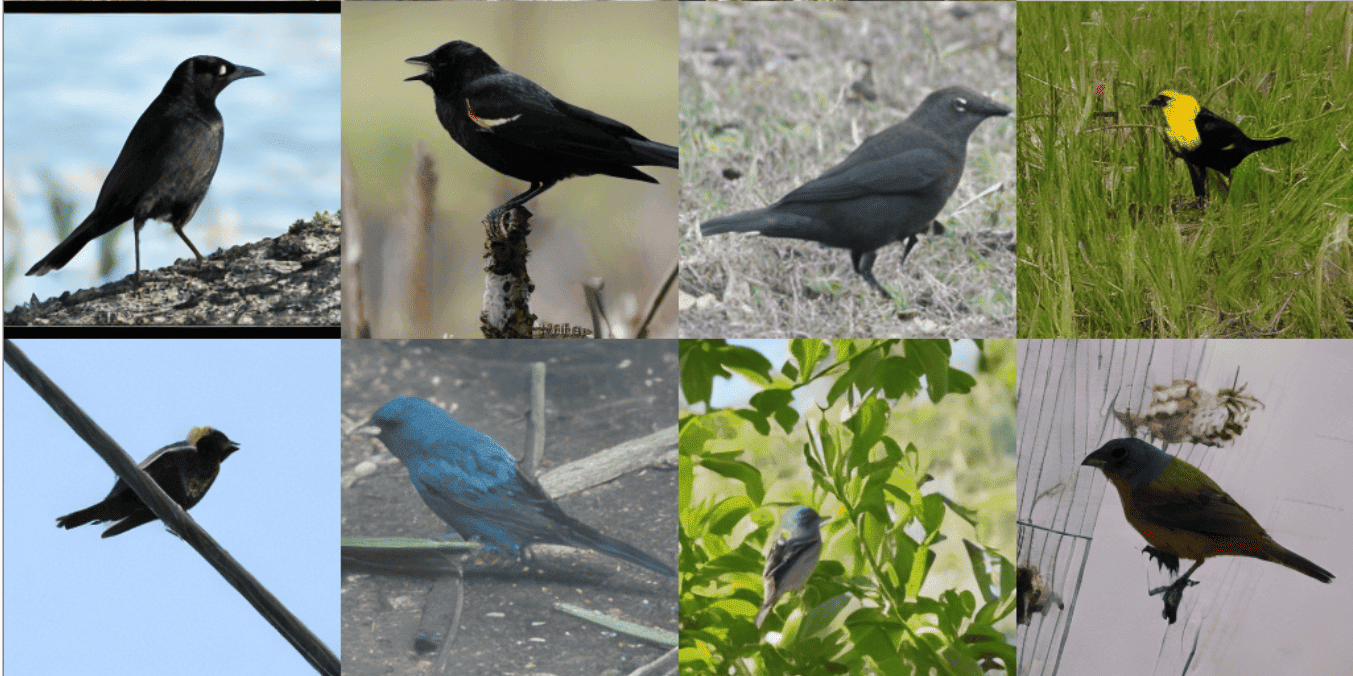}}

\newcommand{\sittwolinenfe}{\includegraphics[width=\colwww,height=\colwww,keepaspectratio]{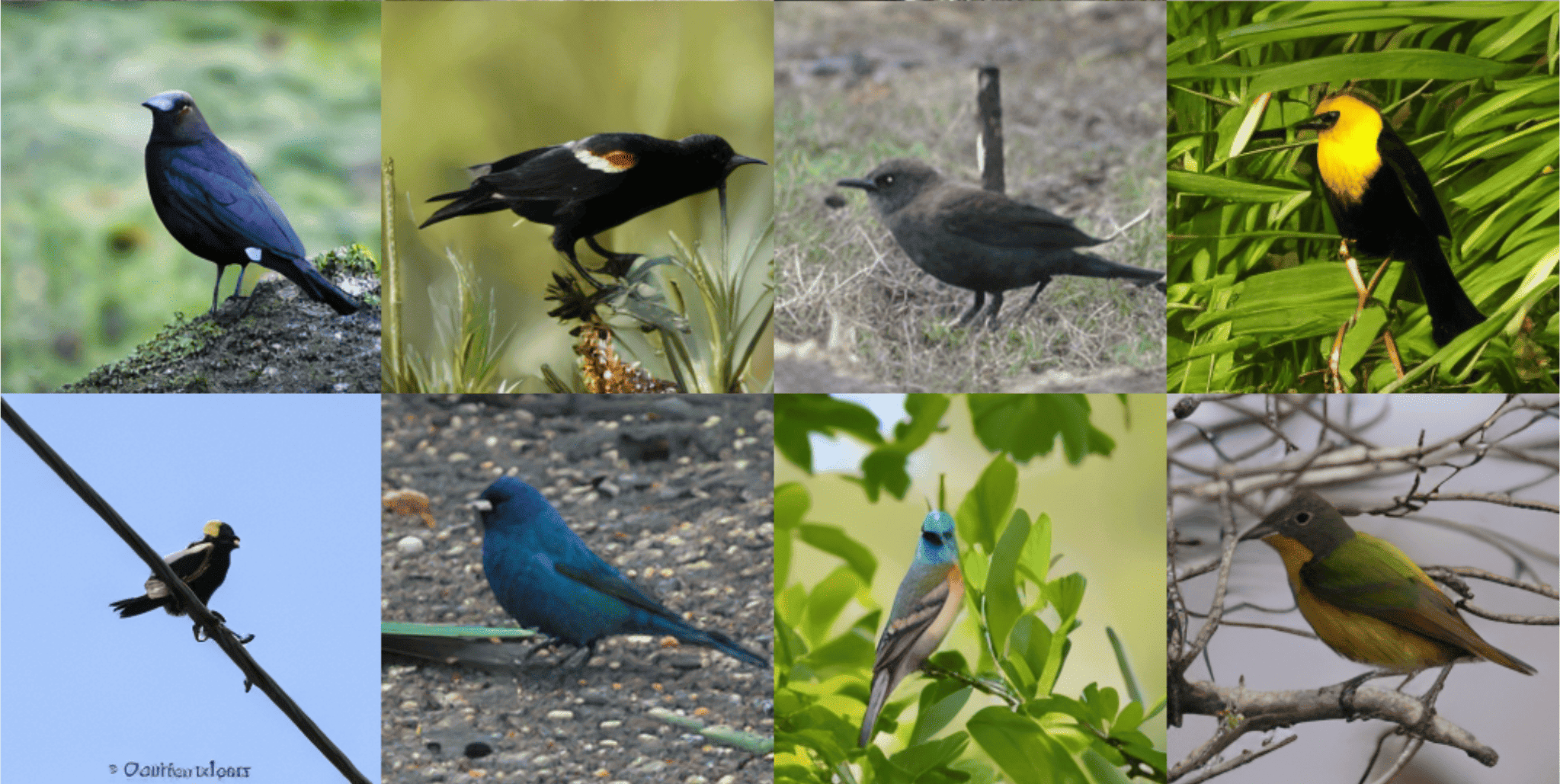}}

\newcommand{\dittwolinenfeart}{\includegraphics[width=\colwww,height=\colwww,keepaspectratio]{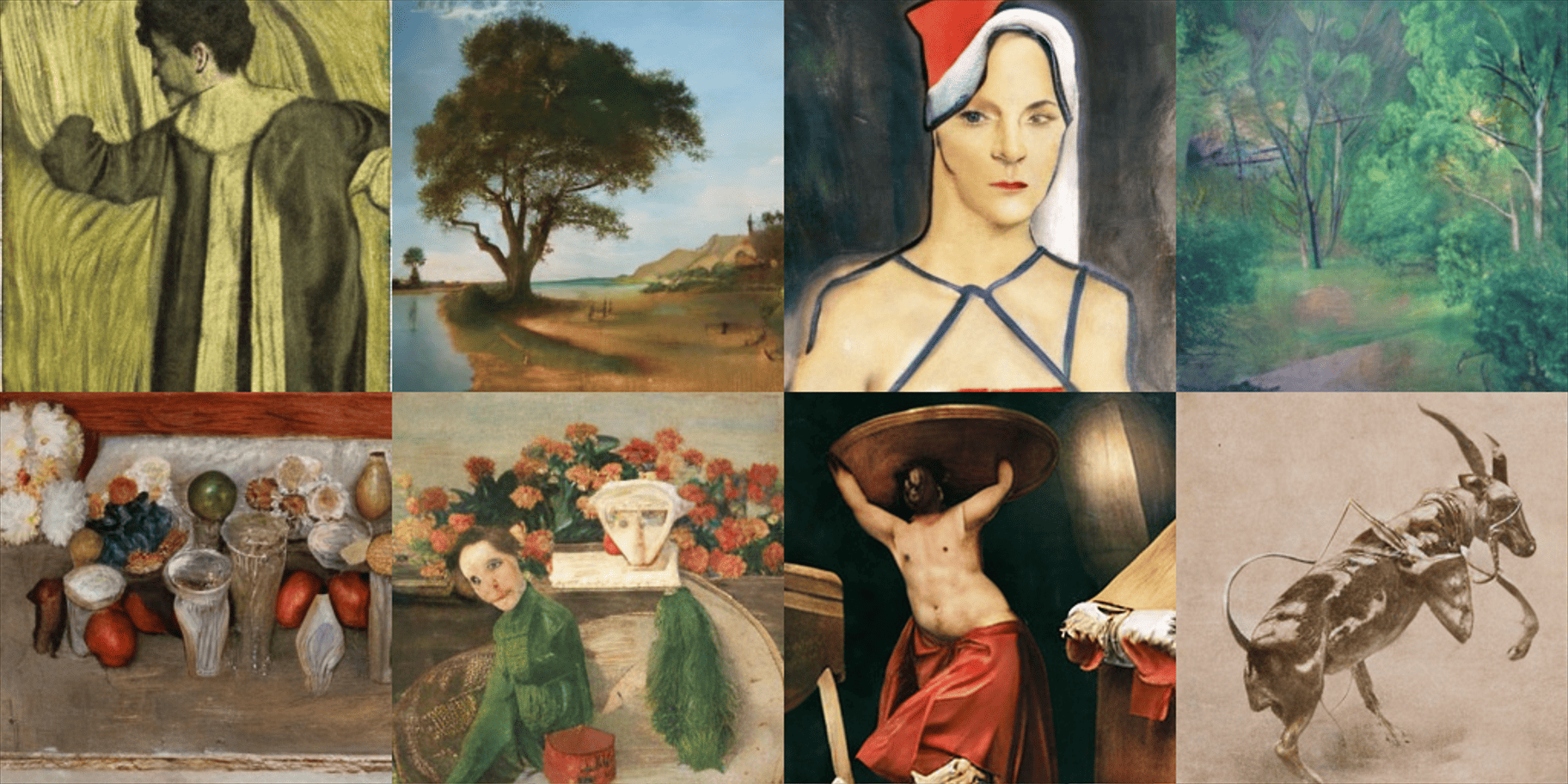}}

\newcommand{\jittwolinenfeart}{\includegraphics[width=\colwww,height=\colwww,keepaspectratio]{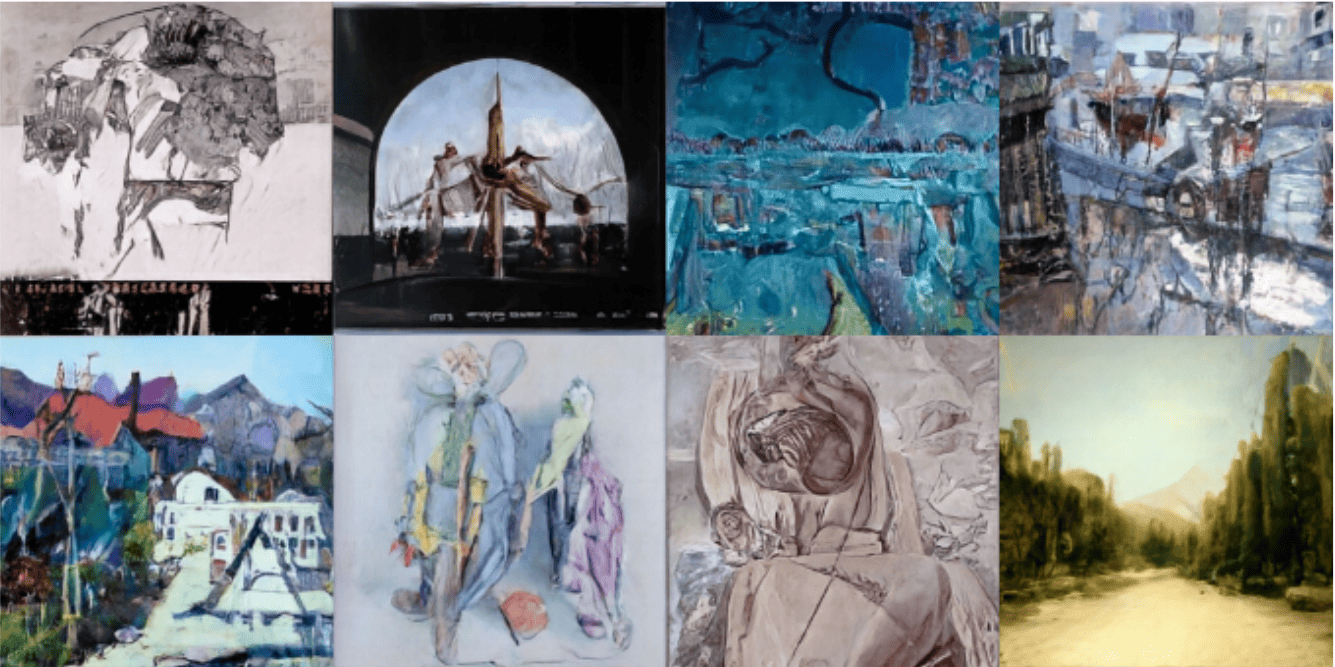}}

\newcommand{\imftwolinesfournfeart}{\includegraphics[width=\colwww,height=\colwww,keepaspectratio]{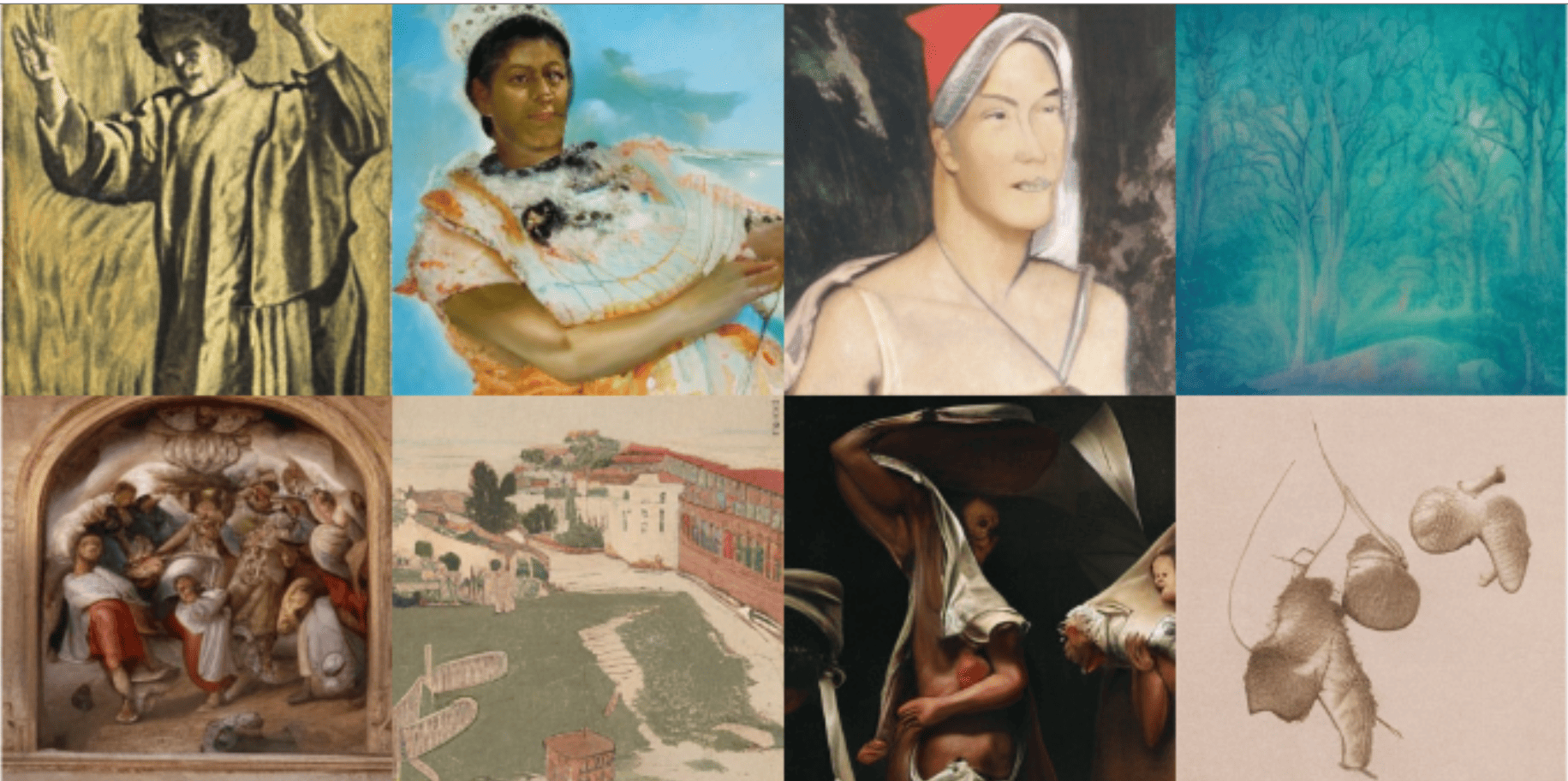}}

\newcommand{\sittwolinenfeart}{\includegraphics[width=\colwww,height=\colwww,keepaspectratio]{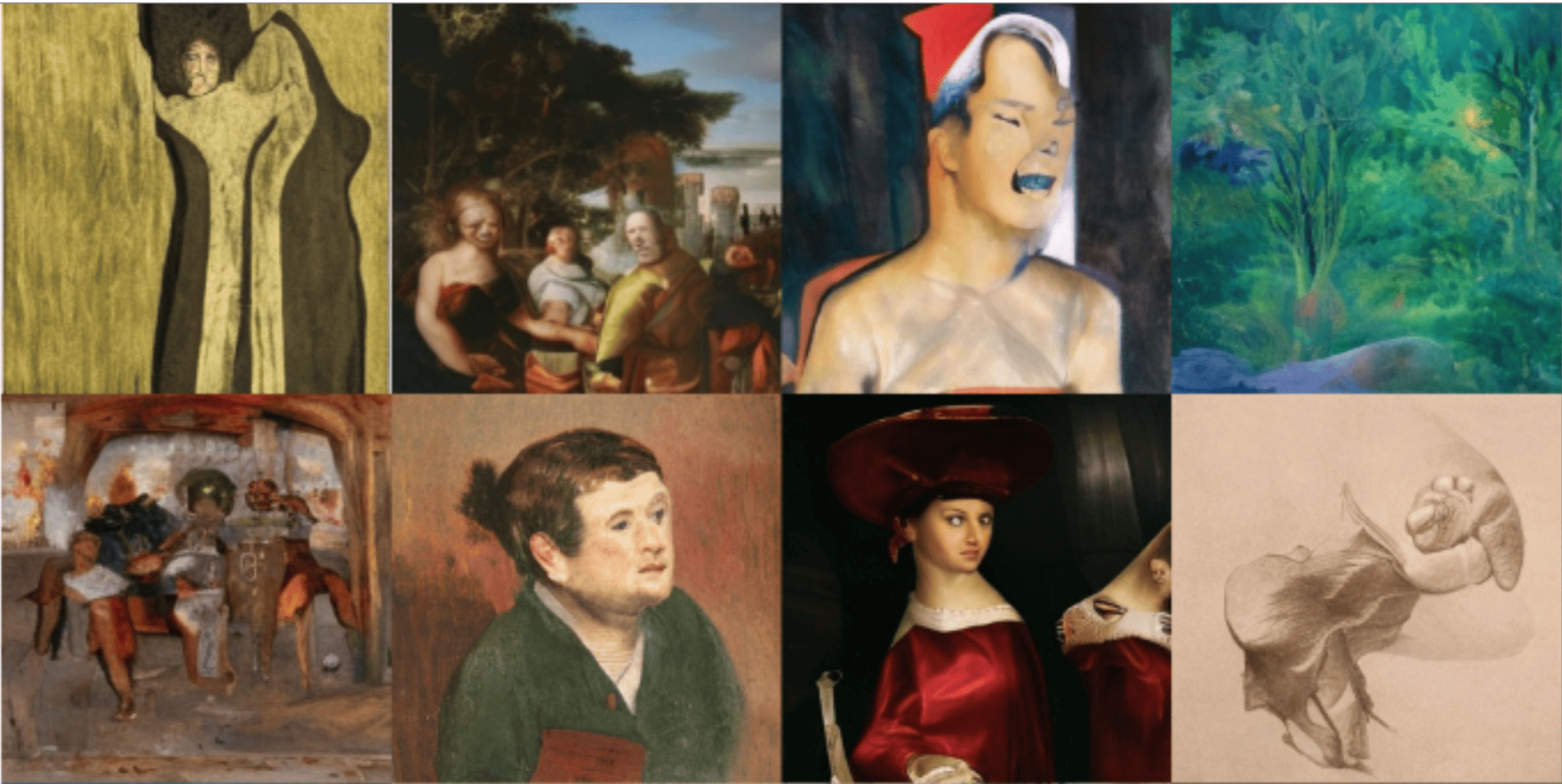}}

\newcommand{\dittwolinenfecars}{\includegraphics[width=\colwww,height=\colwww,keepaspectratio]{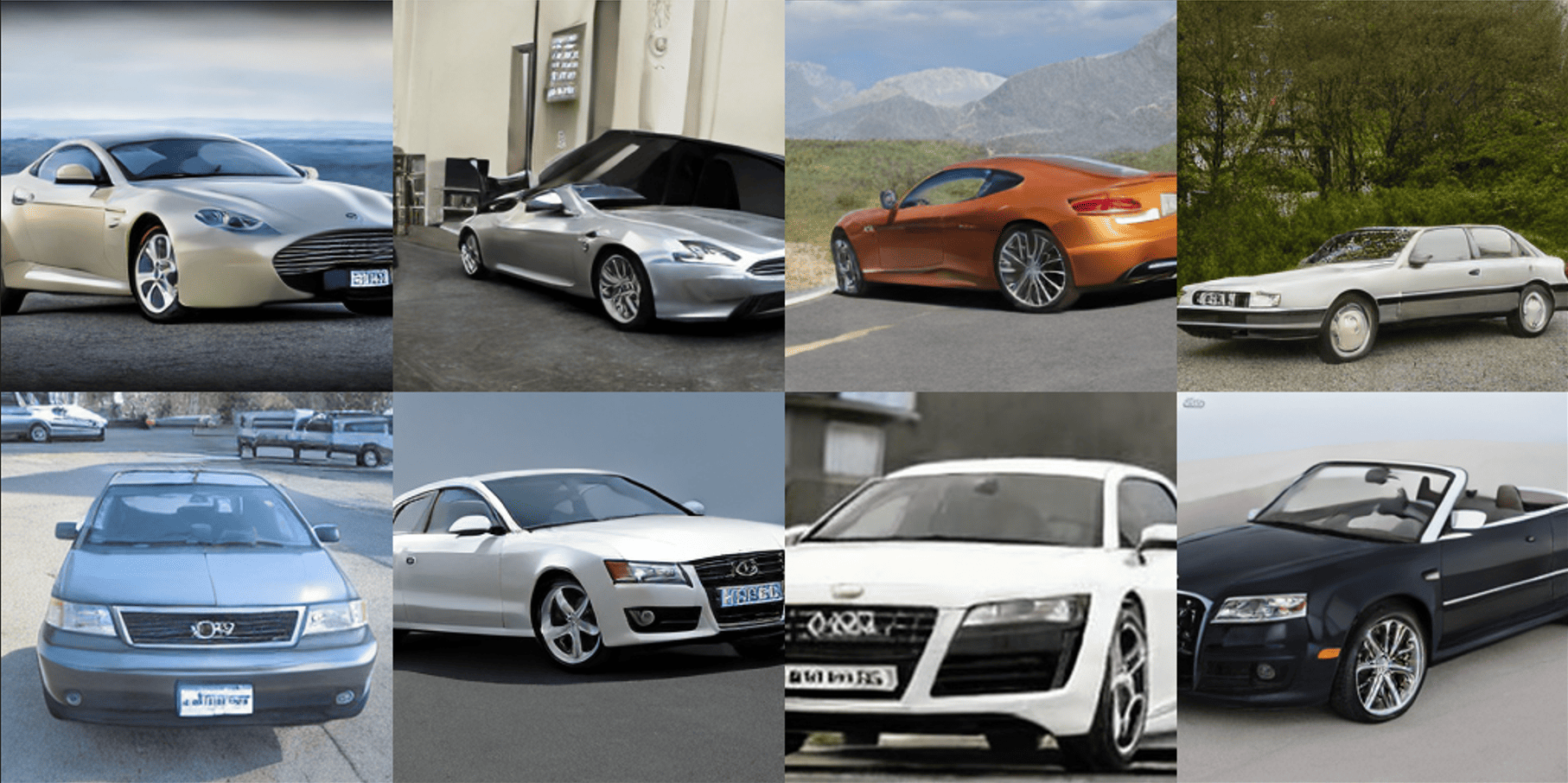}}

\newcommand{\jittwolinenfecars}{\includegraphics[width=\colwww,height=\colwww,keepaspectratio]{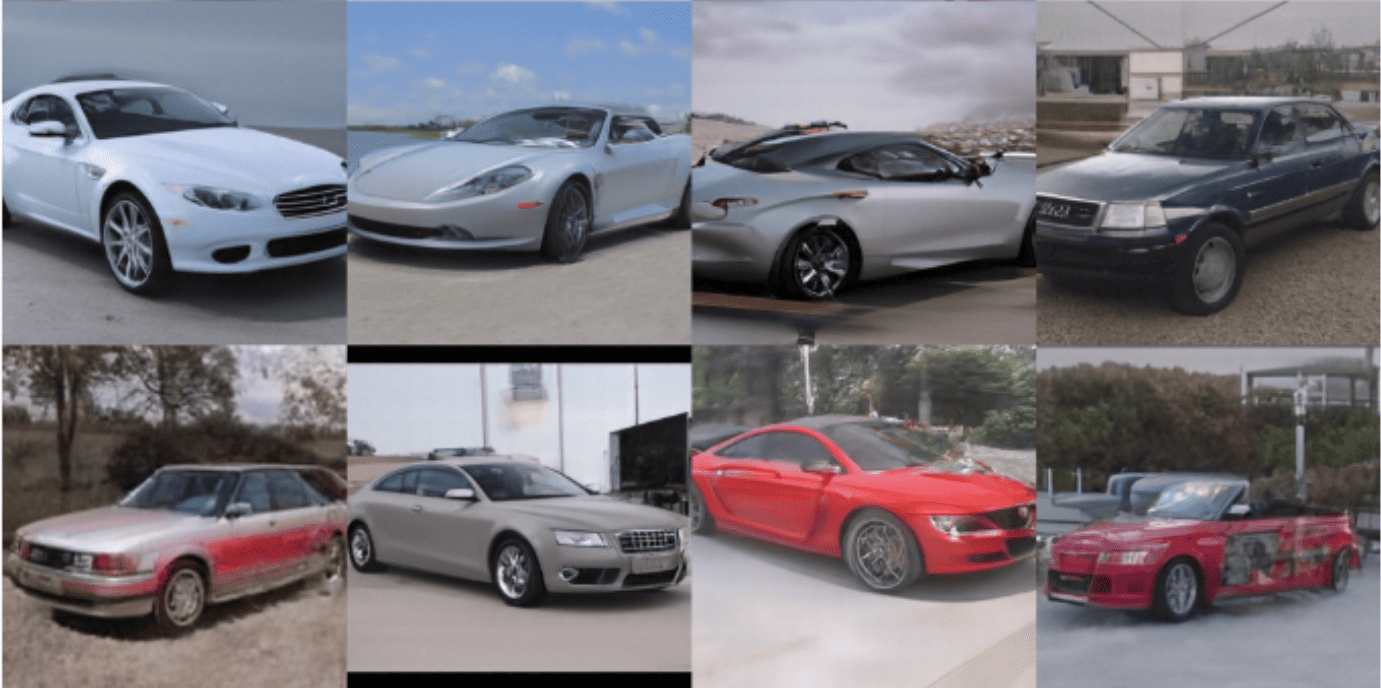}}

\newcommand{\imftwolinesfournfecars}{\includegraphics[width=\colwww,height=\colwww,keepaspectratio]{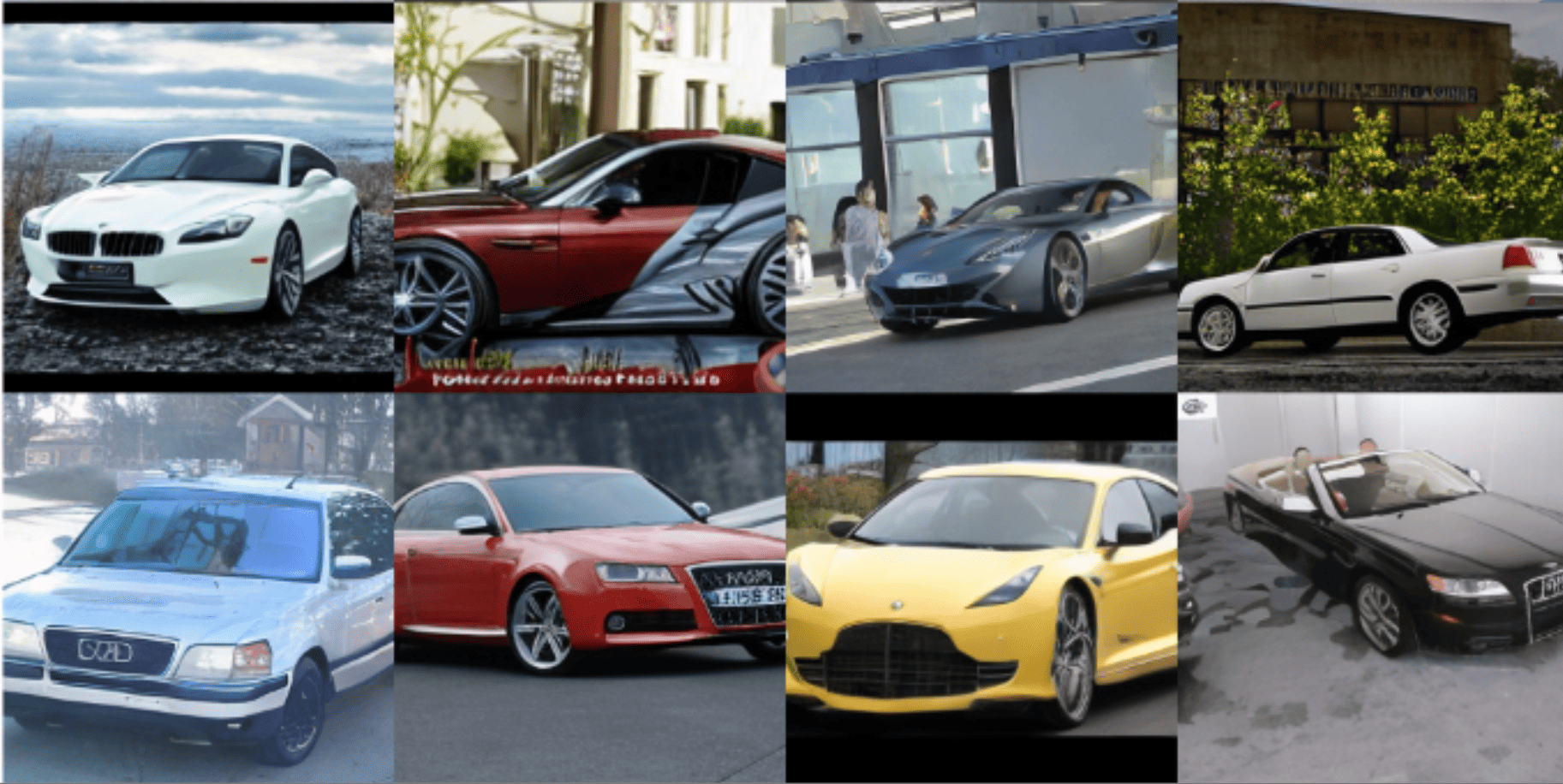}}

\newcommand{\sittwolinenfecars}{\includegraphics[width=\colwww,height=\colwww,keepaspectratio]{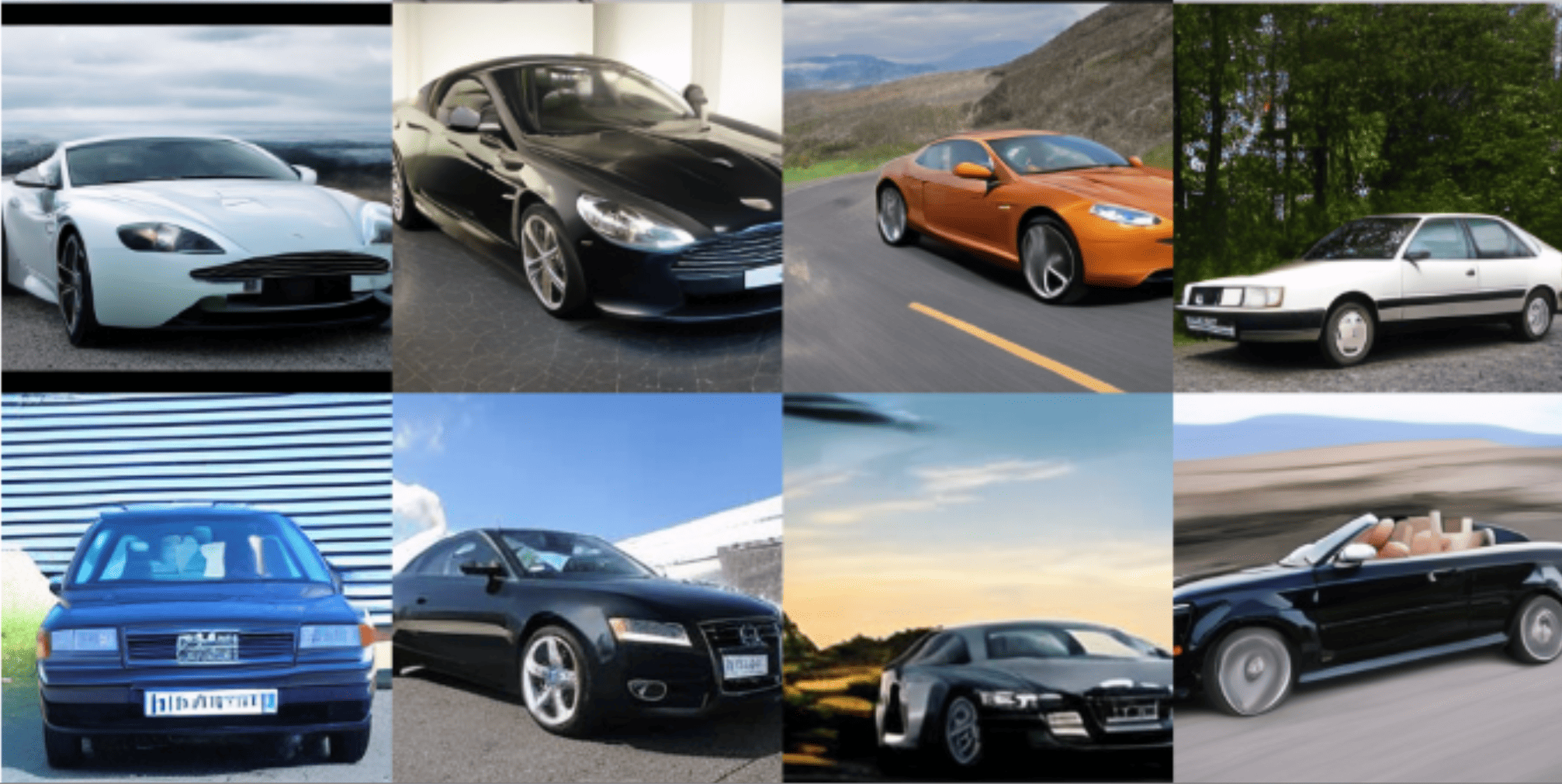}}

\newcolumntype{L}[1]{>{\raggedright\arraybackslash}m{#1}}
\newcolumntype{M}[1]{>{\centering\arraybackslash}m{#1}}
\newcolumntype{Y}{>{\centering\arraybackslash}X}
\begin{figure*}[t!]
\centering
\scriptsize
\setlength{\tabcolsep}{6pt}

\resizebox{0.97\linewidth}{!}{%
\begin{tabularx}{\linewidth}{
    @{}
    L{0.027\textwidth}
    Y Y Y Y
    @{}
}
\toprule \vspace{-1.5pt}
\textbf{Dataset}
&
\textbf{iMF-XL/2} $~~~~~~~~~~~~~~~~\text{NFE}_{\text{src}\rightarrow\text{trg}}=2\rightarrow2$
&
\textbf{SiT-XL/2}  $\text{NFE}_{\text{src}\rightarrow\text{trg}}=250\times2\rightarrow4$
&
\textbf{DiT-XL/2}  $\text{NFE}_{\text{src}\rightarrow\text{trg}}=250\times2\rightarrow8$
&
\textbf{JiT-H/16}  $~~~~~~~~~~~~~~~~\text{NFE}_{\text{src}\rightarrow\text{trg}}=50\times2\rightarrow4$

\\[-1.5pt]
\midrule
 \textbf{CUB}
&
\imftwolinesfournfe
&
\sittwolinenfe
&
\dittwolinenfe
&
\jittwolinenfe
\\[-2pt]

\midrule
\textbf{ArtB.}
&
\imftwolinesfournfeart
&
\sittwolinenfeart
&
\dittwolinenfeart
&
\jittwolinenfeart
\\[-2pt]

\midrule
\textbf{Cars}
&
\imftwolinesfournfecars
&
\sittwolinenfecars
&
\dittwolinenfecars
&
\jittwolinenfecars
\\[-1.5pt]

\bottomrule

\end{tabularx}}

\caption{Target-domain generated images using \texttt{MF-T+CAMF} initialized from different source models. Images are uncurated.}
\label{fig:qualitative_main}

\vspace{-10pt}
\end{figure*}

Meeting this need calls for a transfer interface that maps heterogeneous source parameterizations into a shared representation while producing a fast target-domain model. The MeanFlow (MF) family is a natural fit: by modeling average rather than instantaneous velocity, it attains high-quality generation in only a few steps~\cite{geng2025mean, geng2025improved}. Decoupled MeanFlow (DMF) further post-trains a flow-matching model into a few-step MF generator~\cite{lee2026decoupled}, while remaining in the same parameterization and domain. 

A second obstacle is intrinsic to few-step generation itself. When each sampling step is a large transport jump, regression-based diffusion and flow models tend to average over plausible outputs, producing blurred, less faithful results~\cite{sauer2024adversarial,sauer2024fast}. MF is no exception, and we empirically see this effect intensify under the limited data of transfer. Adversarial post-training is a strong solution for such models, and continuous adversarial flow models (CAFMs) deliver it as a lightweight stage to refine an already-trained flow model at fixed sampling cost~\cite{cafm}. However, applying adversarial post-training to MF faces a representational gap: CAFMs discriminate an \emph{instantaneous} velocity, whereas MF predicts \emph{finite-interval average} velocities. Therefore, a continuous adversarial criterion on MF requires discriminating, at inference, its model's finite-interval transport, not an instantaneous velocity it never uses.

In this work, we propose \textbf{MeanFlow-Transfer (\texttt{MF-T})}, together with an adversarial post-training stage, \textbf{Continuous Adversarial MeanFlow (\texttt{CAMF})}, addressing both obstacles (Fig.~\ref{fig:mfa_intro}). \texttt{MF-T} maps a source model's output into a shared instantaneous-velocity representation, providing a common interface for source models that predict $x$, $\epsilon$, $v$, or $u$. It then initializes an MF generator from the source weights and fine-tunes on target-domain data with the improved-MF (iMF) objective~\cite{geng2025improved}. This unifies adaptation and acceleration in a single loop across a broad range of pretrained models. Crucially, we map parameterizations rather than align schedules: Diff2Flow-style schedule alignment~\cite{schusterbauer2025diff2flow} destabilizes training under domain shift and low Neural Function Evaluations (NFEs), whereas our velocity mapping remains stable and improves quality in both scenarios. Then, \texttt{CAMF} refines the adapted MF model by extending continuous adversarial learning from the instantaneous velocities of CAFM~\cite{cafm} to the finite-interval average velocity of MF generators. \texttt{CAMF} compares changes in a learned potential between real and predicted interval endpoints, recovering fine detail that MF regression averages away. We further prove that it reduces to the instantaneous criterion in the vanishing-interval limit. 


We evaluate \texttt{MF-T} on ImageNet-pretrained DiT ($\epsilon$), SiT ($v$), JiT ($x$), iMF ($u$) models--spanning all four source parameterizations--adapted to five target domains. On its own, \texttt{MF-T} already turns each pretrained source into a few-step target-domain generator that substantially outperforms standard few-step fine-tuning in FID and FDD. \texttt{CAMF} improves \texttt{MF-T}'s few-step FID by $29\%$ on average, surpassing prior adversarial methods. Used together, \texttt{MF-T}+\texttt{CAMF} match or exceed fine-tuned source model in FID and FDD at up to $125\times$ fewer NFEs.

\noindent\textbf{Contributions.} \textit{(i)} MeanFlow-Transfer (\texttt{MF-T}), a training recipe that adapts a pretrained source model and compresses it into a few-step MF generator through a shared instantaneous-velocity interface for $x$, $\epsilon$, $v$, and $u$ parameterizations. We show that mapping parameterizations, rather than aligning schedules, is what makes transfer into MF stable.
\textit{(ii)} Continuous Adversarial MeanFlow (\texttt{CAMF}), an adversarial
post-training objective that extends continuous adversarial learning from
instantaneous velocities to the finite-interval average velocities an MF
generator predicts, and we prove it reduces to CAFM in the instantaneous limit. \textit{(iii)} Across four source models spanning $x$, $\epsilon$, $v$, and $u$ parameterizations and five target domains, \texttt{MF-T} surpasses standard few-step fine-tuning, and \texttt{CAMF} post-training pushes quality further than prior adversarial methods--matching and often exceeding the fine-tuned source model at $125\times$ fewer NFEs.

\begin{figure*}[t]
\centering

\newlength{\mainfigheight}
\setlength{\mainfigheight}{4.5cm}

\begin{subfigure}[b]{0.34\textwidth}
    \centering
    \includegraphics[width=\linewidth]{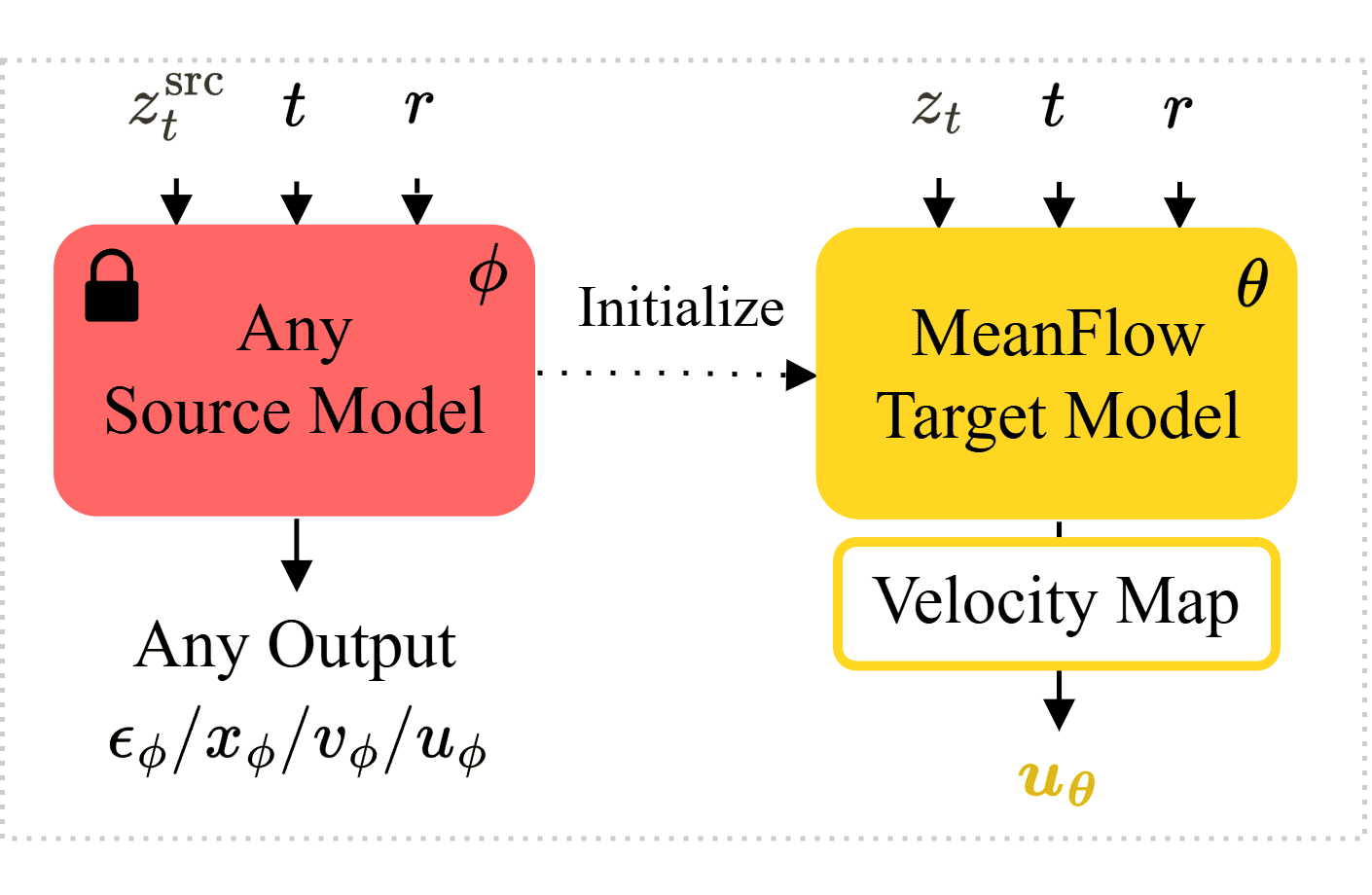}\vspace{-4pt}
    \caption{MeanFlow-Transfer (\texttt{MF-T})}
    \label{fig:a}
\end{subfigure}
\hfill
\begin{subfigure}[b]{0.64\textwidth}
    \centering
    \includegraphics[width=\linewidth]{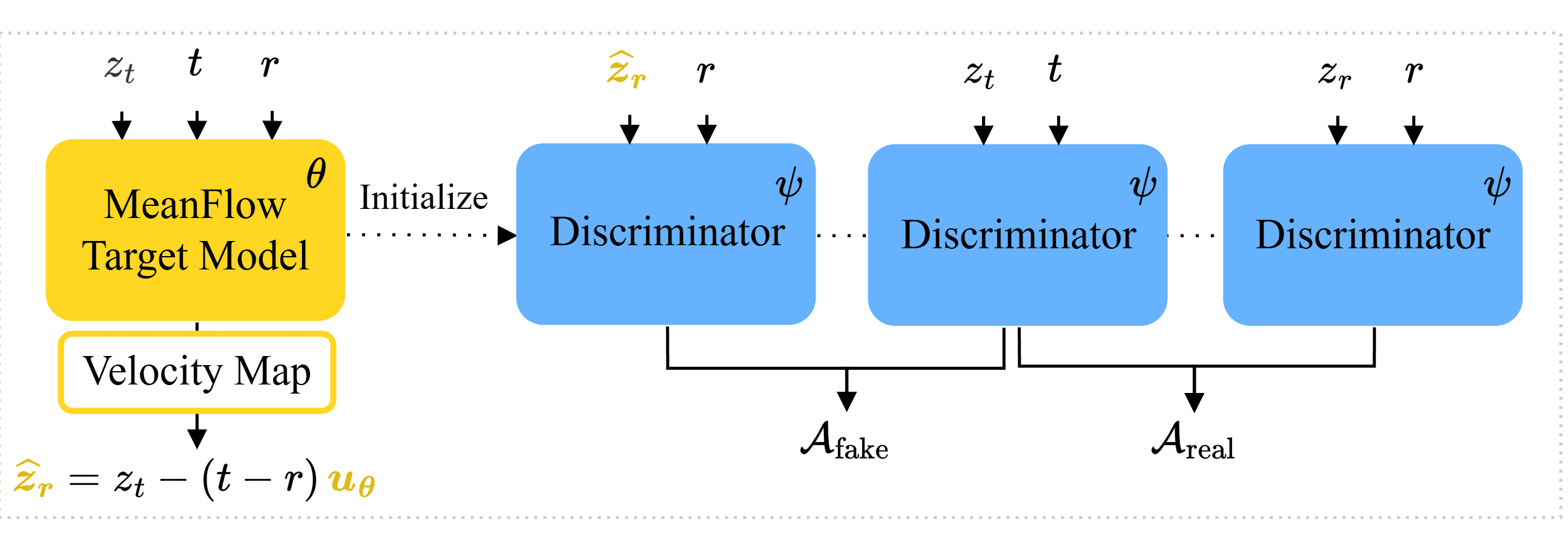}\vspace{-4pt}
    \caption{Continuous Adversarial MeanFlow (\texttt{\texttt{CAMF}})}
    \label{fig:c}
\end{subfigure}
\hfill
\caption{\textbf{Overview of our two-stage framework.}
(a) \texttt{MF-T}: a pretrained source generator with an arbitrary output parameterization is mapped to a shared velocity representation and adapted into a target-domain MeanFlow (MF) model that predicts the finite-interval average velocity \textcolor{yellow}{$\bm{u_{\theta}}$}.
(b) \texttt{\texttt{CAMF}}: the MF model transports $z_t$ to the predicted endpoint $\textcolor{yellow}{\bm{\widehat{z}_r}} = z_t-(t-r)\textcolor{yellow}{\bm{u_{\theta}}}$, and the discriminator compares the
average change of the potential $D_\psi$ along the real and model-predicted target-domain transport segments, through $\mathcal{A}_{\mathrm{real}}$
and $\mathcal{A}_{\mathrm{fake}}$.
}
\vspace{-10pt}
\label{fig:3_main_method}
\end{figure*}

\section{Related Work}

\textbf{Accelerating Diffusion and Flow Models.}
Recent work has devoted substantial effort to accelerating the sampling speed of diffusion and flow models. Distillation-based approaches compress a pretrained many-step sampler into a one-step or few-step generator while preserving sample quality~\cite{salimans2022progressive,yin2024one,sauer2024adversarial}. Consistency Models~\cite{song2023consistency} instead learn few-step generators by enforcing self-consistency along denoising trajectories, either by distillation from a pretrained model or by training from scratch. This idea has been further extended to \emph{flow maps}, which model transitions between arbitrary timestep pairs~\cite{kim2023consistency, boffi2024flow, sabour2025align}. MF is one such formulation, parameterizing the \emph{average velocity} between timestep pairs~\cite{geng2025mean}. Subsequent work improves this framework: Improved MeanFlow (iMF) reformulates the training objective for greater stability~\cite{geng2025improved}, while DMF distills flow-matching models into MF generators by conditioning late network blocks on a second timestep~\cite{lee2026decoupled}. 
These accelerated models, however, remain confined to their training domain, and their distillation procedures are typically tied to a specific architecture and prediction parameterization. Consequently, their flexibility for model-agnostic domain transfer is limited.

\noindent\textbf{Adaptation and the Acceleration Gap.}
Pretrained diffusion and flow models transfer well to new domains under limited data, motivating extensive work that controls how source priors are forgotten, retained, or re-injected during target-domain adaptation~\cite{ouyang2024transfer,hur2024expanding,zhong2024diffusion,zhong2025domain}. Early works emphasize \emph{data-efficient} transfer~\cite{zhu2025domainstudio,cao2024few,wang2024bridging,ruiz2023dreambooth}, and \emph{parameter-efficient fine-tuning} (PEFT)~\cite{moon2022fine,han2023svdiff,xie2023difffit}. However, most of these approaches still rely on multi-step sampling at inference, leaving the \emph{sampling cost} of adapted generators unaddressed. In response, recent work explores the intersection of \emph{adaptation} and \emph{acceleration}~\cite{miao2024tuning,chadebec2025flash,yin2024improved,hsiao2024plug,luo2023lcm}: Uni-DAD unifies the two in a single framework~\cite{bahram2026uni}, and DogFit uses domain-guided fine-tuning for efficient target-domain sampling~\cite{bahram2026dogfit}. Yet these methods remain confined to specific distillation pipelines through tailored guidance mechanisms or source models. None simultaneously achieves fast sampling, high target-domain fidelity, and compatibility across heterogeneous pretrained diffusion and flow generators.



\noindent\textbf{Adversarial Learning.} 
Adversarial objectives recover perceptual quality lost under aggressive few-step distillation. Adversarial Diffusion Distillation and its latent variant pair an adversarial loss with distillation from a frozen teacher~\cite{sauer2024adversarial,sauer2024fast}, while later methods drop the teacher and rely on the adversarial signal alone~\cite{lin2025diffusion,novack2025fast}. More recent work targets the flow \emph{transport process} itself: Adversarial Flow Models (AFMs)~\cite{lin2025adversarial} learn discrete few-step flow maps with an optimal-transport regularizer, and CAFMs~\cite{cafm} judge the \emph{instantaneous velocity} of a continuous-time flow via derivatives of a learned scalar potential. However, none provides a solution for finite-interval average velocity both remain confined to instantaneous velocity models.

\noindent\textbf{Prediction and loss spaces.} A recent line of work decouples a model's output parameterization from its training objective: JiT predicts the clean sample $x$ directly in pixel space while training under a velocity objective~\cite{li2025back}, and pMF extends this prediction/loss decoupling to the MF formulation~\cite{lu2026one}. Diff2Flow post-trains a diffusion model with a flow-matching objective by mapping both the output parameterization and noise schedule, showing that naive flow-matching fine-tuning underperforms without this explicit mapping~\cite{schusterbauer2025diff2flow}. We adopt its parameterization mapping but discard the schedule alignment, as under domain transfer and low NFE, we find that schedule alignment destabilizes training.

\section{Proposed Method}
\label{sec:3_proposed_method}

We address the source-to-target adaptation of pretrained generative models under heterogeneous source parameterizations. A pretrained \emph{source
 generator} with parameters $\phi$ may be an $x$-, $\epsilon$-, $v$-, or $u$-predictor, producing samples $x^{\mathrm{src}}$ from a source
 distribution $p_{\mathrm{src}}$. Given limited target-domain data from $p_{\mathrm{trg}}$, our goal is to obtain a few-step \emph{target generator} $u_\theta$ with parameters $\theta$
 that is both adapted to $p_{\mathrm{trg}}$ and fast. Our framework reaches this goal in two stages: MeanFlow-Transfer (\texttt{MF-T}) maps any of the four source parameterizations into a shared
 velocity space and adapts the source into a few-step MF generator on the target domain, and Continuous Adversarial MeanFlow (\texttt{CAMF}) refines that
 generator with a learned adversarial criterion (See Fig.~\ref{fig:3_main_method}). In what follows, we give background in Sec.~\ref{sec:3_1_background}, then present \texttt{CAMF} in
 Sec.~\ref{sec:3_CAMF}, and finally describe \texttt{MF-T} in Sec.~\ref{sec:3_universal_mapping}.

\subsection{Background}
\label{sec:3_1_background}

\paragraph{Flow Matching (FM).}
FM~\cite{lipman2022flow,liu2022flow,albergo2022building} learns a time-dependent velocity field transporting a Gaussian prior
$\epsilon\sim\mathcal{N}(0,I)$ to data $x\sim p_{\mathrm{data}}$. Let $z_t$ be an intermediate state on a probability path between $x$ and
$\epsilon$, with instantaneous velocity $v_t=\tfrac{dz_t}{dt}$. FM targets the marginal velocity $v(z,t)
    =
    \mathbb{E}_{p_t(v_t|z_t)}
    \left[
       v_t
    \right]
$,
which is not directly accessible. For the linear path $z_t=(1-t)x+t\epsilon$, the conditional velocity is $v^*(x,\epsilon,t)=\epsilon-x$, and a
network $v_\theta$ is trained by regression:
\begin{equation}\label{eq:FM_regression}
\mathcal{L}_{\mathrm{FM}}=\mathbb{E}_{t,x,\epsilon}\big[|v_\theta(z_t,t)-v^*(x,\epsilon,t)|^2\big],
\end{equation}
whose minimizer recovers the marginal field $v(z,t)$~\cite{lipman2022flow}. The formulation applies unchanged to both conditional ($c$) and unconditional ($\varnothing$) generation. We omit the $t$ and $c$ and write $v(z_t)$ when clear.



\paragraph{MeanFlow (MF).}
While FM learns the instantaneous velocity $v(z_t)$, MF~\cite{geng2025mean} models the average velocity over an interval $[r,t]$:
\begin{equation}
u(z_t,r,t)\triangleq\frac{1}{t-r}\int_r^t v(z_\tau)d\tau.
\end{equation}

We write $u(z_t)$ when $(r,t)$ is clear from context. This integral is intractable to supervise directly, so MF instead uses the \emph{MF
identity}~\cite{geng2025mean}, which links the instantaneous and average velocities:
\begin{equation}
u(z_t)=v(z_t)-(t-r)\frac{d}{dt}u(z_t).
\end{equation}

The total derivative is a Jacobian--vector product\footnote{For a function $g$ at primal $p$ along tangent $\tau$, $\texttt{JVP}_{p}(g;\tau)$ denotes
the directional derivative.} (\texttt{JVP}) where $\tfrac{d}{dt}u(z_t)=\texttt{JVP}_{z_t,r,t}(u_\theta;v,0,1)=\partial_z uv+\partial_t
u$.
This lets the regression objective be rewritten for a $u$-predictor. iMF~\cite{geng2025improved} defines the \emph{compound velocity}:
\begin{equation}
V_{\theta}(z_t)\triangleq u_{\theta}(z_t)+(t-r)\texttt{sg}\big(\texttt{JVP}_{z_t,r,t}(u_\theta;v_\theta,0,1)\big),
\end{equation}
where $\texttt{sg}$ is stop-gradient and the \texttt{JVP} uses the model's predicted instantaneous velocity $v_\theta$. To allow classifier-free guidance (CFG)~\cite{ho2021classifierfree} at sampling time, iMF regresses a \emph{guided target velocity} $v_{\mathrm{g}}^*:
    =
    v^*
    +
    w
    \bigl(
        v_{\theta}(z_t , c) - v_{\theta}(z_t , \varnothing)
    \bigr)$
where $w$ is the guidance scale~\cite{tang2025diffusion}.
The model is then trained with an FM-like regression objective:
\begin{equation}
    \mathcal{L}_{\mathrm{iMF}}
    =
    \mathbb{E}_{t,r,x,\epsilon,w}
    \left[
        \left\|
            V_{\theta}(z_t,w) - ~\texttt{sg}(v^{*}_g)
        \right\|^2
    \right],
\end{equation}
where we omit the $w$ dependence when clear. After training, iMF supports few-step sampling by evaluating $u_{\theta}$ over a coarse sequence of time intervals. Specifically, one-step sampling is achieved via $x=\epsilon-u_\theta(\epsilon,0,1)$.

\paragraph{Continuous Adversarial Flow Models (CAFMs).}
\label{sec:3_2_CAFM}
Adversarial Flow Models (AFMs)~\cite{lin2025adversarial} replace FM's regression target with a learned adversarial criterion. Their generator is a
flow map $G_\theta(z_s,s,t)$ that transports a state $z_s$ to a lower-time state $z_t$, and a discriminator $D_\psi(\cdot,t)$ scores the generated
\emph{endpoint state} $G_\theta(z_s,s,t)$ against the real state $z_t$. Because this raw-endpoint objective becomes ill-conditioned as the interval
$s-t$ shrinks, AFMs rely on gradient penalties, discriminator augmentation, and resets for a stable training.

CAFMs~\cite{cafm} instead score a \emph{difference} of a learned potential, and pass to continuous time. They introduce a scalar potential
$D_\psi(z_t,t):\mathbb{R}^n\times[0,1]\to\mathbb{R}$ and read a velocity through its directional derivative along the flow, where
$\tfrac{d}{dt}D_\psi(z_t,t)=\texttt{JVP}_{z_t,t}(D_\psi;v,1)=\partial_z D_\psi v+\partial_t D_\psi$.
Contrasting the target velocity $v^*$ against the model velocity $v_\theta$ with a least-squares game
$f_{\mathrm{ls}}(a,b)=(a-1)^2+(b+1)^2$, the generator and discriminator are trained on reversed pairs:
\begin{align}
\mathcal{L}_{\mathrm{CAFM}}^{G}&=\mathbb{E}\big[f_{\mathrm{ls}}\big(\texttt{JVP}_{z_t,t}(D_\psi;v_\theta,1),\,\texttt{JVP}_{z_t,t}(D_\psi;v^{*},1)\big)
  \big], \nonumber\\
\mathcal{L}_{\mathrm{CAFM}}^{D}&=\mathbb{E}\big[f_{\mathrm{ls}}\big(\texttt{JVP}_{z_t,t}(D_\psi;v^{*},1),\,\texttt{JVP}_{z_t,t}(D_\psi;v_\theta,1)\big)
  \big] \nonumber\\
&\quad+\lambda_{\mathrm{cp}}\,\mathbb{E}\big[D_\psi(z_t,t)^2\big].
\end{align}

The last term is a \emph{centering penalty} (CP): as the criterion depends only on differences and derivatives of $D_\psi$, its absolute value can drift, and penalizing $\mathbb{E}[D_\psi^2]$ fixes this gauge. Crucially, $\mathrm{JVP}_{z_t,t}(D_\psi; v, 1)$ scores an \emph{instantaneous} tangent--a velocity an MF generator never executes, as each of its steps is a finite jump of size $t-r$. This motivates the MF-compatible formulation introduced next.

\subsection{Continuous Adversarial MeanFlow Model}
\label{sec:3_CAMF}
Few-step MF training uses a regression objective that aligns average velocities only locally and does not explicitly enforce distribution-level realism of the transport--a limitation most acute under scarce target data. We therefore introduce an adversarial post-training stage that replaces this pointwise criterion with a learned one, initialized from a target-domain MF checkpoint. Following CAFM, we use a scalar discriminator potential $D_\psi$. Given a target-domain sample $x\sim p_\mathrm{trg}$, noise $\epsilon\sim\mathcal{N}(0,I)$, and two times $0\leq r<t\leq1$, the states on the target interpolation path are $z_t=(1-t)x+t\epsilon$ and $z_r=(1-r)x+r\epsilon$. The MF generator predicts an
  average velocity over $[r,t]$ and induces the \emph{lower-time model endpoint}:
  \begin{equation}
    \widehat{z}_r = z_t-(t-r)\,u_\theta(z_t,w).
    \label{eq:MF-T_predicted_endpoint}
    \end{equation}

The adversarial criterion contrasts these two lower-time endpoints, driving the model endpoints toward the real target-domain endpoints. We define a \emph{finite-interval discriminator score}:
    \begin{equation}
    \mathcal{A}_\psi(z_t,z_r;t,r)=\frac{D_\psi(z_t,t)-D_\psi(z_r,r)}{t-r}.
    \label{eq:finite_interval_discriminator}
    \end{equation}
The scores of the real and model-predicted endpoints are contrasted with a least-squares game $f_{\mathrm{ls}}$~\cite{cafm}: 
\begin{equation}
    \mathcal{L}_{\texttt{CAMF}}^{G}
    =
    \mathbb{E}_{x,\epsilon,t,r}
    \Big[
    f_{\mathrm{ls}}\!\big(
      \mathcal{A}_\psi(z_t,\widehat{z}_r;t,r),\;
      \mathcal{A}_\psi(z_t,z_r;t,r)
      \big)
    \Big],
    \label{eq:cafm_gen}
\end{equation}
where only the fake score depends on the generator while the real score is generator-independent. During the discriminator update, $\widehat{z}_r$ is detached from the generator's computation graph, and the discriminator optimizes the reversed pair:

\vspace{-7pt}

\begin{align}  \mathcal{L}_{\texttt{CAMF}}^{D}
  &=
      \mathbb{E}_{x,\epsilon,t,r}
      \Big[
      f_{\mathrm{ls}}\!\big(
        \mathcal{A}_\psi(z_t,z_r;t,r),\;
        \mathcal{A}_\psi(z_t,\widehat{z}_r;t,r)
        \big)
      \Big] \nonumber \\
      +  &\lambda_{\mathrm{cp}} 
  \mathbb{E}_{x,\epsilon,r,t}\Big[
  D_\psi(z_t,t)^2
  +
 D_\psi(z_r,r)^2
  +
  D_\psi(\widehat{z}_r,r)^2
  \Big].
\label{eq:MF-T_complete_discriminator}
\end{align}
where we extend the CAFM's centering penalty to regularize all absolute potential values. Unlike AFM's raw endpoint logit, $\mathcal{A}_\psi$ scores an increment of the potential measured from the common value $D_\psi(z_t,t)$ normalized by $t-r$; this normalization grants shift-invariance and reduction to CAFM as $t\to r$ (Prop.~\ref{prop:camf_consistency}). The centering penalty keeps the score well-conditioned in face of amplified fluctuations caused by the $1/(t-r)$ factor of $D_\psi$ (Tab.~\ref{tab:cp-ablation}). Finally, we set $\lambda_{\mathrm{ot}}=0$ as even in presence of distribution shift, starting from a MF generator that already realizes a low-cost transport renders a transport-norm regularizer unnecessary and detrimental in performance~\cite{lin2025adversarial,cafm} (Tab.~\ref{tab:ot-ablation}).

\begin{proposition}[Consistency with CAFM]
\label{prop:camf_consistency}
Fix $(z_t,t)$ and let $\Delta=t-r$. Let $D_\psi$ be differentiable, let the
real endpoint $z_r$ satisfy $z_t-z_r=\Delta\,v^{*}$, and let the predicted
endpoint be $\widehat z_r=z_t-\Delta\,u_\theta(z_t,r,t,c,w)$. Then:
\begin{equation}
\begin{aligned}
\mathcal{A}_\psi(z_t,z_r;t,r)
&\;\xrightarrow[\;r\to t\;]{}\;
\texttt{JVP}_{z_t,t}\!\big(D_\psi;\,v^{*},\,1\big),\\[2pt]
\mathcal{A}_\psi(z_t,\widehat z_r;t,r)
&\;\xrightarrow[\;r\to t\;]{}\;
\texttt{JVP}_{z_t,t}\!\big(D_\psi;\,v_\theta,\,1\big),
\\ v_\theta &= u_\theta(z_t,t,t,c,w).
\end{aligned}
\end{equation}
Thus, as the interval vanishes, the \texttt{CAMF} game
$(\mathcal{A}_{\mathrm{real}},\mathcal{A}_{\mathrm{fake}})$ converges to the CAFM
game, with the model's (guided) instantaneous velocity $v_\theta$ in place of
CAFM's instantaneous generator velocity (proof in
Appx.~\ref{sec:x_proof_CAMF}).

\end{proposition}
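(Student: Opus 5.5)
The plan is a first-order Taylor expansion of $D_\psi$ around $(z_t,t)$, applied once to the real endpoint and once to the predicted endpoint. I treat $z_t$ and $t$ as fixed, let $\Delta=t-r\downarrow 0$, and take $v^*$ as fixed. For the linear path, $z_t-z_r=\Delta(\epsilon-x)$ exactly, so $v^*=\epsilon-x$ does not depend on $r$.

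\emph{Reduction to a directional derivative.} Both scores have the form
\[
\mathcal{A}_\psi(z_t,z_t-\Delta w_\Delta;t,t-\Delta)=\frac{D_\psi(z_t,t)-D_\psi(z_t-\Delta w_\Delta,\,t-\Delta)}{\Delta}.
\]
For the real score $w_\Delta=v^*$, and for the fake score $w_\Delta=u_\theta(z_t,t-\Delta,t,c,w)$. Define the path $\gamma(\Delta)=(z_t-\Delta w_\Delta,\,t-\Delta)$ in $\mathbb{R}^n\times[0,1]$. Then $\gamma(\Delta)\to(z_t,t)$, and the score equals $-\bigl(D_\psi(\gamma(\Delta))-D_\psi(\gamma(0))\bigr)/\Delta$. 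Differentiability of $D_\psi$ at $(z_t,t)$ gives
\[
D_\psi(\gamma(\Delta))=D_\psi(z_t,t)+\partial_zD_\psi\cdot(-\Delta w_\Delta)+\partial_tD_\psi\cdot(-\Delta)+o\bigl(\|(\Delta w_\Delta,\Delta)\|\bigr).
\]
Dividing by $-\Delta$ yields
\[
\mathcal{A}_\psi=\partial_zD_\psi\,w_\Delta+\partial_tD_\psi+o(1)\cdot(1+\|w_\Delta\|).
\]
The remainder vanishes whenever $w_\Delta$ stays bounded.

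\emph{Real branch.} Here $w_\Delta=v^*$ is constant, so the score converges directly to $\partial_zD_\psi v^*+\partial_tD_\psi=\texttt{JVP}_{z_t,t}(D_\psi;v^*,1)$.

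\emph{Fake branch.} This is the only real obstacle: $w_\Delta$ now depends on $r$. I would add the mild assumption, implicit in the statement, that $u_\theta$ is continuous in $r$ at $r=t$. This holds for any smooth network, and MF defines $u(z_t,t,t)=v(z_t)$ through the boundary condition of the MF identity. Under this assumption $w_\Delta\to u_\theta(z_t,t,t,c,w)=v_\theta$, so $w_\Delta$ is bounded near $0$.

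\emph{Conclusion of the fake branch.} The little-$o$ term then vanishes, and $\partial_zD_\psi w_\Delta\to\partial_zD_\psi v_\theta$ by linearity. The fake score therefore converges to $\texttt{JVP}_{z_t,t}(D_\psi;v_\theta,1)$. If $D_\psi$ is $C^1$, I would give a cleaner version of the same step: apply the mean value theorem along the segment from $(z_t,t)$ to $\gamma(\Delta)$, and use continuity of $\nabla D_\psi$.

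\emph{Convergence of the game.} Finally, $f_{\mathrm{ls}}$ is continuous. Hence the integrands of $\mathcal{L}^G_{\texttt{CAMF}}$ and $\mathcal{L}^D_{\texttt{CAMF}}$ converge pointwise to those of $\mathcal{L}^G_{\mathrm{CAFM}}$ and $\mathcal{L}^D_{\mathrm{CAFM}}$, with $v_\theta$ in place of the generator velocity. The centering terms also converge, since $\widehat z_r,z_r\to z_t$ and $D_\psi$ is continuous, so $D_\psi(z_r,r)^2$ and $D_\psi(\widehat z_r,r)^2$ each tend to $D_\psi(z_t,t)^2$. The penalty therefore becomes $3\lambda_{\mathrm{cp}}\mathbb{E}[D_\psi(z_t,t)^2]$, which is CAFM's penalty up to a rescaled constant.

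The statement is pointwise in $(z_t,t)$, so the pointwise limit is all that the proposition requires. Passing the limit under the expectation would additionally need dominated convergence, for example via local Lipschitz bounds on $D_\psi$ and boundedness of $u_\theta$; I would state this only as a remark.
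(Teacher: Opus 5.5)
Your proof is correct and follows the same route as the paper's. Both arguments apply a first-order Taylor expansion of $D_\psi$ about $(z_t,t)$ separately to the real and predicted endpoints, and then use $u_\theta(z_t,r,t)\to u_\theta(z_t,t,t)=v_\theta$ to finish the fake branch. Your version is somewhat more careful than the paper's in three ways:
\begin{itemize}
\item Your $o(\cdot)$ remainder matches the stated hypothesis of mere differentiability, whereas the paper's $\mathcal{O}(\Delta^2)$ tacitly assumes more regularity.
\item You state explicitly the continuity of $u_\theta$ in $r$ that the limit needs.
\item You carry the limit through the losses and the centering penalty, which the paper only asserts informally.
\end{itemize}
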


Proposition~\ref{prop:camf_consistency} shows our criterion strictly generalizes CAFM: it recovers the CAFM game as $r\to t$, while for $r<t$ it scores the finite-interval average velocity an MF generator actually executes. So \texttt{CAMF} inherits CAFM's potential-based formulation and trains without the gradient penalties, discriminator augmentation, or resets that raw-endpoint adversarial flow models require~\cite{lin2025adversarial}. CAMF is guidance-aware: the generator and discriminator share a sampled CFG scale $w$, so both the regression target and the generated endpoints are taken at the same guided operating point. Sampling is unchanged from MF, as the discriminator is used only during training. The complete \texttt{CAMF} post-training procedure is given in Appx.~\ref{app:training_algorithms} Alg.~\ref{alg:caimf-training}.

\subsection{MeanFlow-Transfer}
\label{sec:3_universal_mapping}

The adversarial stage of Sec.~\ref{sec:3_CAMF} operates on a MF generator. We now describe how \texttt{MF-T} obtains that generator under limited target data in a single stage. Pretrained sources predict in different spaces--$x$,$\epsilon$,$v$, or $u$. Our starting point is that, on a fixed interpolation path, any such prediction inverts to the same underlying velocity: on the linear
path $z_t=(1-t)x+t\epsilon$, we recover $v$ from each parameterization consistently (Tab.~\ref{tab:3_unified_velocity_recovery}). This recovered
velocity initializes the target MF generator $u_\theta$ that is fine-tuned on target data with the iMF objective (Sec.~\ref{sec:3_1_background}).
For single-timestep sources (non-flow-map $x$-, $\epsilon$-, or $v$-predictors), we follow DMF~\cite{lee2026decoupled} and add a second timestep input to the same architecture, so that the source model can be transformed into a MF model. The initialization and complete \texttt{MF-T} training procedure are provided in Alg.~\ref{alg:mfa_init} and~\ref{alg:mfa_train} in Appx.~\ref{app:training_algorithms}.

  \begin{table}[H]
  \centering
  \small
  \setlength{\tabcolsep}{10pt}
  \renewcommand{\arraystretch}{1.2}
  \caption{\texttt{MF-T} initialization from any source parameterization. Under the linear interpolation path, \texttt{MF-T} recovers a consistent instantaneous velocity from each prediction space and uses it to initialize the target MeanFlow model $u_\theta$.}
  \label{tab:3_unified_velocity_recovery}
  \resizebox{\columnwidth}{!}{%
  \begin{tabular}{c c c c}
  \toprule
  \multicolumn{4}{c}{\textbf{Source prediction space}} \\[-1.5pt]
  \cmidrule(lr){1-4} \noalign{\vskip-2pt}
  \cellcolor{black!5}$\bm{x}$ & \cellcolor{black!5}$\bm{\epsilon}$ & \cellcolor{black!5}$\bm{v}$ & \cellcolor{black!5}$\bm{u}$
  \\[-2pt]
  \midrule
  \multicolumn{4}{c}{\emph{Source prediction}} \\[-2pt]
  \arrayrulecolor{black!10}\cmidrule(lr){1-4}\arrayrulecolor{black}
  $x_\phi(z_t)$
  & $\epsilon_\phi(z_t)$
  & $v_\phi(z_t)$
  & $u_{\phi}(z_t)$ \\
  \midrule
  \multicolumn{4}{c}{\emph{Recovered instantaneous velocity} $v_{\psi}(z_t)$}\\[-1.5pt]
  \arrayrulecolor{black!10}\cmidrule(lr){1-4}\arrayrulecolor{black}
  $\dfrac{z_t - x_{\phi}(z_t)}{t}$
  & $\dfrac{\epsilon_{\phi}(z_t) - z_t}{1-t}$
  & $v_{\phi}(z_t)$
  & $u_{\phi}(z_t,t,t)^\dagger$ \\
  \midrule
  \multicolumn{4}{c}{\emph{MeanFlow-Transfer initialization}}\\[-1.5pt]
  \arrayrulecolor{black!10}\cmidrule(lr){1-4}\arrayrulecolor{black}
  $u_{\theta}\!\leftarrow\! v_\phi$
  & $u_{\theta}\!\leftarrow\! v_\phi$
  & $u_{\theta}\!\leftarrow\! v_\phi$
  & $u_{\theta}\!\leftarrow\! u_\phi$ \\
  \bottomrule
  \end{tabular}%
  }
  \begin{flushleft}
  \scriptsize
  We use the $v$ head when available (e.g., iMF~\cite{geng2025improved}),
  or recover it via $u$.
  \end{flushleft}
\vspace{-14pt}
  \end{table}

\paragraph{Velocity mapping over schedule alignment.}
A natural alternative reconciles the source and target \emph{schedules}, as in Diff2Flow~\cite{schusterbauer2025diff2flow}. We compare both on a
frozen ImageNet DiT~\cite{peebles2023scalable} trained with a DDPM schedule~\cite{ho2020denoising} to predict $\epsilon$ (\textbf{\textcolor{lightgray}{DDPM-$\bm{\epsilon}$}}). Keeping the DDPM trajectory and only converting the predicted noise to its induced velocity (\textcolor{mustard}{DDPM-$\bm{v}$}) changes the parameterization alone, yet
Fig.~\ref{fig:3_analysis_1} shows it already inherits few-step efficiency with no fine-tuning. Additionally mapping the schedule onto a linear path following Diff2Flow
(\textbf{\textcolor{clementine}{Transport-$\bm{v}$}}) still preserves these gains. Under domain transfer, however, the two behave oppositely: Fig.~\ref{tab:objective_nfe_ablation} shows that the schedule alignment of \textbf{\textcolor{clementine}{Transport-$\bm{v}$}} destabilizes training under domain shift and low NFE, while the mere velocity re-parameterization of \textcolor{mustard}{DDPM-$\bm{v}$} improves generalization. The schedule alignment is even more catastrophic under the $\textbf{DiT}\rightarrow\texttt{MF-T}$ objective. We hypothesize this to stem from the JVP's brittleness in the MF identity. We therefore only adopt velocity mapping, avoiding schedule alignment (more details in Appx.~\ref{subsubsec:a_diff2flow}).

  \begin{table*}[t!]
    \centering
    \scriptsize
    \setlength{\tabcolsep}{2.6pt}
    \renewcommand{\arraystretch}{1.1}
    \caption{Quantitative comparison with baselines across four ImageNet-initialized models and five target datasets. We report FID and FDD across different NFEs after adaptation to target-domain. \textcolor{afmblue}{AFM} and \textcolor{afmblue}{\texttt{CAMF}} as stand-alone methods are only tested on iMF/XL-2 where the initial model is MF-based (with \texttt{MF-T}~$\approx$~FT on iMF). \textbf{Bold}: best average result under each model initialization and NFE. \underline{Underlined}: best average result under each model initialization across all NFEs. 
    }
  \begin{tabular}{
  >{\scriptsize}c
  >{\scriptsize}l
  >{\scriptsize}c
  *{5}{>{\scriptsize}c}
  >{\scriptsize}c
  !{\color{black!20}\vrule}
  *{5}{>{\scriptsize}c}
  >{\scriptsize}c}
  \hline
  & &
  & \multicolumn{6}{c}{\textbf{FID} $\downarrow$}
  & \multicolumn{6}{c}{\textbf{FDD} $\downarrow$} \\
  \cmidrule(lr){4-9}\cmidrule(lr){10-15}
  \textbf{ImageNet Init.} & \textbf{Method} & \textbf{NFE}
  & ArtB. & Calt. & CUB & Food & Cars & \textbf{Avg.}
  & ArtB. & Calt. & CUB & Food & Cars & \textbf{Avg.} \\
  \midrule

  & \textcolor{afmblue}{AFM} & $4$
  & ${\color{gray!90}15.84}$ & ${\color{gray!90}55.97}$ &${\color{gray!90}9.86}$ & ${\color{gray!90}11.86}$ &
  ${\color{gray!90}12.39}$ & $21.18$
  & ${\color{gray!90}290.57}$ & ${\color{gray!90}1007.22}$ & ${\color{gray!90}570.89}$ & ${\color{gray!90}476.94}$ &
  ${\color{gray!90}440.94}$ & $557.31$ \\

  & \textcolor{afmblue}{\texttt{CAMF}} & $4$
  & ${\color{gray!90}10.80}$ & ${\color{gray!90}44.42}$ & ${\color{gray!90}9.34}$ & ${\color{gray!90}10.60}$ &
  ${\color{gray!90}11.19}$ & $17.27$
  & ${\color{gray!90}221.40}$ & ${\color{gray!90}812.31}$ & ${\color{gray!90}367.57}$ & ${\color{gray!90}409.74}$ &
  ${\color{gray!90}293.50}$ & $420.90$ \\[-1.5pt]
  \arrayrulecolor{black!10}\cmidrule(lr){2-15}\arrayrulecolor{black}
    & \texttt{MF-T} & $4$
  & ${\color{gray!90}10.60}$ & ${\color{gray!90}27.54}$ & ${\color{gray!90}7.26}$ & ${\color{gray!90}10.58}$ & ${\color{gray!90}9.69}$ & $13.13$
  & ${\color{gray!90}273.28}$ & ${\color{gray!90}471.09}$ & ${\color{gray!90}310.62}$ & ${\color{gray!90}465.38}$ & ${\color{gray!90}276.53}$ & $359.38$\\  
  & \texttt{MF-T} + AFM & $4$
  & ${\color{gray!90}8.67}$ & ${\color{gray!90}22.19}$ & ${\color{gray!90}3.01}$ & ${\color{gray!90}7.03}$ &
  ${\color{gray!90}4.13}$ & $9.01$
  & ${\color{gray!90}180.29}$ & ${\color{gray!90}371.61}$ & ${\color{gray!90}75.72}$& ${\color{gray!90}298.54}$ &
  ${\color{gray!90}141.25}$ & $213.48$ \\
  \rowcolor{gray!7}\cellcolor{white}{}& \texttt{MF-T + CAMF} & $4$
  & ${\color{gray!90}6.71}$ & ${\color{gray!90}22.10}$ & ${\color{gray!90}2.88}$ & ${\color{gray!90}4.65}$ &
  ${\color{gray!90}3.07}$ & $\underline{\mathbf{7.88}}$
  & ${\color{gray!90}146.40}$ & ${\color{gray!90}378.09}$ & ${\color{gray!90}84.24}$ & ${\color{gray!90}239.28}$ &
  ${\color{gray!90}102.37}$ & $\underline{\mathbf{190.08}}$ \\[-1.5pt]
\arrayrulecolor{black!10}\cmidrule(lr){2-15}\arrayrulecolor{black}

  & \textcolor{afmblue}{AFM} & $1$
  & ${\color{gray!90}82.14}$ & ${\color{gray!90}71.62}$ & ${\color{gray!90}23.76}$ & ${\color{gray!90}30.06}$ &
  ${\color{gray!90}25.27}$ & $46.57$
  & ${\color{gray!90}814.40}$ & ${\color{gray!90}1133.85}$ &${\color{gray!90}773.17}$ & ${\color{gray!90}817.06}$ &
  ${\color{gray!90}648.57}$ & $837.41$ \\
  & \textcolor{afmblue}{\texttt{CAMF}} & $1$
  & ${\color{gray!90}31.75}$ & ${\color{gray!90}91.44}$ & ${\color{gray!90}32.00}$ & ${\color{gray!90}33.20}$ &
  ${\color{gray!90}59.83}$ & $49.64$
  & ${\color{gray!90}359.07}$ & ${\color{gray!90}1230.20}$ & ${\color{gray!90}775.99}$ & ${\color{gray!90}706.52}$ &
  ${\color{gray!90}826.35}$ & $779.63$ \\[-1.5pt]
  \arrayrulecolor{black!10}\cmidrule(lr){2-15}\arrayrulecolor{black}
  & \texttt{MF-T} & $1$
  & ${\color{gray!90}11.21}$ & ${\color{gray!90}34.92}$ & ${\color{gray!90}11.74}$ & ${\color{gray!90}12.20}$ & ${\color{gray!90}15.41}$ & $17.10$
  & ${\color{gray!90}268.85}$ & ${\color{gray!90}611.91}$ & ${\color{gray!90}423.50}$ & ${\color{gray!90}521.62}$ & ${\color{gray!90}403.39}$ & $445.85$\\

\multirow{-9}{*}{
  \begin{tabular}{@{}c@{}}
  \textbf{iMF/XL-2}\\[-1pt]
  \cite{geng2025improved}\\[-1pt]
  \end{tabular}}  & \texttt{MF-T} + AFM & $1$
  & ${\color{gray!90}26.89}$ & ${\color{gray!90}29.08}$ & ${\color{gray!90}6.03}$ & ${\color{gray!90}16.61}$ &
  ${\color{gray!90}8.51}$ & $17.42$
  & ${\color{gray!90}361.46}$ & ${\color{gray!90}513.52}$ & ${\color{gray!90}192.51}$ & ${\color{gray!90}452.02}$ &
  ${\color{gray!90}267.58}$ & $357.42$ \\
  \rowcolor{gray!7}\cellcolor{white}{}& \texttt{MF-T + CAMF} & $1$
  & ${\color{gray!90}10.12}$ & ${\color{gray!90}29.07}$ & ${\color{gray!90}5.40}$ & ${\color{gray!90}8.87}$ &
  ${\color{gray!90}15.51}$ & $\mathbf{13.79}$
  & ${\color{gray!90}190.04}$ & ${\color{gray!90}502.99}$ & ${\color{gray!90}179.49}$ & ${\color{gray!90}333.72}$ &
  ${\color{gray!90}352.35}$ & $\textbf{311.72}$ \\

  \hline

  & FT & $250{\times}2$
  & ${\color{gray!90}10.21}$ & ${\color{gray!90}24.20}$ & ${\color{gray!90}4.19}$ & ${\color{gray!90}6.81}$ & ${\color{gray!90}6.93}$ & $10.47$
  & ${\color{gray!90}205.05}$ & ${\color{gray!90}399.26}$ & ${\color{gray!90}111.66}$ & ${\color{gray!90}343.67}$ & ${\color{gray!90}185.82}$ & $249.09$
  \\[-1.5pt]
  \arrayrulecolor{black!10}\cmidrule(lr){2-15}\arrayrulecolor{black}
  & FT & $4{\times}2$
  & ${\color{gray!90}36.11}$ & ${\color{gray!90}33.26}$ & ${\color{gray!90}14.33}$ & ${\color{gray!90}22.22}$ & ${\color{gray!90}34.83}$ & $28.15$
  & ${\color{gray!90}457.69}$ & ${\color{gray!90}543.04}$ & ${\color{gray!90}310.21}$ & ${\color{gray!90}620.82}$ & ${\color{gray!90}624.10}$ & $511.17$
  \\
  & \texttt{MF-T} & $4$
  & ${\color{gray!90}14.56}$ & ${\color{gray!90}25.80}$ & ${\color{gray!90}5.25}$ &${\color{gray!90}9.44}$ & ${\color{gray!90}4.77}$ & $11.96$
  & ${\color{gray!90}274.67}$ & ${\color{gray!90}474.45}$ & ${\color{gray!90}205.34}$ & ${\color{gray!90}431.78}$ & ${\color{gray!90}178.80}$ & $313.01$ \\
  \rowcolor{gray!7}\cellcolor{white}{} &  \texttt{MF-T + CAMF} & $4$
  & ${\color{gray!90}12.74}$ & ${\color{gray!90}24.02}$ & ${\color{gray!90}3.91}$ &${\color{gray!90}7.48}$ & ${\color{gray!90}3.33}$ & $\underline{\mathbf{10.30}}$
  & ${\color{gray!90}238.46}$ & ${\color{gray!90}413.38}$ & ${\color{gray!90}133.85}$ & ${\color{gray!90}311.51}$ & ${\color{gray!90}108.73}$ &
  $\underline{\mathbf{241.19}}$ \\[-1.5pt]
\arrayrulecolor{black!10}\cmidrule(lr){2-15}\arrayrulecolor{black}
\multirow{-4}{*}{
  \begin{tabular}{@{}c@{}}
  \textbf{SiT/XL-2}\\[-1pt]
  {\scriptsize \cite{ma2024sit}}
  \end{tabular}}  
  & FT$^\text{†}$ & $1{\times}2$
  & ${\color{gray!90}459.70}$ & ${\color{gray!90}279.79}$ & ${\color{gray!90}301.13}$ & ${\color{gray!90}275.52}$ & ${\color{gray!90}327.76}$ & $328.78$
  & ${\color{gray!90}3263.97}$ & ${\color{gray!90}2918.49}$ & ${\color{gray!90}3468.16}$ & ${\color{gray!90}2900.06}$ & ${\color{gray!90}3589.74}$ & $3228.04$ \\
  & \texttt{MF-T} & $1$
  & ${\color{gray!90}21.79}$ & ${\color{gray!90}55.21}$ & ${\color{gray!90}13.72}$ &${\color{gray!90}19.93}$ & ${\color{gray!90}14.47}$ & $25.02$
  & ${\color{gray!90}424.89}$ & ${\color{gray!90}878.25}$ & ${\color{gray!90}503.24}$ & ${\color{gray!90}868.71}$ & ${\color{gray!90}476.18}$ & $630.25$\\
  \rowcolor{gray!7}\cellcolor{white}{}&  \texttt{MF-T + CAMF} & $1$
  & ${\color{gray!90}18.35}$ & ${\color{gray!90}46.53}$ & ${\color{gray!90}8.68}$ &${\color{gray!90}15.10}$ & ${\color{gray!90}9.60}$ & $\textbf{19.65}$
  & ${\color{gray!90}375.82}$ & ${\color{gray!90}738.91}$ & ${\color{gray!90}361.83}$ & ${\color{gray!90}665.75}$ & ${\color{gray!90}343.27}$ & $\textbf{497.12}$
  \\\hline

    & FT & $250{\times}2$
  & ${\color{gray!90}11.90}$ & ${\color{gray!90}24.47}$ & ${\color{gray!90}7.75}$ &${\color{gray!90}9.26}$ & ${\color{gray!90}5.95}$ & $11.87$
  & ${\color{gray!90}211.02}$ & ${\color{gray!90}394.66}$ & ${\color{gray!90}167.75}$ & ${\color{gray!90}277.95}$ & ${\color{gray!90}164.10}$ & $243.10$
  \\[-1.5pt]
  \arrayrulecolor{black!10}\cmidrule(lr){2-15}\arrayrulecolor{black}
    & FT  & $8{\times}2$
    & ${\color{gray!90}142.86}$ & ${\color{gray!90}67.94}$ & ${\color{gray!90}61.16}$ & ${\color{gray!90}89.77}$ & ${\color{gray!90}99.02}$ & $92.15$
    & ${\color{gray!90}991.02}$ & ${\color{gray!90}812.76}$ & ${\color{gray!90}738.70}$ & ${\color{gray!90}1284.66}$ & ${\color{gray!90}1141.83}$ & $993.79$ \\
  & \texttt{MF-T}  & $8$
    & ${\color{gray!90}33.48}$ & ${\color{gray!90}32.17}$ & ${\color{gray!90}8.27}$ & ${\color{gray!90}33.44}$ & ${\color{gray!90}9.83}$ & $23.44$
    & ${\color{gray!90}325.72}$ & ${\color{gray!90}475.90}$ & ${\color{gray!90}222.81}$ & ${\color{gray!90}614.30}$ & ${\color{gray!90}259.65}$ & $379.68$ \\
  \rowcolor{gray!7}\cellcolor{white}{} & \texttt{MF-T + CAMF} & $8$
    & ${\color{gray!90}10.62}$ & ${\color{gray!90}31.18}$ & ${\color{gray!90}3.70}$ & ${\color{gray!90}6.46}$ & ${\color{gray!90}3.90}$ &
  $\underline{\mathbf{11.17}}$
    & ${\color{gray!90}162.34}$ & ${\color{gray!90}477.88}$ & ${\color{gray!90}100.97}$ & ${\color{gray!90}201.57}$ & ${\color{gray!90}115.86}$ &
  $\underline{\mathbf{211.72}}$ \\[-1.5pt]
  \arrayrulecolor{black!10}\cmidrule(lr){2-15}\arrayrulecolor{black}
    & FT$^\text{†}$ & $4{\times}2$
  & ${\color{gray!90}194.80}$ & ${\color{gray!90}123.03}$ & ${\color{gray!90}147.18}$ & ${\color{gray!90}199.16}$ & ${\color{gray!90}196.13}$ & $172.06$
  & ${\color{gray!90}1726.58}$ & ${\color{gray!90}1309.74}$ & ${\color{gray!90}1874.65}$ & ${\color{gray!90}2295.54}$ & ${\color{gray!90}2053.23}$ &
  $1851.95$ \\

    \cellcolor{white}
    \multirow{-5}{*}{%
      \begin{tabular}{@{}c@{}}
        \textbf{DiT/XL-2}\\[-1pt]
        {\scriptsize \cite{peebles2023scalable}}
      \end{tabular}
    }  
  
    & \texttt{MF-T} & $4$
    & ${\color{gray!90}47.26}$ & ${\color{gray!90}48.64}$ & ${\color{gray!90}22.38}$& ${\color{gray!90}57.86}$ & ${\color{gray!90}40.68}$ & $43.36$
    & ${\color{gray!90}608.87}$ & ${\color{gray!90}609.80}$ &${\color{gray!90}512.50}$ & ${\color{gray!90}1026.30}$ & ${\color{gray!90}629.40}$ &
  $677.37$ \\

\rowcolor{gray!7}\cellcolor{white}{}& \texttt{MF-T + CAMF} & 4
    & ${\color{gray!90}43.60}$ & ${\color{gray!90}48.30}$ & ${\color{gray!90}5.33}$ & ${\color{gray!90}31.82}$ & ${\color{gray!90}17.54}$ &
  $\mathbf{29.32}$
    & ${\color{gray!90}478.56}$ & ${\color{gray!90}608.64}$ & ${\color{gray!90}172.63}$ & ${\color{gray!90}543.18}$ & ${\color{gray!90}277.29}$ &
  $\mathbf{416.06}$ \\

  \hline

  \multirow{5}{*}{
  \begin{tabular}{@{}c@{}}
  \textbf{JiT/H-16}\\[-1pt]
  {\scriptsize \cite{li2025back}}
  \end{tabular}}
  & FT & $50{\times}2$
  & ${\color{gray!90}15.27}$ & ${\color{gray!90}35.72}$ & ${\color{gray!90}5.68}$ &${\color{gray!90}12.47}$ & ${\color{gray!90}15.70}$ & $\underline{16.97}$
  & ${\color{gray!90}359.10}$ & ${\color{gray!90}602.10}$ & ${\color{gray!90}278.70}$ & ${\color{gray!90}425.08}$ & ${\color{gray!90}332.44}$ & $\underline{399.48}$
  \\[-1.5pt]
  \arrayrulecolor{black!10}\cmidrule(lr){2-15}\arrayrulecolor{black}
  & FT & $4{\times}2$
  & ${\color{gray!90}44.03}$ & ${\color{gray!90}66.05}$ & ${\color{gray!90}25.03}$ &${\color{gray!90}37.60}$ & ${\color{gray!90}40.98}$ & $42.74$
  & ${\color{gray!90}599.80}$ & ${\color{gray!90}939.96}$ & ${\color{gray!90}524.17}$ & ${\color{gray!90}820.81}$ & ${\color{gray!90}767.51}$ & $730.45$
  \\
  & \texttt{MF-T} & $4$
  & ${\color{gray!90}24.07}$ & ${\color{gray!90}60.98}$ & ${\color{gray!90}14.65}$ & ${\color{gray!90}24.80}$ & ${\color{gray!90}21.22}$ & $29.14$
  & ${\color{gray!90}360.67}$ & ${\color{gray!90}913.56}$ & ${\color{gray!90}417.14}$ & ${\color{gray!90}768.58}$ & ${\color{gray!90}619.10}$ & $615.81$
  \\
    \rowcolor{gray!7}\cellcolor{white}{}& \texttt{MF-T + CAMF} & $4$
    & ${\color{gray!90}19.62}$ & ${\color{gray!90}53.02}$ & ${\color{gray!90}7.29}$ & ${\color{gray!90}18.07}$ & ${\color{gray!90}12.55}$ &
  $\mathbf{22.11}$
    & ${\color{gray!90}317.47}$ & ${\color{gray!90}770.15}$ &${\color{gray!90}311.59}$ & ${\color{gray!90}589.29}$ & ${\color{gray!90}380.78}$ &
  $\mathbf{473.86}$ \\
  \hline
  \end{tabular}
    \label{tab:main_labeled_init_teachers}
    \begin{minipage}{\textwidth}
\raggedright
\footnotesize
\textsuperscript{\dag} At the corresponding NFE settings, FT produces samples
that are perceptually indistinguishable from noise.
\end{minipage} 
\vspace{-10pt}
  \end{table*}

\section{Results and Discussion}

\subsection{Experimental Setup}

As source models, we use ImageNet generators spanning all four prediction parameterizations: DiT/XL-2~\cite{peebles2023scalable} ($\epsilon$), SiT/XL-2~\cite{ma2024sit} ($v$), JiT/H-16~\cite{li2025back} ($x$), and iMF/XL-2~\cite{geng2025improved} ($u$). We adapt each to a diverse set of labeled target domains: ArtBench (ArtB.)~\citep{liao2022artbench}, Caltech (Calt.)~\cite{griffin2007caltech}, CUB-Birds (CUB)~\cite{wah2011caltech}, Food~\cite{bossard2014food}, and Stanford-Cars (Cars)~\citep{krause20133d}, chosen to span diverse semantic domains and domain-shift magnitudes, following prior transfer studies~\cite{bahram2026dogfit,zhong2025domain,xie2023difffit}. We evaluate \texttt{MF-T} and \texttt{CAMF} both separately and combined, comparing against standard fine-tuning (FT) and the adversarial baseline AFM~\cite{lin2025adversarial}. We report quality at 1, 4, 8, and 250 NFE using FID, FD$_{\mathrm{DINOv2}}$ (FDD), and IS, computed between 10K generated samples and the full target dataset, at $256\times256$ resolution. For FT, CFG-based generation effectively doubles the NFE (e.g., 250$\times$2). All training uses a single H100 GPU. \texttt{MF-T} and FT are trained for 30K steps (40K for JiT); \texttt{CAMF} and AFM run for 30K generator steps when applied after \texttt{MF-T} (e.g., \texttt{MF-T}+\texttt{CAMF}) and 60K otherwise. All models converge within these budgets, and results use the best 4-step-FID checkpoint. \texttt{CAMF} uses a 5K-step discriminator-only warm-up, then 4 discriminator updates per generator update. Our \texttt{MF-T} code builds on iMF and our \texttt{CAMF} code on CAFM. 
Full hyperparameters and per-backbone details, along with computational cost analysis are in Appx.~\ref{subsec:Hyperparameters}.

\begin{figure*}[t!]
    \centering
    \begin{subfigure}[b]{0.24\linewidth}
        \centering
        \includegraphics[height=3.6cm, keepaspectratio]{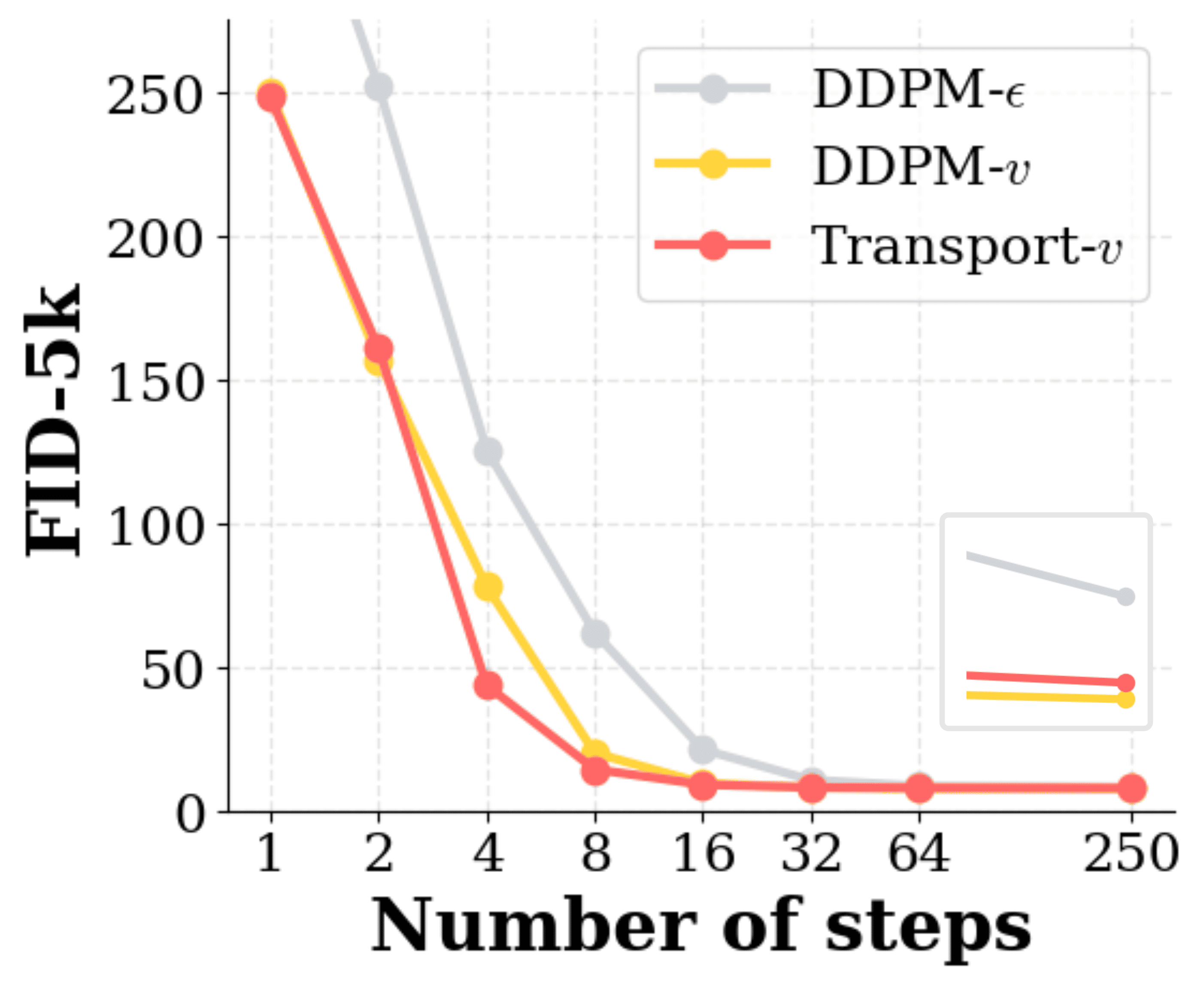}
        \caption{Source-domain FID.}
        \label{fig:3_analysis_1}
    \end{subfigure}%
    \hfill
    \begin{subfigure}[b]{0.73\linewidth}
        \centering
        \resizebox{\linewidth}{!}{%
              \centering
  \fontsize{5pt}{6pt}\selectfont
  \setlength{\tabcolsep}{4pt}
  \renewcommand{\arraystretch}{1.1}
  \begin{tabular}{l l c *{4}{c} *{4}{c}}
  \hline
  &
  &
  & \multicolumn{4}{c}{\textbf{FID} $\downarrow$}
  & \multicolumn{4}{c}{\textbf{FDD} $\downarrow$} \\
  \cmidrule(lr){4-7}
  \cmidrule(lr){8-11}
  \textbf{Transformation}
  & \textbf{Objective}
  & \textbf{NFE}
  & ArtB.
  & CUB
  & Cars
  & \textbf{Avg.}
  & ArtB.
  & CUB
  & Cars
  & \textbf{Avg.} \\
  \midrule
  \multirow{6}{*}{$\textbf{DiT}\rightarrow \textbf{SiT}$}
  & \textbf{\textcolor{lightgray}{DDPM-$\bm{\epsilon}$}}
  & $250{\times}2$
  & ${\color{gray!90}11.00}$
  & ${\color{gray!90}4.36}$
  & ${\color{gray!90}5.23}$
  & $\mathbf{6.87}$
  & ${\color{gray!90}227.2}$
  & ${\color{gray!90}112.8}$
  & ${\color{gray!90}167.5}$
  & $\mathbf{169.2}$ \\

  & \textbf{\textcolor{mustard}{DDPM-$\bm{v}$}}
  & $250{\times}2$
  & ${\color{gray!90}15.29}$
  & ${\color{gray!90}4.35}$
  & ${\color{gray!90}9.12}$
  & $9.59$
  & ${\color{gray!90}258.1}$
  & ${\color{gray!90}121.2}$
  & ${\color{gray!90}257.8}$
  & $212.4$ \\

  & \textbf{\textcolor{clementine}{Transport-$\bm{v}$}}
  & $250{\times}2$
  & ${\color{gray!90}12.49}$
  & ${\color{gray!90}5.14}$
  & ${\color{gray!90}6.48}$
  & $8.03$
  & ${\color{gray!90}244.8}$
  & ${\color{gray!90}113.9}$
  & ${\color{gray!90}166.5}$
  & $175.1$ \\

  \arrayrulecolor{black!10}
  \cmidrule(lr){2-11}
  \arrayrulecolor{black}

  & \textbf{\textcolor{lightgray}{DDPM-$\bm{\epsilon}$}}
  & $4{\times}2$
  & ${\color{gray!90}46.37}$
  & ${\color{gray!90}16.09}$
  & ${\color{gray!90}18.98}$
  & $27.14$
  & ${\color{gray!90}494.8}$
  & ${\color{gray!90}296.5}$
  & ${\color{gray!90}412.1}$
  & $\mathbf{401.1}$ \\

  & \textbf{\textcolor{mustard}{DDPM-$\bm{v}$}}
  & $4{\times}2$
  & ${\color{gray!90}37.48}$
  & ${\color{gray!90}13.44}$
  & ${\color{gray!90}27.49}$
  & $\mathbf{26.14}$
  & ${\color{gray!90}464.4}$
  & ${\color{gray!90}300.1}$
  & ${\color{gray!90}540.2}$
  & $434.9$ \\

  & \textbf{\textcolor{clementine}{Transport-$\bm{v}$}}
  & $4{\times}2$
  & ${\color{gray!90}43.57}$
  & ${\color{gray!90}73.07}$
  & ${\color{gray!90}88.88}$
  & $68.51$
  & ${\color{gray!90}538.0}$
  & ${\color{gray!90}852.4}$
  & ${\color{gray!90}1156.6}$
  & $849.0$ \\

  \midrule

  \multirow{3}{*}{$\textbf{DiT}\rightarrow\texttt{MF-T}$}
  & \textbf{\textcolor{lightgray}{DDPM-$\bm{\epsilon}$}}
  & $4$
  & ${\color{gray!90}108.86}$
  & ${\color{gray!90}47.07}$
  & ${\color{gray!90}51.42}$
  & $69.12$
  & ${\color{gray!90}812.6}$
  & ${\color{gray!90}625.6}$
  & ${\color{gray!90}772.0}$
  & $736.7$ \\

  & \textbf{\textcolor{mustard}{DDPM-$\bm{v}$}}
  & $4$
  & ${\color{gray!90}44.52}$
  & ${\color{gray!90}17.22}$
  & ${\color{gray!90}24.97}$
  & $\mathbf{28.90}$
  & ${\color{gray!90}577.4}$
  & ${\color{gray!90}437.2}$
  & ${\color{gray!90}472.4}$
  & $\mathbf{495.7}$ \\

  & \textbf{\textcolor{clementine}{Transport-$\bm{v}$}}
  & $4$
  & ${\color{gray!90}200.73}$
  & ${\color{gray!90}244.58}$
  & ${\color{gray!90}298.71}$
  & $248.00$
  & ${\color{gray!90}2138.2}$
  & ${\color{gray!90}2839.2}$
  & ${\color{gray!90}3213.7}$
  & $2730.4$ \\
  \hline
  \end{tabular}
        }
        \caption{Target-domain FID after fine-tuning (best result within $30$K steps).}
        \label{tab:objective_nfe_ablation}
    \end{subfigure}
    \caption{Comparison of \textbf{\textcolor{lightgray}{DDPM-$\bm{\epsilon}$}}, \textbf{\textcolor{mustard}{DDPM-$\bm{v}$}}, and \textbf{\textcolor{clementine}{Transport-$\bm{v}$}} for $\text{DiT}\rightarrow\text{SiT}$ and $\text{DiT}\rightarrow\texttt{MF-T}$ transformation under domain shift. Source domain and model are ImageNet and DiT/XL-2.}
    \label{fig:3_analysis}
    \vspace{-10pt}
\end{figure*}




\subsection{Main Results}

\noindent\textbf{\texttt{CAMF} refinement pairs with \texttt{MF-T} better than raw source.}
We first isolate the adversarial stage on an iMF/XL-2 source, whose MF parameterization lets \texttt{CAMF} and AFM be applied either directly to the source-domain model or as post-training on top of \texttt{MF-T} (Tab.~\ref{tab:main_labeled_init_teachers}). Applied cold to the source model, both adversarial criteria are weak and worse than \texttt{MF-T} alone ($13.13$). Used instead as a post-training stage on the \texttt{MF-T} checkpoint, \texttt{CAMF} improves markedly to $7.88$ (and $13.79$ at 1 NFE), the best result in every setting. It also outperforms AFM as the post-trainer ($9.01$ at 4 NFE). \texttt{CAMF} refinement thus delivers its gains hand-in-hand with \texttt{MF-T}, which supplies the adapted, low-cost transport map it builds on. IS results in Appx~\ref{sec_x:results} Tab.~\ref{tab:main_is} show similar trends.

\noindent\textbf{\texttt{MF-T}+\texttt{CAMF} transfers across any source at few steps.}
We next apply the full pipeline to all four source parameterizations (Tab.~\ref{tab:main_labeled_init_teachers}). Regardless of the source prior, \texttt{MF-T}+\texttt{CAMF} yields high-quality few-step target-domain generation, and consistently improves over both few-step FT and \texttt{MF-T} alone: average FID at 4 NFE drops from $28.15$ (FT) and $11.96$ (MF-T) to $10.30$ on SiT, from $42.74$/$29.14$ to $22.11$ on JiT, and from $172.06$/$47.26$ to $29.32$ on DiT. Most notably, the refined few-step model matches or exceeds its own many-step teacher at a fraction of the cost: SiT reaches $10.30$ FID at 4 NFE versus the $250$-step teacher's $10.47$, and DiT reaches $10.62$ at 8 NFE versus $11.87$ at $250$ steps--a $30\times$ reduction in sampling steps at equal or better quality. \texttt{CAMF} achieves this gain by providing an adversarial, distribution-level target-domain signal unavailable to MF's pointwise regression, thereby compensating for scarse target data.

\noindent\textbf{Centering-Penalty Ablation.}
\label{subsubsec:cp_ablation}
Tab.~\ref{tab:cp-ablation} compares alternative formulations of the centering penalty used to control the absolute magnitude of the discriminator potential. Our independently centered formulation often achieves the best performance. We therefore use this formulation in all main experiments. 

\begin{table}[H]
    \vspace{-5pt}
    \centering
    \scriptsize
    \setlength{\tabcolsep}{3pt}
    \renewcommand{\arraystretch}{1.1}
    \caption{$\mathcal{L}_\text{cp}$ ablation on iMF/XL-2 \texttt{MF-T}+\texttt{CAMF} ($\text{NFE}=4$, best result within $30\text{K}+30\text{K}$ training steps, $D_{z_t}:D_{\psi}(z_t)$,
    $D_{z_r}:D_{\psi}(z_r)$, $D_{\widehat{z}_r}:D_{\psi}(\widehat{z}_r)$). Rows are ordered from least to most complex centering penalty.}
    \label{tab:cp-ablation}
    \vspace{-5pt}
    \begin{tabular}{
    >{\scriptsize}l
    *{6}{>{\scriptsize}c}}
    \hline
    \tblstrut
    & \multicolumn{3}{c}{\scriptsize\textbf{FID} $\downarrow$}
    & \multicolumn{3}{c}{\scriptsize\textbf{FDD} $\downarrow$} \\
    \cmidrule(lr){2-4}\cmidrule(lr){5-7}
    \textbf{Centering penalty $\mathcal{L}_{\mathrm{cp}}$}
    & ArtB. & CUB & Cars
    & ArtB. & CUB & Cars \\
    \midrule
    \tblstrut
    $\varnothing$
    & ${\color{gray!90}7.49}$ & ${\color{gray!90}2.85}$ & ${\color{gray!90}3.13}$
    & ${\color{gray!90}160.24}$ & ${\color{gray!90}86.10}$ & ${\color{gray!90}111.71}$ \\
    $D_{z_t}^2$
    & ${\color{gray!90}7.81}$ & ${\color{gray!90}2.79}$ & ${\color{gray!90}3.24}$
    & ${\color{gray!90}157.70}$ & ${\color{gray!90}82.70}$ & ${\color{gray!90}125.14}$ \\
    $D_{z_t}^2 + D_{\widehat{z}_r}^2$
    & ${\color{gray!90}6.88}$ & ${\color{gray!90}2.80}$ & ${\color{gray!90}3.36}$
    & ${\color{gray!90}151.95}$ & ${\color{gray!90}84.60}$ & ${\color{gray!90}128.25}$ \\
    $D_{z_t}^2 + D_{z_r}^2$
    & ${\color{gray!90}8.11}$ & ${\color{gray!90}2.85}$ & ${\color{gray!90}3.24}$
    & ${\color{gray!90}170.76}$ & ${\color{gray!90}83.89}$ & ${\color{gray!90}104.18}$ \\
    $(D_{z_t}+D_{\hat z_r})^2$
    & ${\color{gray!90}7.68}$ & ${\color{gray!90}2.77}$ & ${\color{gray!90}3.27}$
    & ${\color{gray!90}159.88}$ & ${\color{gray!90}82.37}$ & ${\color{gray!90}123.60}$ \\
    $(D_{z_t}+D_{z_r}+D_{\widehat{z}_r})^2$
    & ${\color{gray!90}7.58}$ & $\mathbf{2.70}$ & ${\color{gray!90}3.53}$
    & ${\color{gray!90}154.68}$ & $\mathbf{80.63}$ & ${\color{gray!90}118.21}$ \\
  \rowcolor{gray!7} \tblstrut $D_{z_t}^2 + D_{z_r}^2 + D_{\widehat{z}_r}^2$ \;(\textit{ours})
    & $\mathbf{6.71}$ & ${\color{gray!90}2.88}$ & $\mathbf{3.07}$
    & $\mathbf{146.40}$ & ${\color{gray!90}84.24}$ & $\mathbf{102.37}$ \\
    \hline
    \end{tabular}

    \end{table}

\noindent\textbf{Optimal-transport regularization.}
Tab.~\ref{tab:ot-ablation} evaluates an additional OT loss term for \texttt{CAMF} in two settings: applied directly to the ImageNet-pretrained iMF checkpoint, and as post-training on an \texttt{MF-T} model. In both, disabling OT consistently improves FID and FDD. We therefore omit OT from all main experiments.

\begin{table}[H]
  \vspace{-5pt}
  \centering
  \scriptsize
  \setlength{\tabcolsep}{3pt}
  \renewcommand{\arraystretch}{1.1}
  \caption{$\mathcal{L}_\text{ot}$ ablation on iMF/XL-2 ($\text{NFE}=4$, best result within $30\text{K}+30\text{K}$ training steps for \texttt{MF-T + CAMF} and $30\text{K}$ for \texttt{CAMF}).}
  \label{tab:ot-ablation}
\begin{tabularx}{\columnwidth}{
  @{}
  l
  c
  *{6}{>{\centering\arraybackslash}X}
  @{}
} \toprule \vspace{-1.5pt}
  &
  & \multicolumn{3}{c}{\scriptsize\textbf{FID} $\downarrow$}
  & \multicolumn{3}{c}{\scriptsize\textbf{FDD} $\downarrow$} \\[-1.5pt]
  \cmidrule(lr){3-5}\cmidrule(lr){6-8}\vspace{-1.5pt}
  \textbf{Method}
  & $\mathcal{\lambda}_{\mathrm{ot}}$
  & ArtB. & Calt. & CUB 
  & ArtB. & Calt. &  CUB \\[-1.5pt]
  \midrule
  \texttt{CAMF} & $4$ & ${\color{gray!90}12.18}$ & ${\color{gray!90}59.80}$ &  ${\color{gray!90}12.52}$ & ${\color{gray!90}231.13}$ & ${\color{gray!90}974.38}$ &  ${\color{gray!90}454.71}$ \\
 \texttt{CAMF} &  $0$ & $\mathbf{10.80}$ & $\mathbf{44.42}$ & $\mathbf{9.34}$ & $\mathbf{221.40}$ & $\mathbf{812.31}$ & $\mathbf{367.57}$ \\[-1.5pt]
  \arrayrulecolor{black!10}
\cmidrule(lr){2-8}
\arrayrulecolor{black}
  \texttt{MF-T + CAMF} & $4$ & ${\color{gray!90}7.59}$ & ${\color{gray!90}24.02}$  & ${\color{gray!90}3.23}$ & ${\color{gray!90}159.11}$ & ${\color{gray!90}408.27}$ & ${\color{gray!90}98.86}$ \\
   \rowcolor{gray!7} \texttt{MF-T + CAMF} & $0$ & $\mathbf{6.71}$ & $\mathbf{22.10}$  & $\mathbf{2.88}$
  & $\mathbf{146.4}$ & $\mathbf{378.09}$  & $\mathbf{84.24}$ \\
  \hline
  \end{tabularx}
\end{table}

\section{Conclusion}
We present MeanFlow-Transfer (\texttt{MF-T}), a unified framework for transferring a broad range of pretrained diffusion and flow models into a high-quality, few-step generator on a new domain under limited data. \texttt{MF-T} maps a source's $x$-, $\epsilon$-, $v$-, or $u$-prediction into a shared instant-velocity representation, then adapts and compresses it into a target MF generator within a single training loop. To further improve quality, we introduce Continuous Adversarial MeanFlow (\texttt{CAMF}), a post-training stage that contrasts a learned scalar potential between real and predicted interval endpoints. We prove it strictly generalizes continuous adversarial flow models, recovering the instantaneous-velocity discriminator as the interval vanishes. Across four pretrained sources and five target domains, \texttt{MF-T} with \texttt{CAMF} matches or exceeds the many-step fine-tuned teacher at up to $125\times$ fewer function evaluations, with \texttt{CAMF} reducing \texttt{MF-T}'s few-step FID by $29\%$ on average. Overall, \texttt{MF-T}+\texttt{CAMF} provides a unified framework for adapting pretrained generators of any parameterization into high-quality, sample-efficient target-domain generators under a single framework.

\paragraph{Future work.}
While \texttt{MF-T} and \texttt{CAMF} substantially accelerate each source to 4 steps, they do not recover the one- to two-step quality that MF models attain in their data-rich source domain.  Therefore, high-fidelity single-step generation under limited data remains an open challenge. Further, we focus on class-conditioned image generation at $256{\times}256$. Extending \texttt{MF-T} to higher resolutions and to text-to-image or video generators is a natural next step. 

\section{Acknowledgments}
This research was supported by the Natural Sciences and Engineering Research Council of Canada, and the Digital Research Alliance of Canada.

\bibliography{aaai2027}

\clearpage
\appendix

\section{Appendix / supplemental material}

\begin{center}
    \begin{minipage}{1\linewidth}
      \small
      \hrule\vspace{0.6em}
      \begin{center}
        \textbf{Table of Contents}
      \end{center}
      \hrule
      \vspace{0.7em}

        \SuppTOCLine{\textbf{~\ref{subsubsec:a_diff2flow}\ ~~ Diffusion to Flow-Matching}}{\pageref{subsubsec:a_diff2flow}}

        \vspace{0.4em}
        \SuppTOCLine{\textbf{~\ref{sec:x_proof_CAMF}\ ~~ Proof of Proposition~\ref{prop:camf_consistency}}}{\pageref{sec:x_proof_CAMF}}

        \vspace{0.4em}
        \SuppTOCLine{\textbf{~\ref{subsec:Hyperparameters}\ ~~ Hyperparameters}}{\pageref{subsec:Hyperparameters}}

        \vspace{0.4em}
        \SuppTOCLine{\textbf{~\ref{app:training_algorithms}\ ~~ Training Algorithms}}{\pageref{app:training_algorithms}}

        \vspace{0.4em}
        \SuppTOCLine{\textbf{~\ref{subsec:compute_cost}\ ~~ Computational Cost}}{\pageref{subsec:compute_cost}}

        \vspace{0.4em}
        \SuppTOCLine{\textbf{~\ref{sec_x:results}}\ ~~ Additional Results}{\pageref{sec_x:results}}
        \SuppTOCLine{\quad Inception Score}{\pageref{subsubsec:inception_score}}
        \SuppTOCLine{\quad Additional Qualitative Results}{\pageref{subsubsec:additional_qualitative}}

      \vspace{0.3em}\hrule
    \end{minipage}
\end{center}

\subsection{Diffusion to Flow-matching}
\label{subsubsec:a_diff2flow}

Source models are pretrained under a variety of noise schedules and prediction targets. For example, a DiT~\cite{peebles2023scalable} uses a DDPM-based noise schedule and noise prediction target $\epsilon$. 
Using the common diffusion interpolant:
\begin{equation}
    z_\tau = \alpha_\tau x + \sigma_\tau \epsilon,
\qquad \epsilon \sim \mathcal{N}(0, I),
\end{equation}
where $\alpha$ and $\sigma$ denote noise schedules. We consider three ways of sampling from this model: the native discrete reverse-diffusion sampler (DDPM-$\epsilon$), the same $\epsilon$-model reinterpreted as a velocity on the diffusion path (DDPM-$v$), and the same $\epsilon$-model transported onto a linear flow-matching path (Transport-$v$; same as Diff2Flow~\cite{schusterbauer2025diff2flow}).

\noindent\textbf{DDPM-$\bm{\epsilon}$}:  This is the original diffusion model without any change to its output parameterization or noise schedule. The model predicts noise, and its sampling follows a discrete reverse diffusion chain. We have:
\begin{equation}
    \hat{\epsilon}_\theta = \epsilon_\theta(z_\tau,\tau), \qquad
\hat{x} = \frac{z_\tau - \sigma_\tau \hat{\epsilon}_\theta}{\alpha_\tau}.
\end{equation}

\noindent\textbf{DDPM-$\bm{v}$}: This is still an $\epsilon$ model with it's output parameterization mapped into diffusion-path velocity $v$. We have:
\begin{equation}
    v_\tau
=
\frac{d z_\tau}{d\tau}
=
\dot{\alpha}_\tau x + \dot{\sigma}_\tau \epsilon.
\end{equation}

Using the same $\epsilon$-based model:
\begin{equation}
    \hat{x} = \frac{z_\tau - \sigma_\tau \hat{\epsilon}_\theta}{\alpha_\tau},
\qquad
\hat{v}_\tau
\approx
\dot{\alpha}_\tau \hat{x} + \dot{\sigma}_\tau \hat{\epsilon}_\theta.
\end{equation}

Using finite differences, the new noise schedule can be written as:
\begin{equation}
    \dot{\alpha}_\tau \approx \frac{\alpha_{\tau'}-\alpha_\tau}{\tau'-\tau},
\qquad
\dot{\sigma}_\tau \approx \frac{\sigma_{\tau'}-\sigma_\tau}{\tau'-\tau},
 \label{eq:diffusion_schedule_derivatives}
\end{equation}
where $\tau'$ is the next time-step after $\tau$.
Then, sampling can use an Euler update as:
\begin{equation}
    z_{\tau'} = z_\tau + (\tau' - \tau)\hat{v}_\tau.
    \label{eq:dm_euler_update}
\end{equation}

\noindent\textbf{Transport-$\bm{v}$}: This follows the same $\epsilon$ model, but sampled on a linear flow-matching path after Diff2Flow-style time and state alignment~\cite{schusterbauer2025diff2flow}. We use the linear flow-matching path:
\begin{equation}
    z_t = t x + (1-t)\epsilon,
\qquad
v_t = \frac{d z_t}{dt} = x - \epsilon.
\end{equation}

Using Diff2Flow-style alignment, we align the diffusion state and time (noise schedule alignment):
\begin{equation}
    t = \frac{\alpha_\tau}{\alpha_\tau + \sigma_\tau},
\qquad
z_\tau = (\alpha_\tau + \sigma_\tau)\,z_t.
\end{equation}

Then we query the $\epsilon$ model on the aligned noise schedule and map the output parameterization to $v$:
\begin{equation}
    \hat{x} = \frac{z_\tau - \sigma_\tau \hat{\epsilon}_\theta}{\alpha_\tau},
\qquad
\hat{v}_t = \hat{x} - \hat{\epsilon}_\theta.
\end{equation}

Sampling then can use Euler updates:
\begin{equation}
    z_{t'} = z_t + t'\hat{v}_t.
\end{equation}

  \noindent\textbf{Time-parameterization convention.}
  For readability we omit time reparameterization from the figures and
  derivations, but it is applied throughout. Source models do not share a time
  convention: DiT indexes discrete steps from $999$ down to $0$~\cite{peebles2023scalable}, whereas SiT and
  the flow-matching families use continuous $t\in[0,1]$ increasing toward data~\cite{ma2024sit}.
  When initializing from a source model we compose the necessary affine map -- a
  flip and a rescaling in the DiT case -- so that the target generator receives times on the scale the source network was trained to expect. Omitting this map leaves the initialized model evaluating its own weights far outside their
  training range.


\subsection{Proof of Proposition~\ref{prop:camf_consistency}}
  \label{sec:x_proof_CAMF}

 \texttt{CAMF} scores a generated sample
  through the finite-interval quantity
  $\mathcal{A}_\psi(z_t,\widehat z_r;t,r)$ and a real sample through
  $\mathcal{A}_\psi(z_t,z_r;t,r)$, where both are built from the same potential
  $D_\psi$ and the same upper-time anchor $(z_t,t)$. The proposition asserts that
  \emph{both} scores are first-order-consistent approximations of the
  instantaneous velocity-space score of CAFM -- the generated one at the
  student velocity, the real one at the ground-truth velocity -- and that each
  recovers its instantaneous counterpart exactly as $r\to t$.
  
 \begin{proof}
  Let $\Delta = t-r > 0$, so $r = t-\Delta$.

  \noindent\textbf{Generated branch.}
  The predicted endpoint is $\widehat z_r = z_t-\Delta\,u_\theta(z_t,r,t)$. A
  first-order Taylor expansion of $D_\psi$ about $(z_t,t)$ gives:
  \begin{equation}
  \begin{aligned}
  D_\psi(\widehat z_r,r)
  &= D_\psi\bigl(z_t-\Delta\,u_\theta,\;t-\Delta\bigr) \\
  &= D_\psi(z_t,t)
  -\Delta\,\partial_z D_\psi\,u_\theta
  -\Delta\,\partial_t D_\psi
  +\mathcal{O}(\Delta^2).
  \end{aligned}
  \end{equation}
  Substituting into the finite-interval score and dividing by $\Delta$,
  \begin{equation}
  \begin{aligned}
  \mathcal{A}_\psi(z_t,\widehat z_r;t,r)
  &= \frac{D_\psi(z_t,t)-D_\psi(\widehat z_r,r)}{\Delta} \\
  &= \partial_z D_\psi\,u_\theta+\partial_t D_\psi+\mathcal{O}(\Delta) \\
  &= \texttt{JVP}_{z_t,t}\bigl(D_\psi;u_\theta,1\bigr)+\mathcal{O}(\Delta).
  \end{aligned}
  \label{eq:proof_fake_branch}
  \end{equation}

  \noindent\textbf{Real branch.}
  The real endpoint is not predicted but read off the interpolant. Under the
  linear path $z_t = t\,\epsilon+(1-t)\,x$ the trajectory is straight, so its
  velocity $v^{\star}=\epsilon-x$ is constant for any $t$ and the average velocity over
  any interval $[r,t]$ coincides with it. Consequently:
  \begin{equation}
  z_r = z_t-\Delta\,v^{\star},
  \label{eq:proof_real_endpoint}
  \end{equation}
  \emph{exactly}, with no discretization error. The real branch differs from the
  generated branch only in which velocity displaces the anchor. Expanding
  $D_\psi$ about the same point $(z_t,t)$,
  \begin{equation}
  \begin{aligned}
  \mathcal{A}_\psi(z_t,z_r;t,r)
  &= \frac{D_\psi(z_t,t)-D_\psi(z_r,r)}{\Delta} \\
  &= \partial_z D_\psi\,v^{\star}+\partial_t D_\psi+\mathcal{O}(\Delta) \\
  &= \texttt{JVP}_{z_t,t}\bigl(D_\psi;v^{\star},1\bigr)+\mathcal{O}(\Delta).
  \end{aligned}
  \label{eq:proof_real_branch}
  \end{equation}

  \noindent\textbf{Limit.}
  As $r\to t$ the $\mathcal{O}(\Delta)$ remainders vanish and
  $u_\theta(z_t,r,t)\to u_\theta(z_t,t,t)=v_\theta$, so
  Eq.~\eqref{eq:proof_fake_branch} converges to
  $\texttt{JVP}_{z_t,t}(D_\psi;v_\theta,1)$ and
  Eq.~\eqref{eq:proof_real_branch} to
  $\texttt{JVP}_{z_t,t}(D_\psi;v^{\star},1)$. These are exactly the instantaneous
  velocity-space scores CAFM assigns to the generated and real velocity fields at
  $(z_t,t)$, which establishes the claim.
  \end{proof}

\noindent\textbf{Remark (conditional versus marginal velocity).}
The velocity $v^{\star}=\epsilon-x$ in Eq. \eqref{eq:proof_real_endpoint} is \emph{conditional}: it depends on the specific pair $(x,\epsilon)$ that produced
$z_t$, not on $z_t$ alone. This mirrors flow matching, where the network regresses the same conditional target, whose expectation
$\mathbb{E}[\epsilon-x\mid z_t]$ is the marginal velocity field~\cite{lipman2022flow}. Our generated branch is therefore similarly trained to match the marginal field using the conditional velocity.

\noindent\textbf{Remark (role of the anchor).}
Subtracting Eq.~\eqref{eq:proof_real_branch} from Eq.~\eqref{eq:proof_fake_branch} cancels both the anchor potential $D_\psi(z_t,t)$ and the
time-derivative term:
\begin{equation}
\mathcal{A}_\psi(z_t,\widehat z_r;t,r)-\mathcal{A}_\psi(z_t,z_r;t,r)=\partial_z D_\psi,(u_\theta-v^{\star})+\mathcal{O}(\Delta).
\end{equation}

The discriminator's signal is thus the velocity error $u_\theta-v^{\star}$ projected onto $\partial_z D_\psi$. However, the anchor does not cancel: the two scores enter through \emph{separate} squared terms rather than through their difference, so $D_\psi(z_t,t)$ shifts each score relative to its target and thereby scales the gradient the generator receives--unlike a raw endpoint logit. Finally, since
$\mathcal{A}_\psi$ depends only on differences of $D_\psi$, it is invariant to a constant shift of the potential, whose absolute value can therefore drift
during training. The centering penalty (Sec.~\ref{subsubsec:cp_ablation}) removes this freedom by penalizing $D_\psi$ at the three evaluated states $z_t$,
$z_r$, and $\widehat z_r$.

\subsection{Hyperparameters}
\label{subsec:Hyperparameters}
Tab.~\ref{tab:hyperparameters} summarizes the hyperparameter settings and model configurations used for training FT, \texttt{CAMF}, and \texttt{MF-T} starting from all possible source models.

\subsection{Training Algorithms}
\label{app:training_algorithms}

  We provide the pseudocode for the complete \texttt{MF-T} and \texttt{CAMF} pipeline.
  Algorithm~\ref{alg:mfa_init} describes the initialization of the target MF model from pretrained source models with heterogeneous prediction parameterizations, mapping each source output space to a common instantaneous velocity under an aligned time parameterization.
  Algorithm~\ref{alg:mfa_train} presents the \texttt{MF-T} mid-training procedure, which jointly adapts the model to the target domain and enables few-step generation.
  Finally, Algorithm~\ref{alg:caimf-training} describes the \texttt{CAMF} adversarial post-training stage used to further refine the target-domain generator, alternating between the discriminator and generator updates detailed in Algorithms~\ref{alg:caimf-disc} and~\ref{alg:caimf-gen}.
  All reported \texttt{CAMF} results use the pure-adversarial regime, $\lambda_{\mathrm{ot}}=0$.

\begin{table*}[t]
\centering
\small
\setlength{\tabcolsep}{5pt}
\caption{\textbf{Computational cost on CUB-200}, single NVIDIA H100 80GB SXM. ``Updates'' counts generator
optimizer steps. GPU-hours include periodic in-training evaluation (once each $2.5$K training steps generating $10$K samples for four-step evaluation).
Latency is the forward sampling time at batch $32$, warmup-then-timed and
amortized per image.}
\label{tab:cost}
\begin{tabular}{ll rrr rrr}
\toprule
& & \multicolumn{3}{c}{\textbf{Training}} & \multicolumn{3}{c}{\textbf{Inference}} \\
\cmidrule(lr){3-5}\cmidrule(lr){6-8}
Family & Stage & Updates & Batch & GPU-h & NFE & TFLOPs/img & Latency (ms) \\
\midrule
\multirow{2}{*}{\texttt{iMF}}
 & \texttt{MF-T}   & $40\text{k}$ & $32$ & $12.85$ & $4$   & $0.337$ & $17.48$ \\
 & $+$\texttt{CAMF}& $30\text{k}$ & $2$  & $12.88$ & $4$   & $0.337$ & $17.48$ \\
\midrule
\multirow{3}{*}{\texttt{SiT}}
 & FT (baseline)   & $30\text{k}$ & $32$ & $1.54$  & $500$ & $33.33$ & $735.0$ \\
 & \texttt{MF-T}   & $30\text{k}$ & $32$ & $3.32$  & $4$   & $0.267$ & $5.88$  \\
 & $+$\texttt{CAMF}& $30\text{k}$ & $2$  & $6.99$  & $4$   & $0.267$ & $5.88$  \\
\midrule
\multirow{3}{*}{\texttt{DiT}}
 & FT (baseline)   & $30\text{k}$ & $32$ & $1.56$  & $500$ & $33.33$ & $735.0$ \\
 & \texttt{MF-T}   & $30\text{k}$ & $32$ & $4.25$  & $4$   & $0.267$ & $5.88$  \\
 & $+$\texttt{CAMF}& $30\text{k}$ & $2$  & $7.24$  & $4$   & $0.267$ & $5.88$  \\
\midrule
\multirow{3}{*}{\texttt{JiT}}
 & FT (baseline)   & $40\text{k}$ & $16$ & $5.78$  & $100$ & $0.799$ & $341.4$ \\
 & \texttt{MF-T}   & $40\text{k}$ & $16$ & $5.78$  & $4$   & $0.032$ & $13.64$ \\
 & $+$\texttt{CAMF}& $30\text{k}$ & $2$  & $8.12$  & $4$   & $0.032$ & $13.64$ \\
\bottomrule
\end{tabular}
\end{table*}

  \begin{table*}[t!]
    \centering
    \footnotesize
    \setlength{\tabcolsep}{3pt}
    \renewcommand{\arraystretch}{1.1}
    \caption{Quantitative comparison with baselines across four ImageNet-initialized models and five target datasets. We report Inception Score (IS) across different NFEs after adaptation to target-domain. \textcolor{afmblue}{AFM} and \textcolor{afmblue}{\texttt{CAMF}} as stand-alone methods are only tested on iMF/XL-2 where the initial model is MF-based (with \texttt{MF-T}~$\approx$~FT on iMF). \textbf{Bold}: best average result under each model initialization and NFE.  \underline{Underlined}: best average result under each model initialization across all relevant NFEs.}

  \begin{tabular}{
  >{\scriptsize}c
  >{\scriptsize}l
  >{\scriptsize}c
  *{5}{>{\scriptsize}c}
  >{\scriptsize}c}
  \hline
  & & & \multicolumn{6}{c}{\textbf{IS} $\uparrow$} \\
  \cmidrule(lr){4-9}
  \textbf{ImageNet Init.} & \textbf{Method} & \textbf{NFE}
  & ArtB. & Calt. & CUB & Food & Cars & \textbf{Avg.} \\
  \midrule
  & \textcolor{afmblue}{AFM} & $4$
  & ${\color{gray!90}6.98}$ & ${\color{gray!90}17.27}$ & ${\color{gray!90}5.14}$ & ${\color{gray!90}4.16}$ &
  ${\color{gray!90}3.33}$ & $7.38$ \\
  & \textcolor{afmblue}{\texttt{CAMF}} & $4$
  & ${\color{gray!90}7.94}$ & ${\color{gray!90}23.54}$ & ${\color{gray!90}6.16}$ & ${\color{gray!90}5.28}$ &
  ${\color{gray!90}3.66}$ & $9.32$  \\[-1.5pt]
  \arrayrulecolor{black!10}\cmidrule(lr){2-9}\arrayrulecolor{black}
  
  &  \texttt{MF-T} & $4$
  & ${\color{gray!90}7.36}$ & ${\color{gray!90}31.46}$ & ${\color{gray!90}6.08}$ & ${\color{gray!90}6.37}$ & ${\color{gray!90}3.41}$ & $10.94$ \\

    &   \texttt{MF-T} + AFM & $4$
  & ${\color{gray!90}7.62}$ & ${\color{gray!90}36.69}$ & ${\color{gray!90}5.67}$ & ${\color{gray!90}5.47}$ &
  ${\color{gray!90}3.36}$ & $11.76$ \\

   \rowcolor{gray!7}\cellcolor{white}{}&  \texttt{MF-T + CAMF} & $4$
  & ${\color{gray!90}8.03}$ & ${\color{gray!90}36.16}$ & ${\color{gray!90}5.69}$ & ${\color{gray!90}5.76}$ &
  ${\color{gray!90}3.23}$ & $\underline{\mathbf{11.77}}$ \\[-1.5pt]
  \arrayrulecolor{black!10}\cmidrule(lr){2-9}\arrayrulecolor{black}

  & \textcolor{afmblue}{AFM} & $1$
  & ${\color{gray!90}6.27}$ & ${\color{gray!90}14.49}$ & ${\color{gray!90}5.41}$ & ${\color{gray!90}5.66}$ &
  ${\color{gray!90}3.5}$ & $7.07$ \\
  
  & \textcolor{afmblue}{\texttt{CAMF}} & $1$
  & ${\color{gray!90}9.41}$ & ${\color{gray!90}14.44}$ & ${\color{gray!90}6.62}$ & ${\color{gray!90}8.35}$ &
  ${\color{gray!90}4.91}$ & $8.75$ \\[-1.5pt]
  \arrayrulecolor{black!10}\cmidrule(lr){2-9}\arrayrulecolor{black}
  
  &  \texttt{MF-T} & $1$
  & ${\color{gray!90}7.70}$ & ${\color{gray!90}28.43}$ & ${\color{gray!90}6.37}$ & ${\color{gray!90}6.31}$ & ${\color{gray!90}3.43}$ & $10.45$ \\
  \multirow{-9}{*}{
  \begin{tabular}{@{}c@{}}
  \textbf{iMF/XL-2}\\[-1pt]
  \cite{geng2025improved}\\[-1pt]
  \end{tabular}}
  &  \texttt{MF-T} + AFM & $1$
  & ${\color{gray!90}6.77}$ & ${\color{gray!90}32.34}$ & ${\color{gray!90}6.02}$ & ${\color{gray!90}5.39}$ &
  ${\color{gray!90}3.19}$ & $10.74$ \\

  \rowcolor{gray!7}\cellcolor{white}{}&  \texttt{MF-T + CAMF} & $1$
  & ${\color{gray!90}7.69}$ & ${\color{gray!90}32.45}$ & ${\color{gray!90}6.04}$ & ${\color{gray!90}5.70}$ &
  ${\color{gray!90}3.05}$ & $\mathbf{10.99}$ \\
\hline

  & FT & $250{\times}2$
  & ${\color{gray!90}6.94}$ & ${\color{gray!90}35.23}$ & ${\color{gray!90}6.03}$ & ${\color{gray!90}5.45}$ & ${\color{gray!90}3.39}$ & $11.41$ \\
  [-1.5pt]
  \arrayrulecolor{black!10}\cmidrule(lr){2-9}\arrayrulecolor{black}
  & FT & $4{\times}2$
  & ${\color{gray!90}5.45}$ & ${\color{gray!90}31.99}$ & ${\color{gray!90}6.96}$ & ${\color{gray!90}5.35}$ & ${\color{gray!90}3.52}$ & $10.65$ \\
  &  \texttt{MF-T} & $4$
  & ${\color{gray!90}7.56}$ & ${\color{gray!90}33.21}$ & ${\color{gray!90}6.14}$ & ${\color{gray!90}6.03}$ & ${\color{gray!90}3.50}$ & $11.29$ \\
    \rowcolor{gray!7}\cellcolor{white}{}& \texttt{MF-T + CAMF} & $4$
  & ${\color{gray!90}7.59}$ & ${\color{gray!90}34.69}$ & ${\color{gray!90}5.83}$ & ${\color{gray!90}5.97}$ & ${\color{gray!90}3.45}$ & $\underline{\mathbf{11.51}}$
  \\
  [-1.5pt]
  \arrayrulecolor{black!10}\cmidrule(lr){2-9}\arrayrulecolor{black}
  \multirow{-4}{*}{
  \begin{tabular}{@{}c@{}}
  \textbf{SiT/XL-2}\\[-1pt]
  {\scriptsize \cite{ma2024sit}}
  \end{tabular}} 
  & FT$^\text{†}$ & $1{\times}2$
  & ${\color{gray!90}1.82}$ & ${\color{gray!90}2.76}$ & ${\color{gray!90}2.52}$ & ${\color{gray!90}1.77}$ & ${\color{gray!90}1.97}$ & $2.17$ \\
  & \texttt{MF-T} & $1$ & ${\color{gray!90}6.34}$ & ${\color{gray!90}23.64}$ & ${\color{gray!90}6.42}$ & ${\color{gray!90}5.80}$ & ${\color{gray!90}3.15}$ & $9.07$ \\

  \rowcolor{gray!7}\cellcolor{white}{}& \texttt{MF-T + CAMF} & $1$  & ${\color{gray!90}6.10}$ & ${\color{gray!90}25.28}$ & ${\color{gray!90}5.91}$ & ${\color{gray!90}5.52}$ & ${\color{gray!90}3.14}$ & $\mathbf{9.19}$ \\
  \hline

  & FT & $250{\times}2$
  & ${\color{gray!90}7.15}$ & ${\color{gray!90}34.91}$ & ${\color{gray!90}6.52}$ & ${\color{gray!90}5.96}$ & ${\color{gray!90}3.45}$ & $11.60$ \\[-1.5pt]
  \arrayrulecolor{black!10}\cmidrule(lr){2-9}\arrayrulecolor{black}
  & FT & $8{\times}2$
  & ${\color{gray!90}5.34}$ & ${\color{gray!90}23.27}$ & ${\color{gray!90}6.45}$ & ${\color{gray!90}4.28}$ & ${\color{gray!90}4.80}$ & $9.31$ \\
  & \texttt{MF-T} & $8$
    & ${\color{gray!90}8.55}$ & ${\color{gray!90}33.02}$ & ${\color{gray!90}6.07}$ & ${\color{gray!90}7.19}$ & ${\color{gray!90}3.40}$ & $\underline{\mathbf{11.65}}$ \\
  
  \rowcolor{gray!7}\cellcolor{white}{}& \texttt{MF-T + CAMF} & $8$
    & ${\color{gray!90}7.42}$ & ${\color{gray!90}33.54}$ & ${\color{gray!90}5.84}$ & ${\color{gray!90}5.80}$ & ${\color{gray!90}3.42}$ & $11.20$ \\[-1.5pt]
  \arrayrulecolor{black!10}\cmidrule(lr){2-9}\arrayrulecolor{black}
  & FT & $4{\times}2$
  & ${\color{gray!90}3.10}$ & ${\color{gray!90}12.79}$ & ${\color{gray!90}6.12}$ & ${\color{gray!90}4.77}$ & ${\color{gray!90}5.18}$ & $6.39$ \\
      \cellcolor{white}
    \multirow{-6}{*}{%
      \begin{tabular}{@{}c@{}}
        \textbf{DiT/XL-2}\\[-1pt]
        {\scriptsize \cite{peebles2023scalable}}
      \end{tabular}
    }  
  & \texttt{MF-T} & $4$
    & ${\color{gray!90}6.03}$ & ${\color{gray!90}26.33}$ & ${\color{gray!90}6.41}$ & ${\color{gray!90}5.80}$ & ${\color{gray!90}4.14}$ & $\mathbf{9.74}$ \\
\rowcolor{gray!7}\cellcolor{white}{}& \texttt{MF-T + CAMF} & $4$
    & ${\color{gray!90}4.69}$ & ${\color{gray!90}26.79}$ & ${\color{gray!90}5.93}$ & ${\color{gray!90}4.97}$ & ${\color{gray!90}2.96}$ & $9.07$ \\
  \hline

  \multirow{4}{*}{
  \begin{tabular}{@{}c@{}}
  \textbf{JiT/H-16}\\[-1pt]
  {\scriptsize \cite{li2025back}}
  \end{tabular}}
  & FT & $50{\times}2$
  & ${\color{gray!90}7.43}$ & ${\color{gray!90}28.18}$ & ${\color{gray!90}6.10}$ & ${\color{gray!90}5.57}$ & ${\color{gray!90}3.35}$ & $\underline{10.13}$ \\[-1.5pt]
  \arrayrulecolor{black!10}\cmidrule(lr){2-9}\arrayrulecolor{black}
  & FT & $4{\times}2$
  & ${\color{gray!90}6.75}$ & ${\color{gray!90}18.47}$ & ${\color{gray!90}5.80}$ & ${\color{gray!90}4.51}$ & ${\color{gray!90}3.84}$ & $7.88$ \\
  & \texttt{MF-T} & $4$
  & ${\color{gray!90}6.37}$ & ${\color{gray!90}18.43}$ & ${\color{gray!90}5.46}$ & ${\color{gray!90}5.36}$ & ${\color{gray!90}3.14}$ & $7.75$ \\
\rowcolor{gray!7}\cellcolor{white}{}& \texttt{MF-T + CAMF} & $4$
    & ${\color{gray!90}6.52}$ & ${\color{gray!90}20.31}$ & ${\color{gray!90}5.70}$ & ${\color{gray!90}4.97}$ & ${\color{gray!90}3.11}$ & $\mathbf{8.12}$ \\
  \hline
  \end{tabular}
    \label{tab:main_is}
  \end{table*}

  \begin{table*}[t]
  \centering
  \small
  \setlength{\tabcolsep}{5pt}
  \caption{\textbf{Backbone configurations and hyperparameters.} All models are ImageNet-based and utilize a $256\times256$ resolution. Reference NFE for SiT and DiT is $250\times2$ because the standard reported
  results use 250 denoising steps with classifier-free guidance. Batch sizes are effective (per-device batch $\times$ gradient
  accumulation). }
  \label{tab:hyperparameters}
  \begin{tabular}{@{}l cccc@{}}
  \toprule
  & \textbf{iMF-XL/2} & \textbf{SiT-XL/2} & \textbf{DiT-XL/2} & \textbf{JiT-H/16} \\
  \cmidrule(l){2-5}
  & \multicolumn{4}{c}{\textit{Source model}} \\
  \arrayrulecolor{black!10}\cmidrule(lr){2-5}\arrayrulecolor{black}
  Parameters              & 610M              & 675M + 49M        & 675M + 49M        & 953M \\
  Reference NFE           & $1/2$             & $250\times2$      & $250\times2$      & $50\times2$ \\
  Reference FID           & $1.72/1.54$       & 2.06              & 2.27              & 1.86 \\
  Prediction space        & $u$               & $v$               & $\epsilon$        & $x$ \\
  Loss space              & $v$               & $v$               & $\epsilon$        & $v$ \\
  Sampling space          & latent            & latent            & latent            & pixel \\
  \midrule
  & \multicolumn{4}{c}{\textit{Fine-tuning baseline} (FT)} \\
  \arrayrulecolor{black!10}\cmidrule(lr){2-5}\arrayrulecolor{black}
  Learning rate           & -- & $1\mathrm{e}{-4}$ & $1\mathrm{e}{-4}$ & $3\mathrm{e}{-6}$ \\
  Batch size (eff.)       & --                & 32                & 32                & 32 \\
  Training steps          & --               & 30K               & 30K               & 40K \\
  Adam $\beta_2$          & --              & 0.999             & 0.999             & 0.95 \\
  EMA                     & --                & --                & --                & 0.9998 \\
  \midrule
  & \multicolumn{4}{c}{\textit{MeanFlow transfer} (\texttt{MF-T})} \\
  \arrayrulecolor{black!10}\cmidrule(lr){2-5}\arrayrulecolor{black}
  Velocity map            & --                & --                & $\checkmark$      & $\checkmark$ \\
  Time flip / scale       & -- / --           & -- / --           & \cmark\ / 999     & -- / -- \\
  Learning rate           & $1\mathrm{e}{-4}$ & $1\mathrm{e}{-4}$ & $1\mathrm{e}{-4}$ & $1\mathrm{e}{-5}$ \\
  Batch size              & 32                & 32                & 32                & 16 \\
  Training steps          & 40K               & 30K               & 30K               & 40K \\
  Adam $\beta_2$          & 0.95              & 0.95              & 0.95              & 0.95 \\
  Gradient clip           & --                & --                & $\checkmark$      & $\checkmark$ \\
  Guidance $w$            & $[1,8]$           & 1.5               & 1.5               & 2.2 \\
  \midrule
  & \multicolumn{4}{c}{\textit{Continuous Adversarial MeanFlow} (\texttt{CAMF})} \\
  \arrayrulecolor{black!10}\cmidrule(lr){2-5}\arrayrulecolor{black}
  Generator / disc.\ LR   & $1\mathrm{e}{-5}$ & $1\mathrm{e}{-5}$ & $1\mathrm{e}{-5}$ & $1\mathrm{e}{-6}$ \\
  Batch size              & 2                 & 2                 & 2                 & 2 \\
  Generator updates       & 30K               & 30K               & 30K               & 30K \\
  Discriminator updates   & \multicolumn{4}{c}{$4\times$ per generator update ($+5$K warm-up)} \\
  $\lambda_{\mathrm{adv}}$ & 1.0              & 1.0               & 1.0               & 1.0 \\
  $\lambda_{\mathrm{ot}}$  & 0                & 0                 & 0                 & 0 \\
  $\lambda_{\mathrm{cp}}$  & 0.001            & 0.001             & 0.001             & 0.001 \\
  Interval $\epsilon$     & 0.001             & 0.001             & 0.001             & 0.001 \\
  Adam $\beta_1$          & 0.0               & 0.0               & 0.0               & 0.0 \\
  Adam $\beta_2$          & 0.95              & 0.95              & 0.95              & 0.95 \\
  Weight decay            & 0                 & 0                 & 0                 & 0 \\
  EMA decay               & 0.9999            & 0.9999            & 0.9999            & 0.9999 \\
  Class dropout           & 0.1               & 0.1               & 0.1               & 0.1 \\
  Guidance $w$            & $[1,8]$           & 1.5               & 1.5               & 2.2 \\
  \midrule
  & \multicolumn{4}{c}{\textit{Evaluation operating point}} \\
  \arrayrulecolor{black!10}\cmidrule(lr){2-5}\arrayrulecolor{black}
  Sampling $\omega$       & 7.5               & 1.5               & 1.5               & 2.2 \\
  Sampling interval       & $[0.4,\,0.65]$    & $[0,\,1]$         & $[0,\,1]$         & $[0.1,\,1.0]$ \\
  Sampled NFE           & 1/2/4             & 1/2/4      & 4/8      & 4 \\
  \bottomrule
  \end{tabular}
  \end{table*}


\subsection{Computational Cost}
    \label{subsec:compute_cost}

    Tab.~\ref{tab:cost} reports training and inference cost side by side on
    CUB-200. All runs use a single NVIDIA H100 80GB SXM GPU. Latent-space families
    (\texttt{iMF}, \texttt{SiT}, \texttt{DiT}) additionally pre-encode the target
    dataset with the frozen VAE, a one-time cost excluded from the per-stage
    numbers. Training wall-clock includes periodic in-training evaluation, which we
    state explicitly because it accounts for a non-trivial fraction of the total.

\subsection{Additional Results}
\label{sec_x:results}

\noindent\textbf{Inception Score.}
\label{subsubsec:inception_score}
Tab.~\ref{tab:main_is} reports Inception Score (IS)
as a complementary measure of sample quality and diversity. All scores are computed from the same evaluation setting used for the FID and FDD evaluations in the main body. The IS results generally support the trends observed with FID
and FDD. Since IS biased towards ImageNet-like distributions, it should be treated only as a supporting metric, and not a replacement to FID and FDD.

\noindent\textbf{Additional Qualitative Results.}
\label{subsubsec:additional_qualitative}
Fig.~\ref{fig:qualitative-appendix} presents additional
uncurated samples produced by \texttt{MF-T+CAMF} across the four
source-model families and five target datasets. The samples
show that the framework transfers across heterogeneous source
parameterizations while preserving target-domain structure and
fine-grained visual details at low NFE.

\newcommand{\dittwolinenfeZ}{\includegraphics[width=\colw,height=\colw,keepaspectratio]{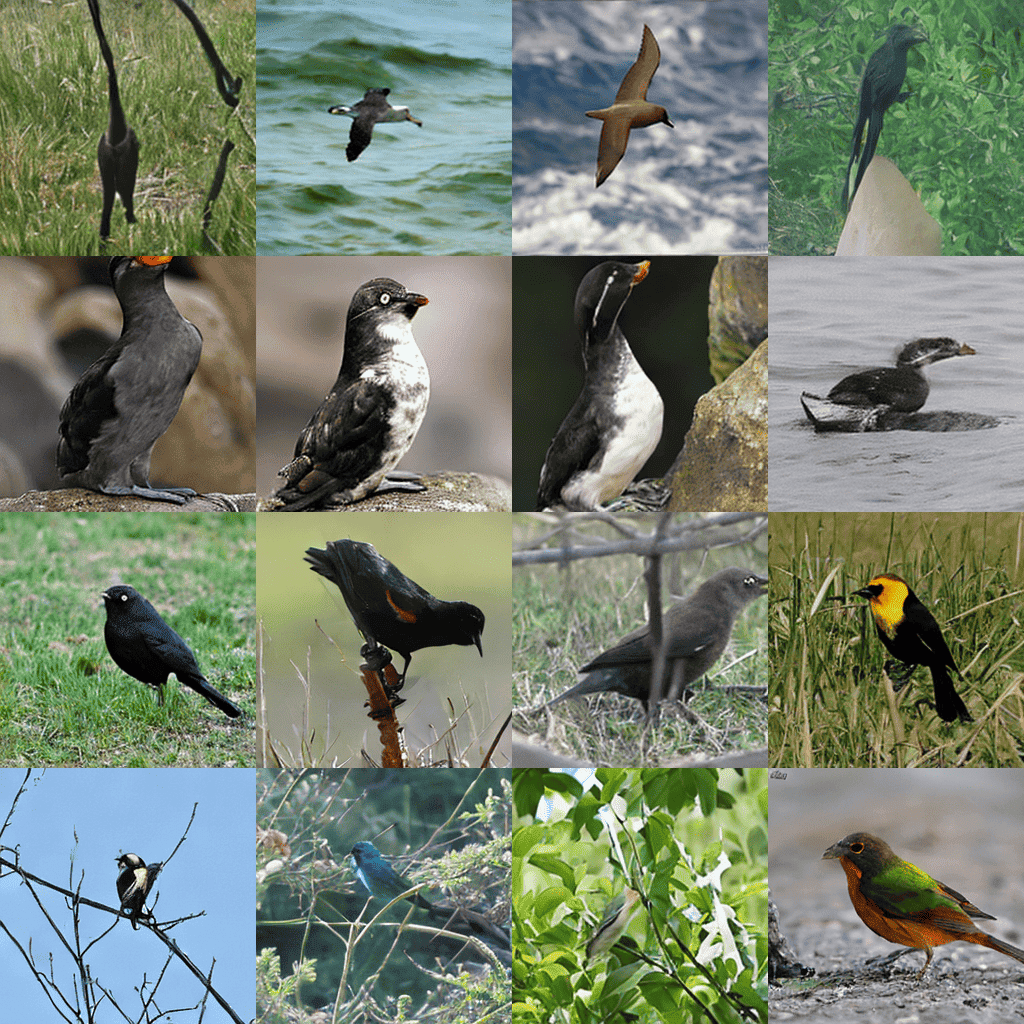}}

\newcommand{\jittwolinenfeZ}{\includegraphics[width=\colw,height=\colw,keepaspectratio]{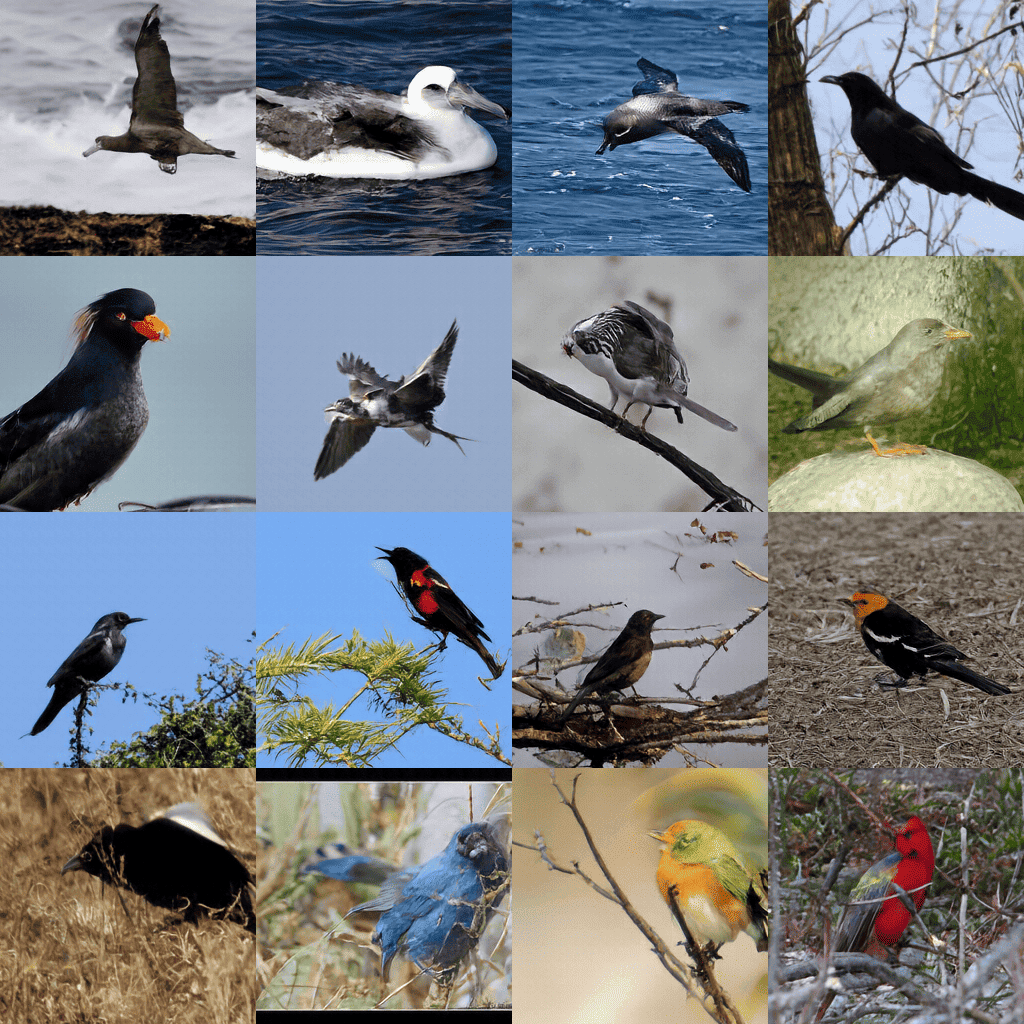}}

\newcommand{\imftwolinesfournfeZ}{\includegraphics[width=\colw,height=\colw,keepaspectratio]{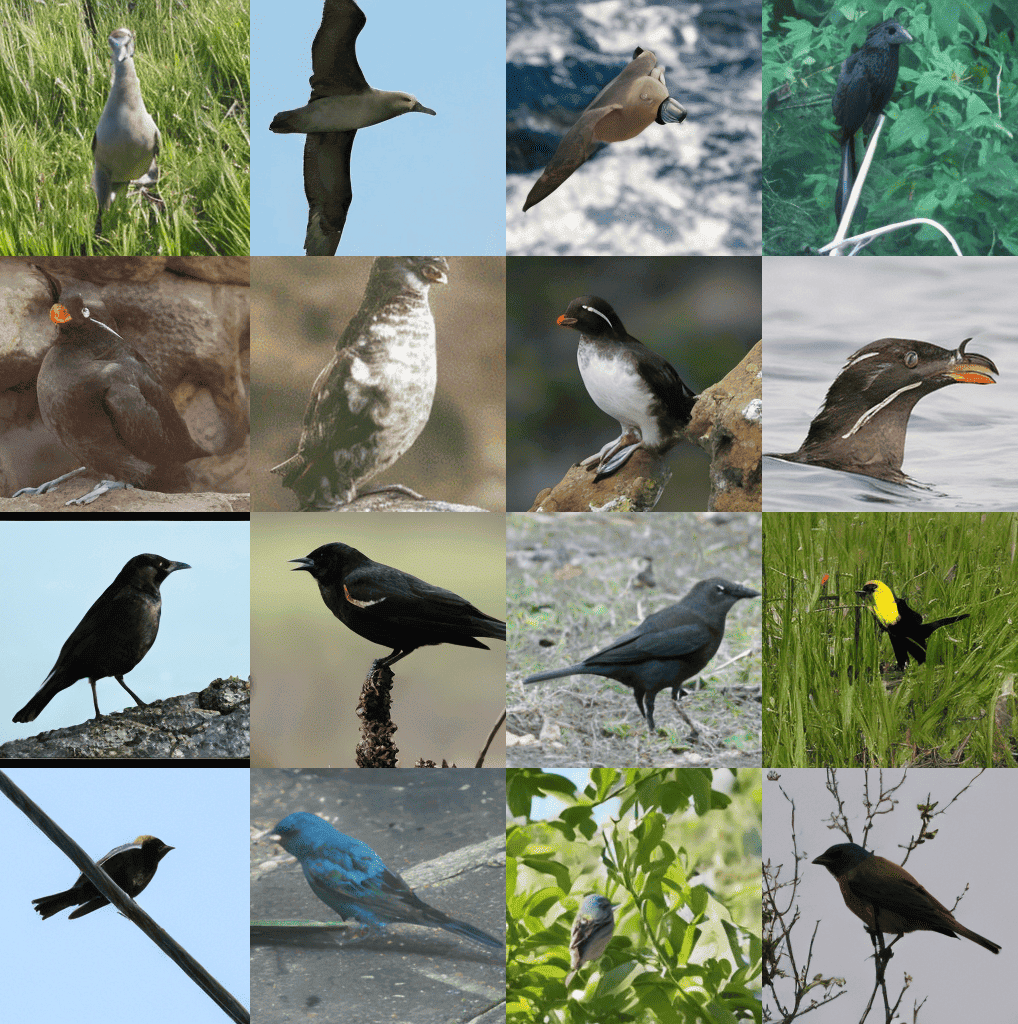}}

\newcommand{\sittwolinenfeZ}{\includegraphics[width=\colw,height=\colw,keepaspectratio]{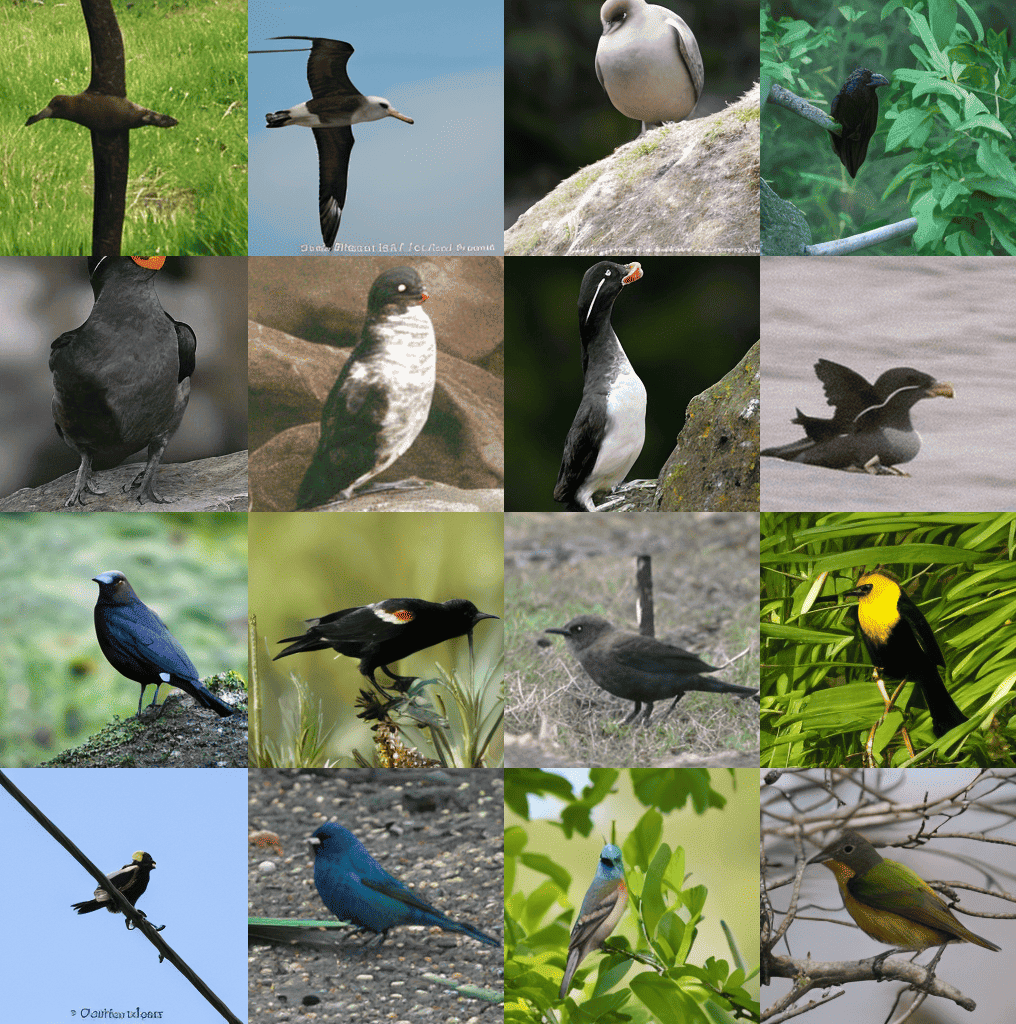}}

\newcommand{\dittwolinenfeartZ}{\includegraphics[width=\colw,height=\colw,keepaspectratio]{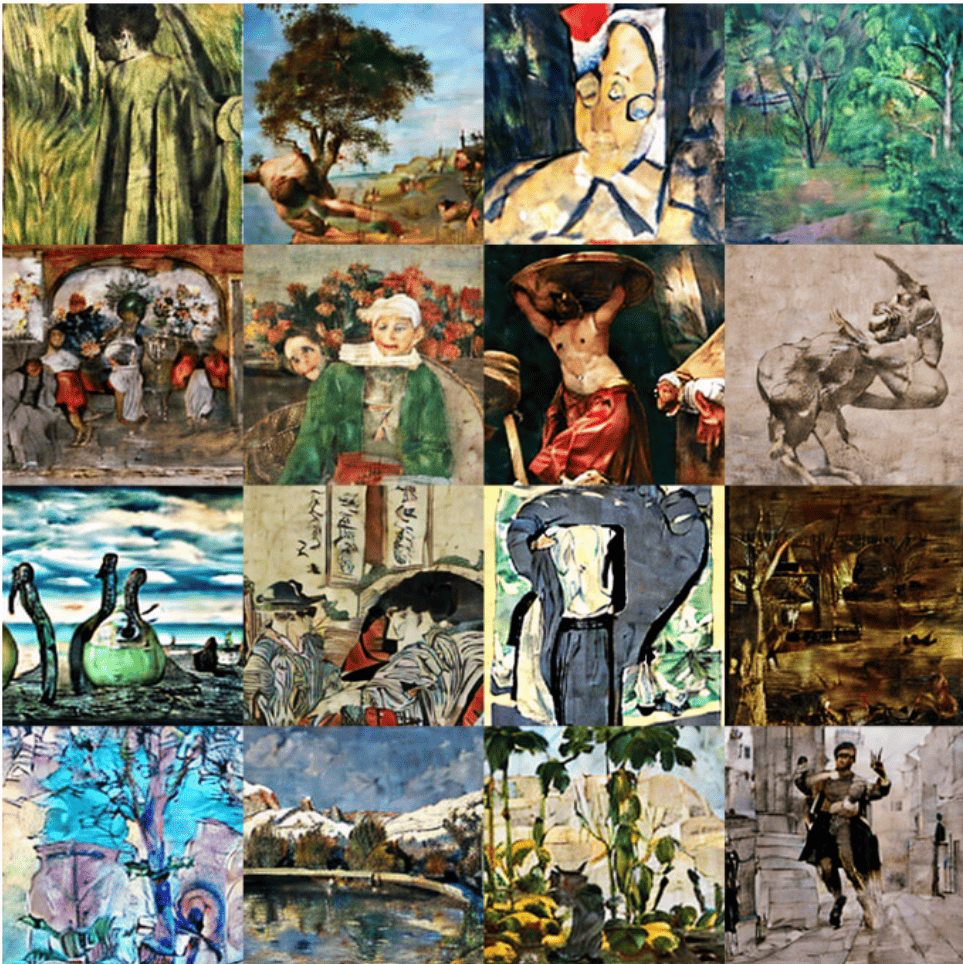}}

\newcommand{\jittwolinenfeartZ}{\includegraphics[width=\colw,height=\colw,keepaspectratio]{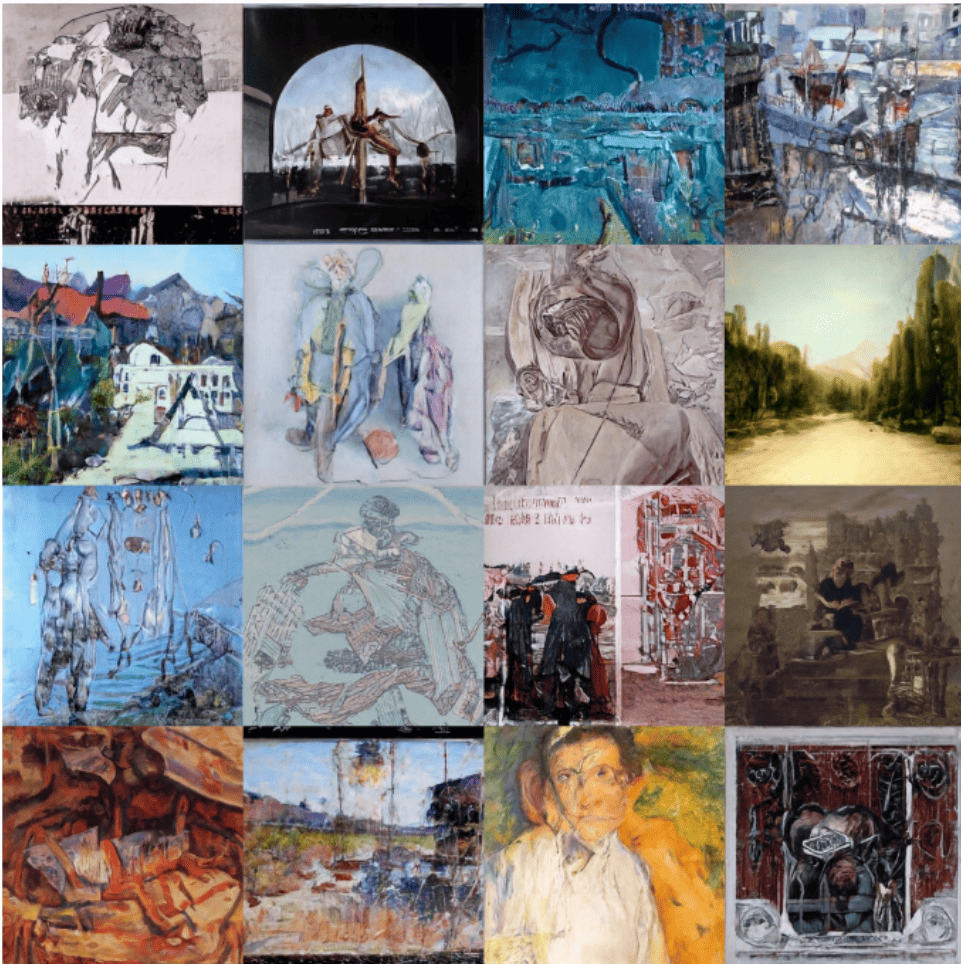}}

\newcommand{\imftwolinesfournfeartZ}{\includegraphics[width=\colw,height=\colw,keepaspectratio]{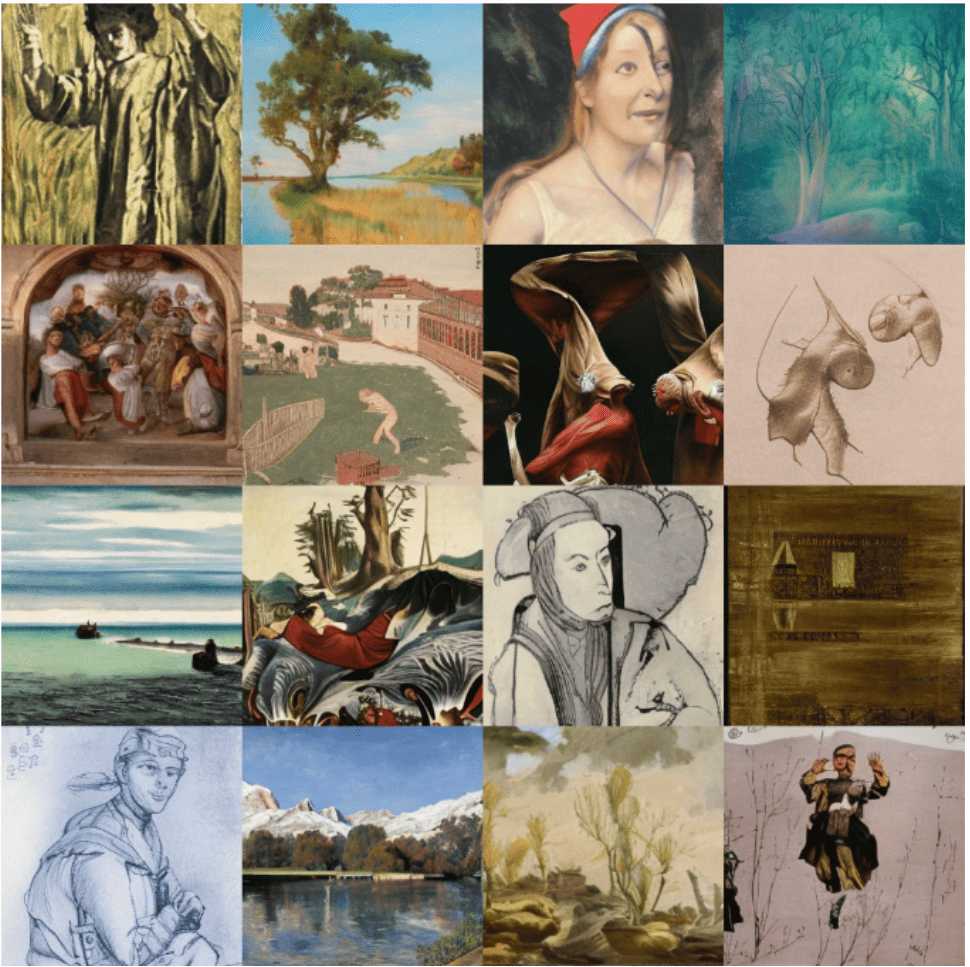}}

\newcommand{\sittwolinenfeartZ}{\includegraphics[width=\colw,height=\colw,keepaspectratio]{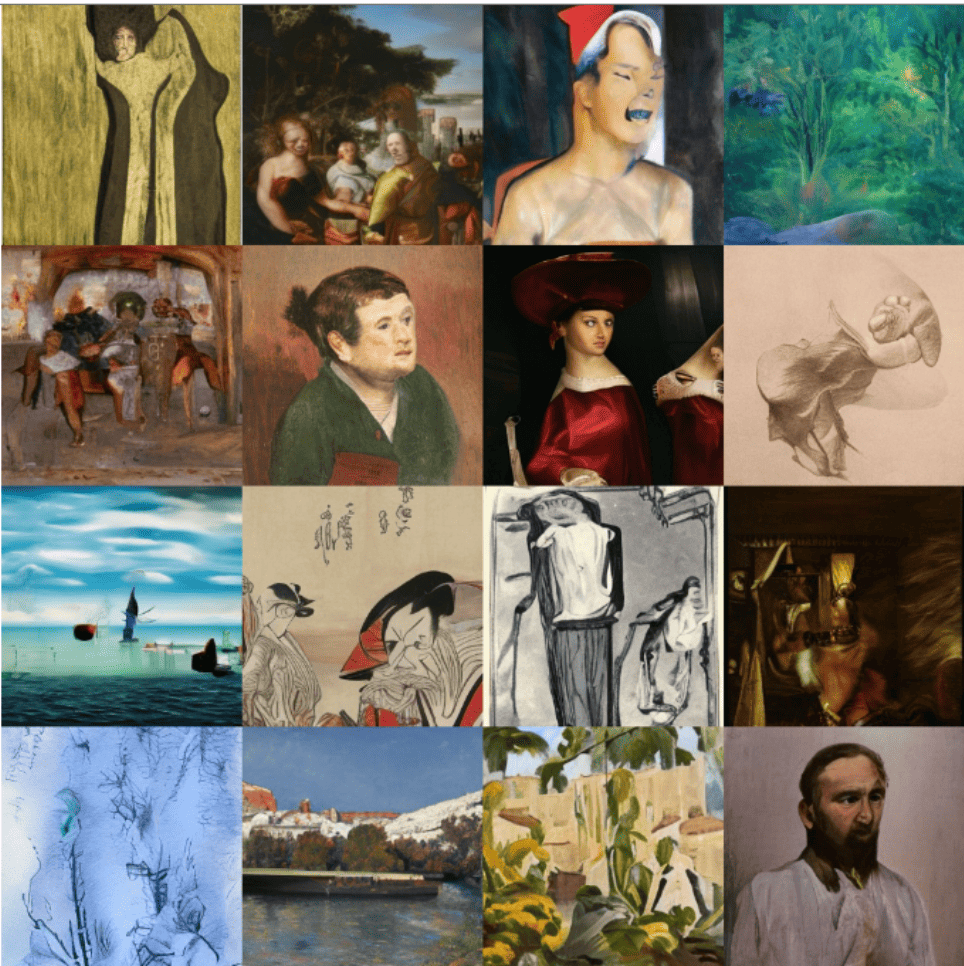}}

\newcommand{\dittwolinenfecarsZ}{\includegraphics[width=\colw,height=\colw,keepaspectratio]{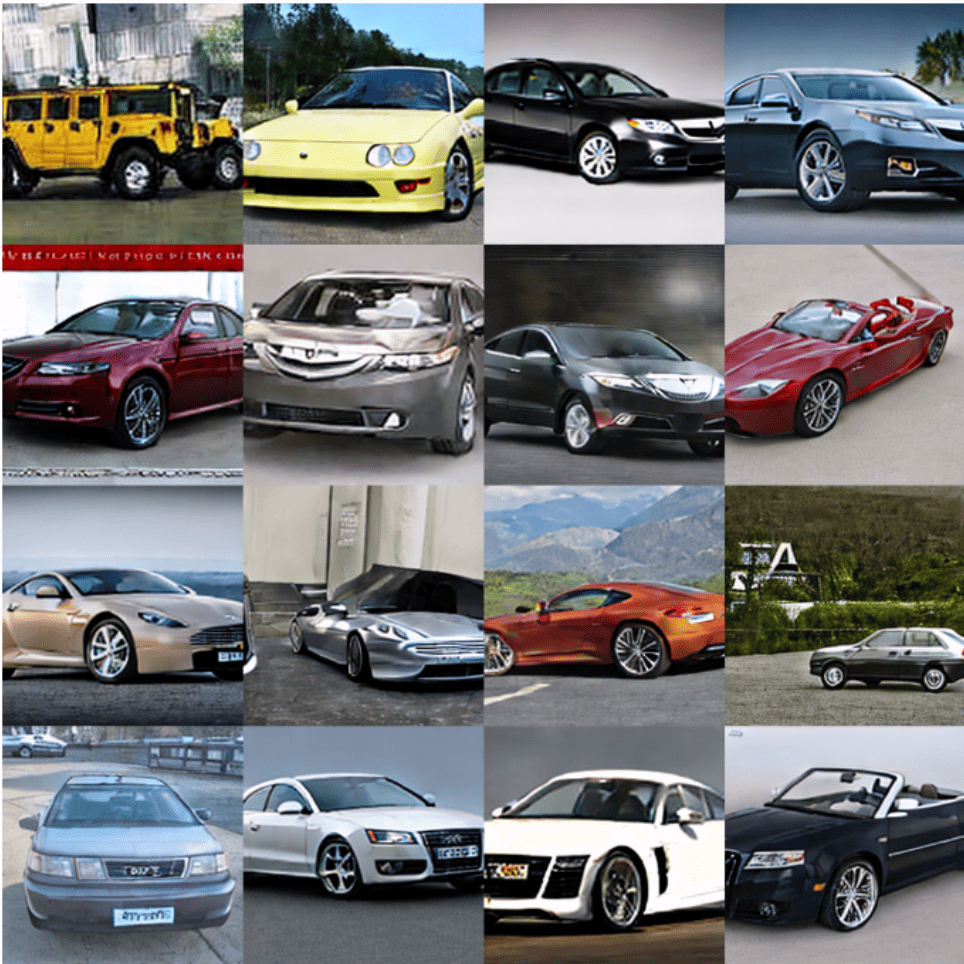}}

\newcommand{\jittwolinenfecarsZ}{\includegraphics[width=\colw,height=\colw,keepaspectratio]{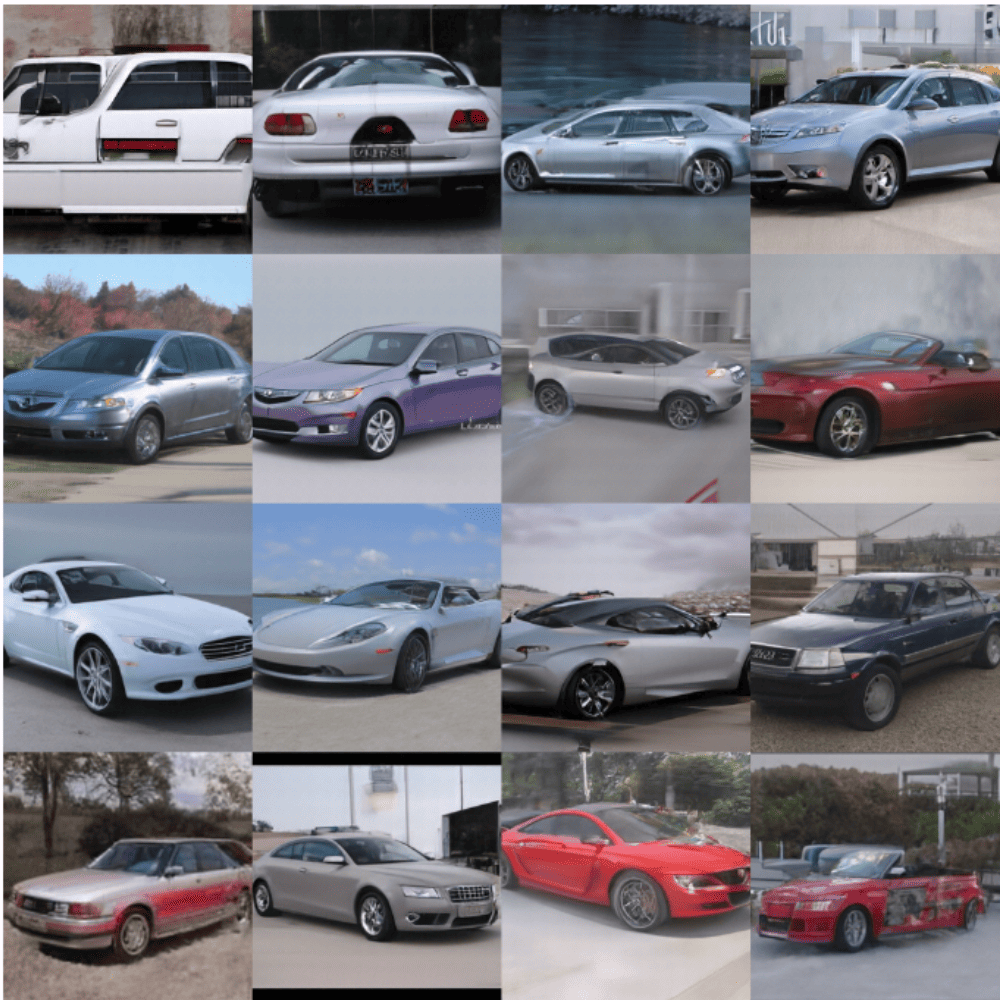}}

\newcommand{\imftwolinesfournfecarsZ}{\includegraphics[width=\colw,height=\colw,keepaspectratio]{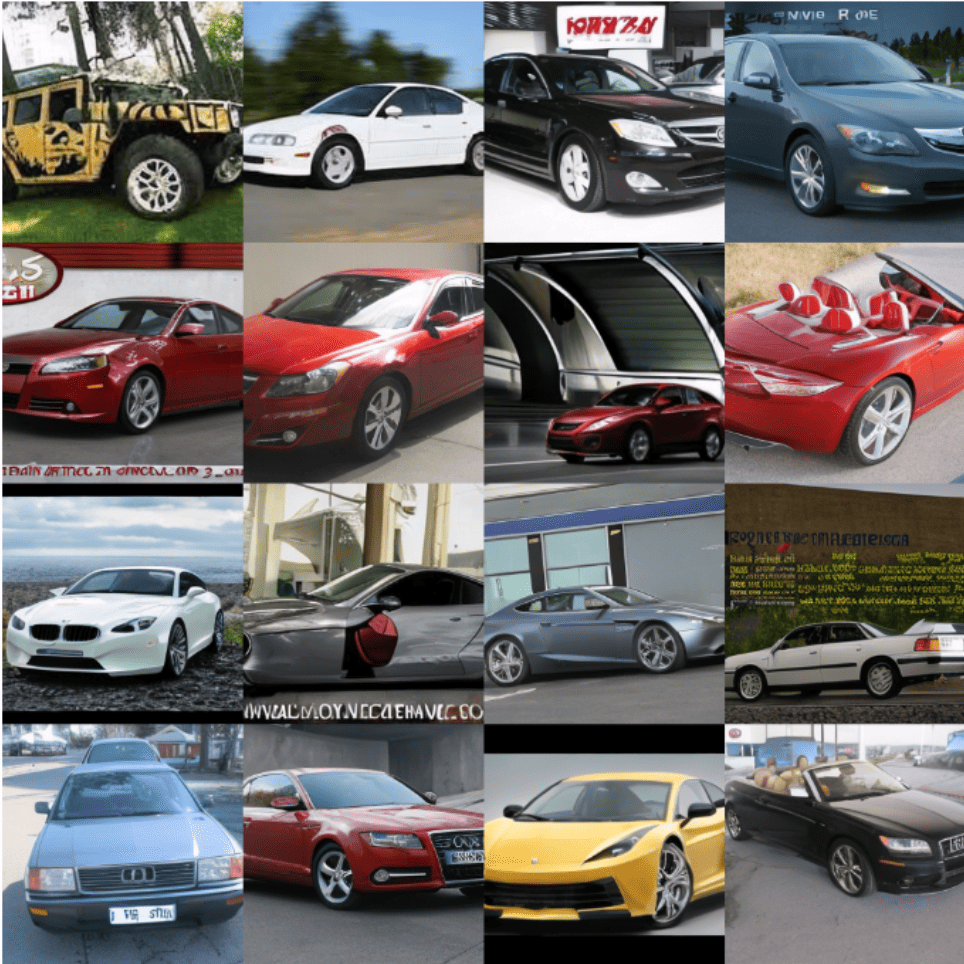}}

\newcommand{\sittwolinenfecarsZ}{\includegraphics[width=\colw,height=\colw,keepaspectratio]{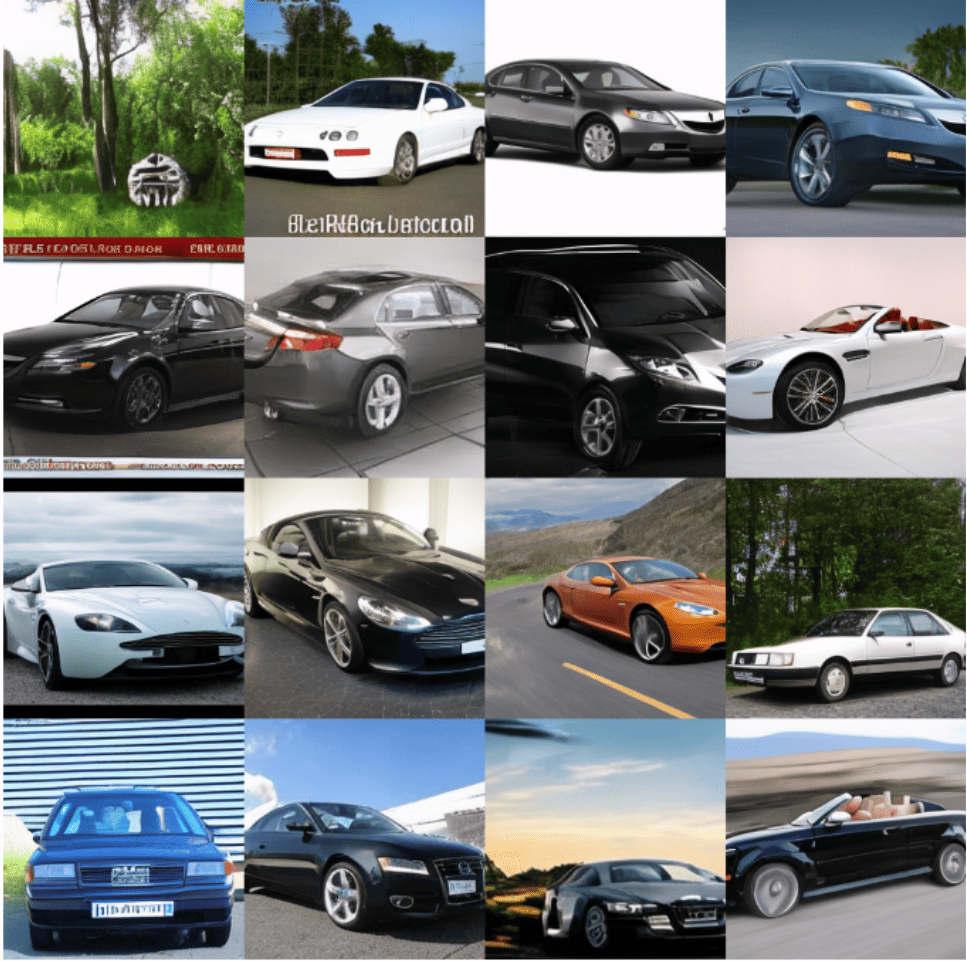}}

\newcommand{\dittwolinenfecaltZ}{\includegraphics[width=\colw,height=\colw,keepaspectratio]{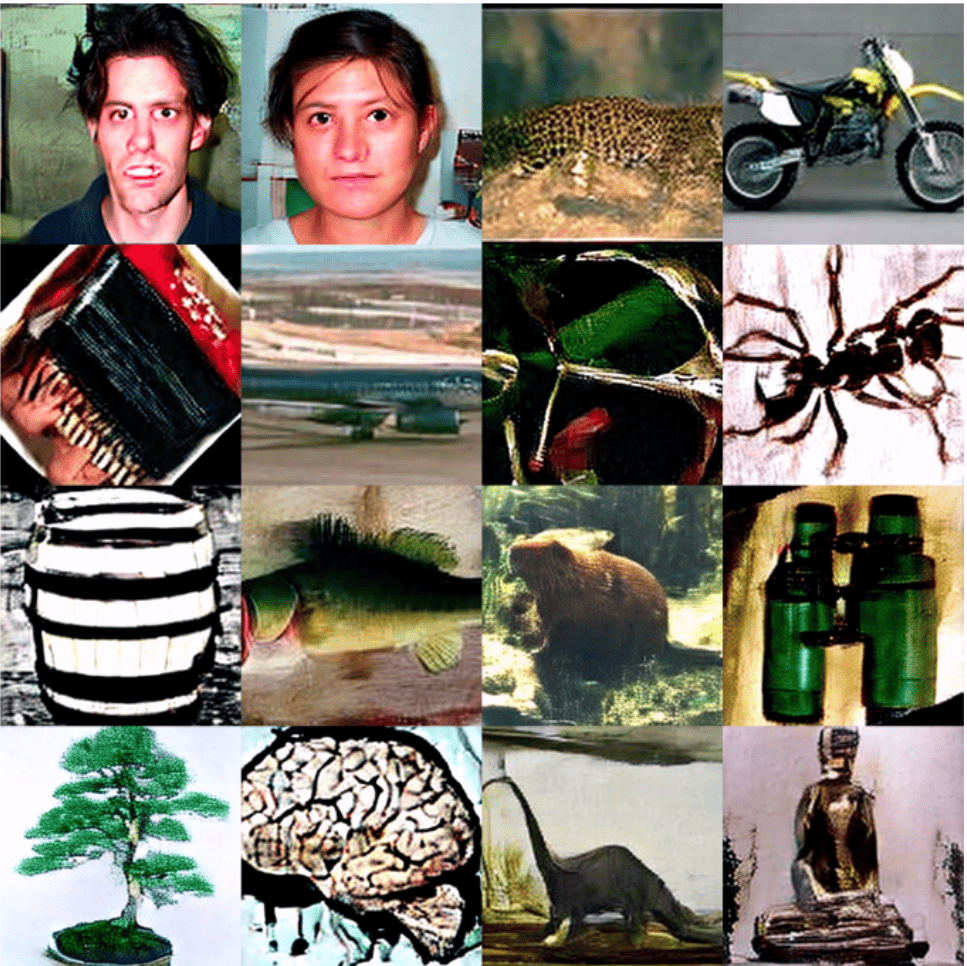}}

\newcommand{\jittwolinenfecaltZ}{\includegraphics[width=\colw,height=\colw,keepaspectratio]{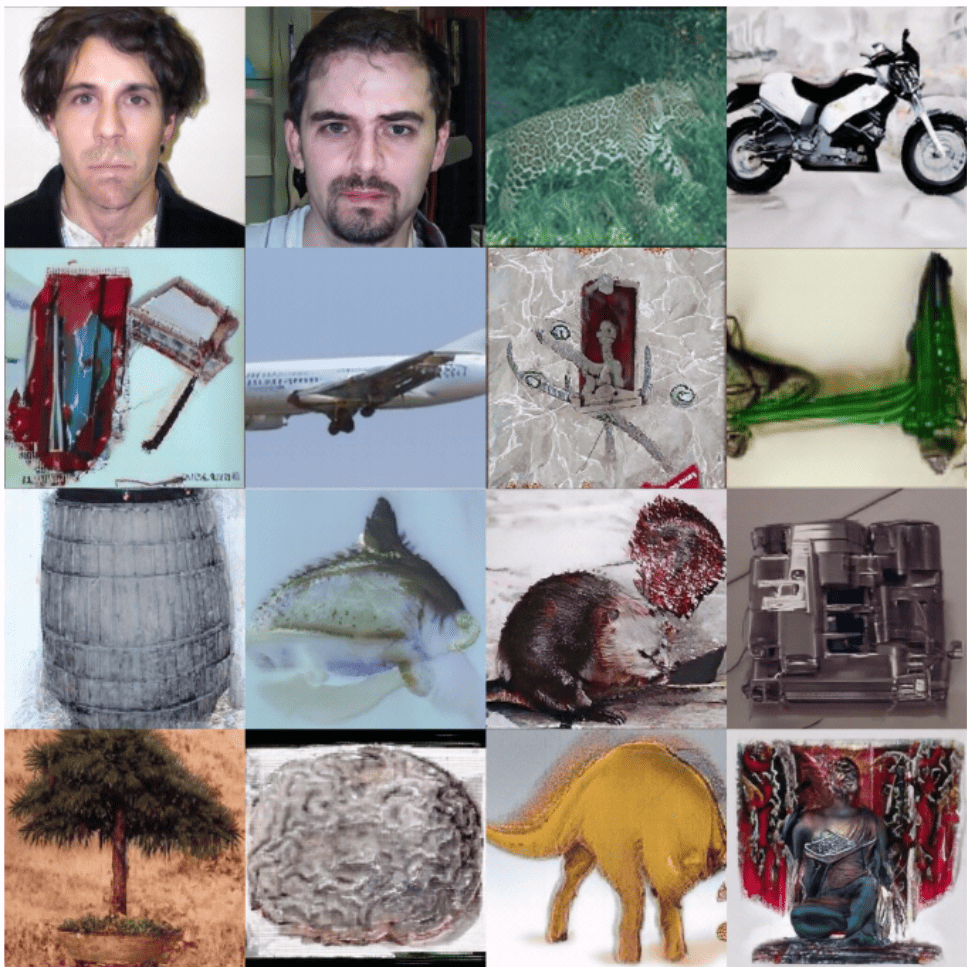}}

\newcommand{\imftwolinesfournfecaltZ}{\includegraphics[width=\colw,height=\colw,keepaspectratio]{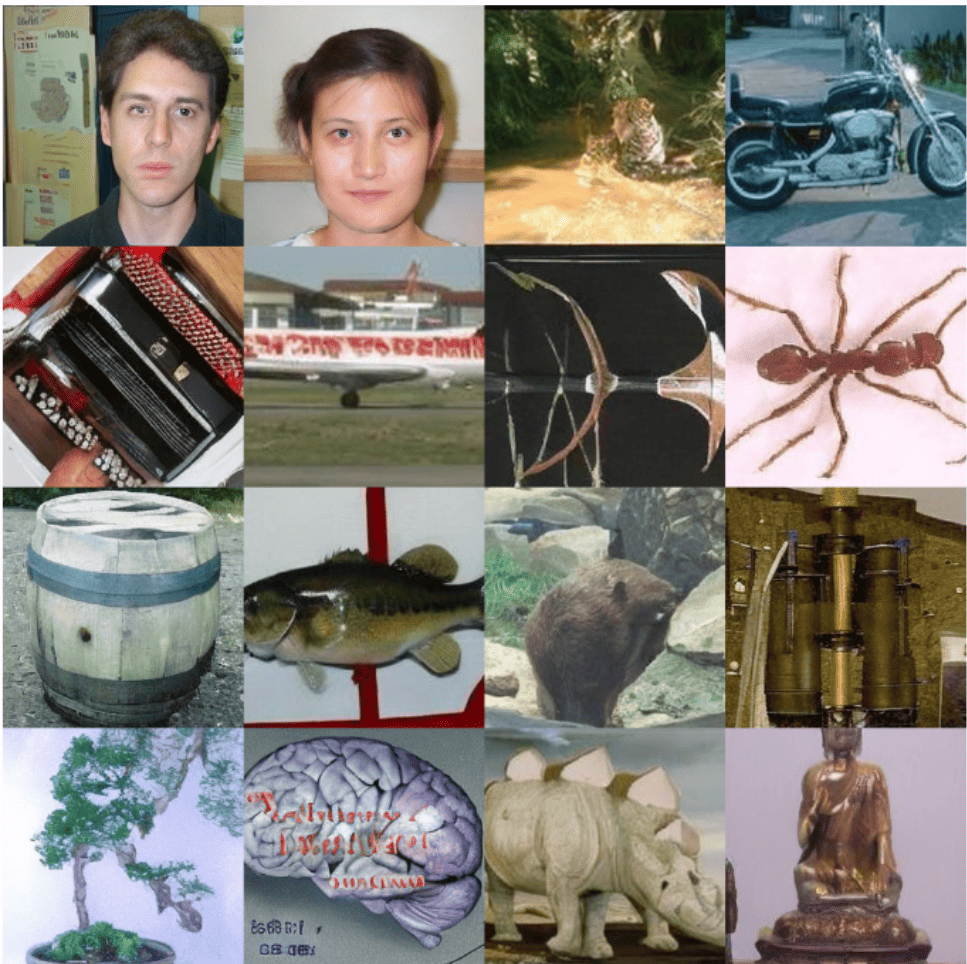}}

\newcommand{\sittwolinenfecaltZ}{\includegraphics[width=\colw,height=\colw,keepaspectratio]{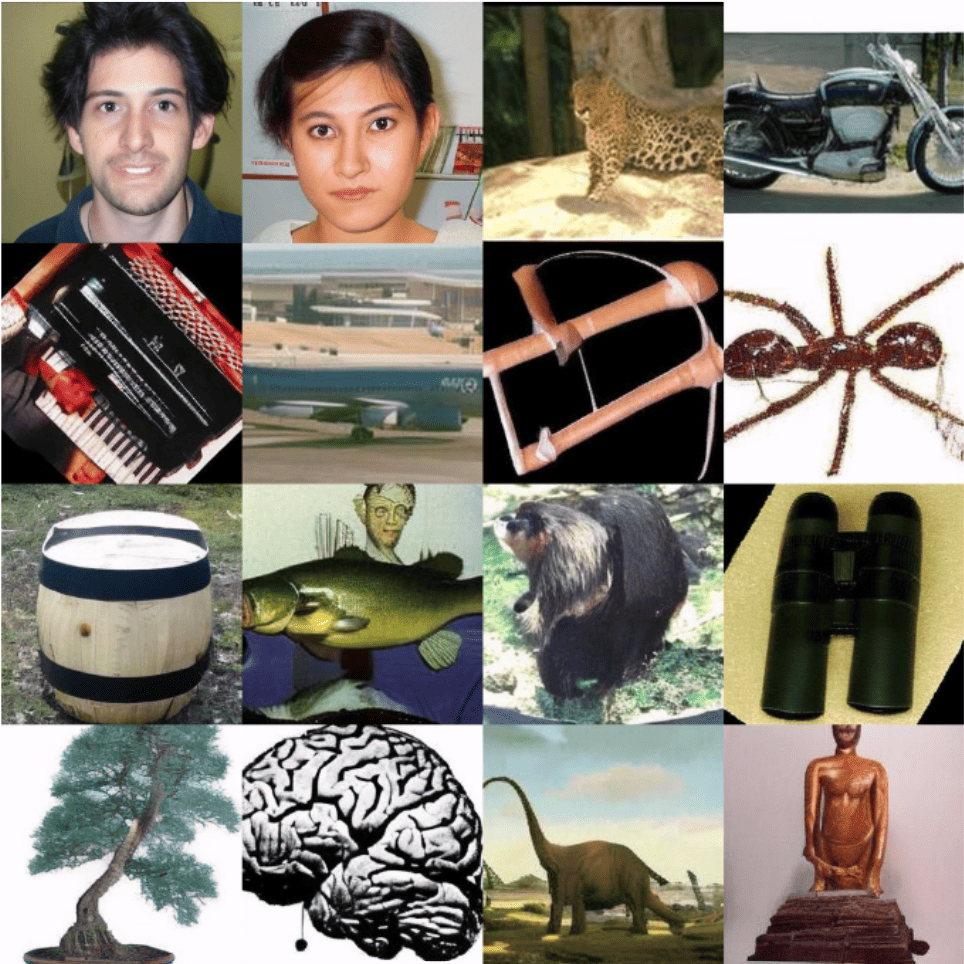}}

\newcommand{\dittwolinenfefoodZ}{\includegraphics[width=\colw,height=\colw,keepaspectratio]{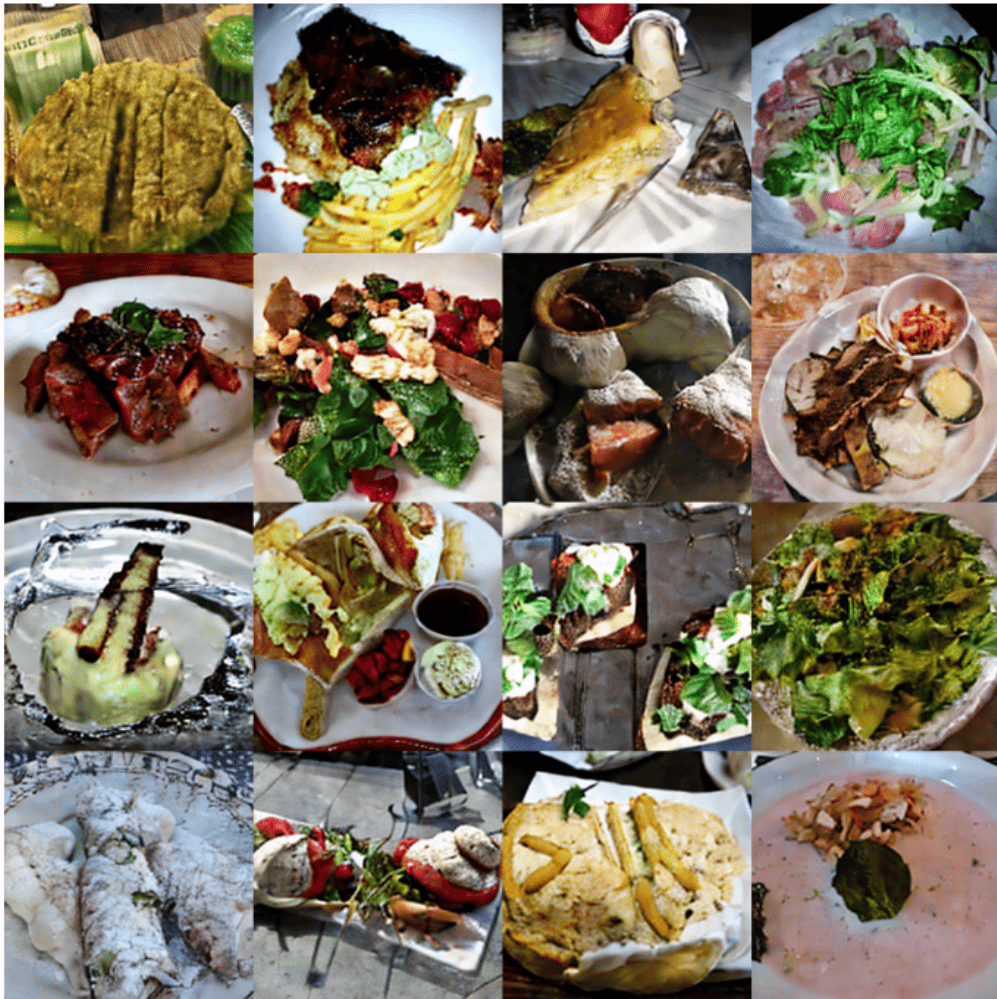}}

\newcommand{\jittwolinenfefoodZ}{\includegraphics[width=\colw,height=\colw,keepaspectratio]{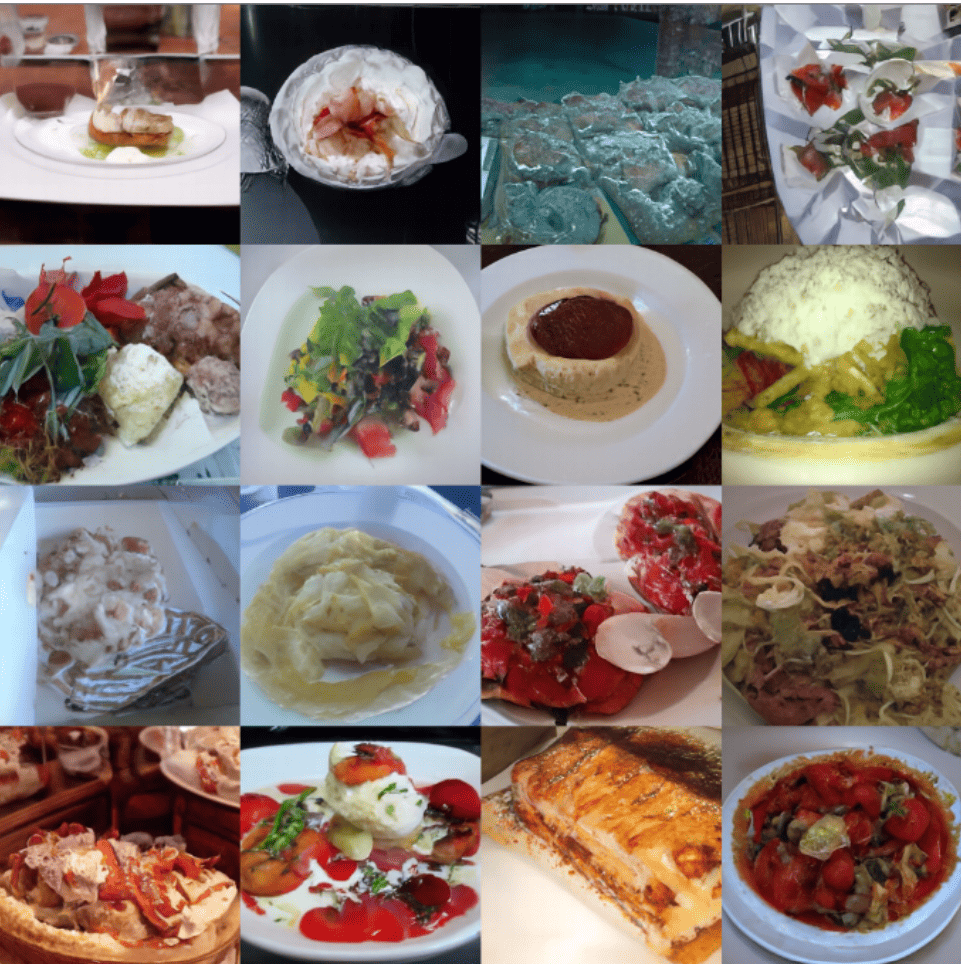}}

\newcommand{\imftwolinesfournfefoodZ}{\includegraphics[width=\colw,height=\colw,keepaspectratio]{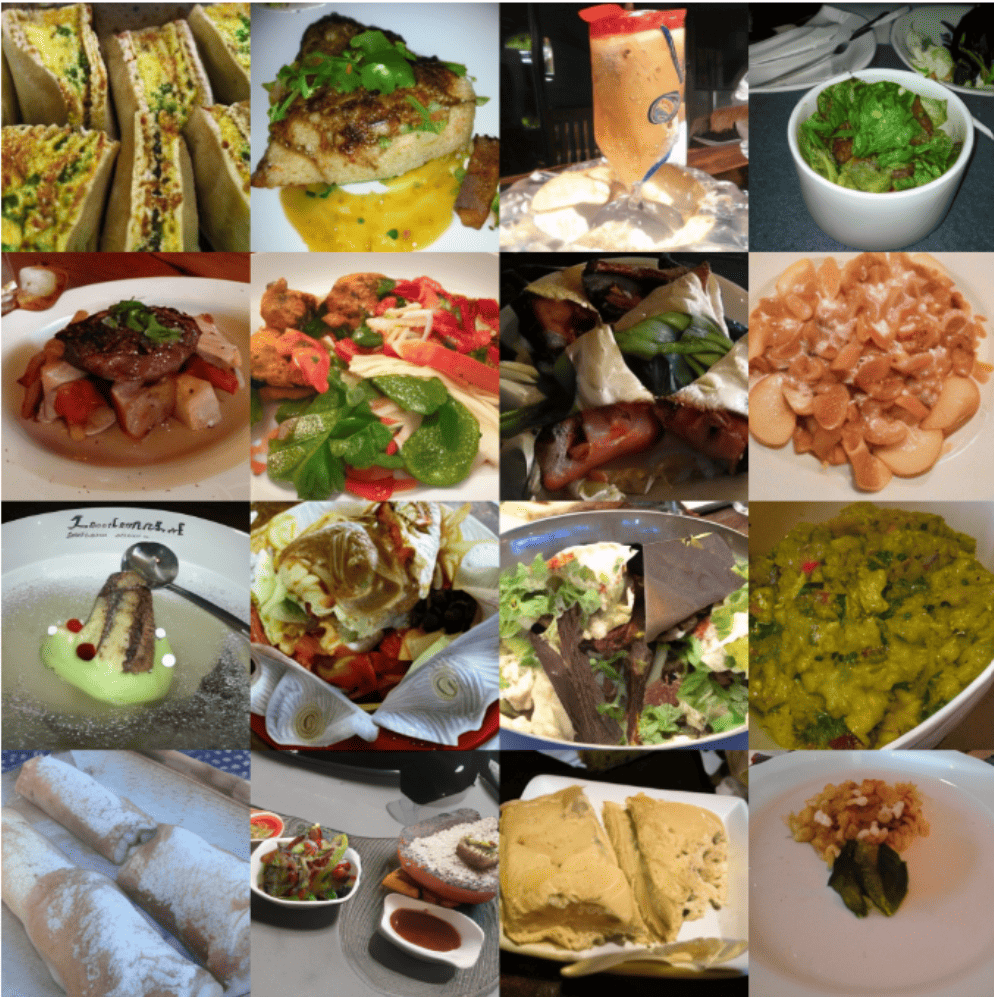}}

\newcommand{\sittwolinenfefoodZ}{\includegraphics[width=\colw,height=\colw,keepaspectratio]{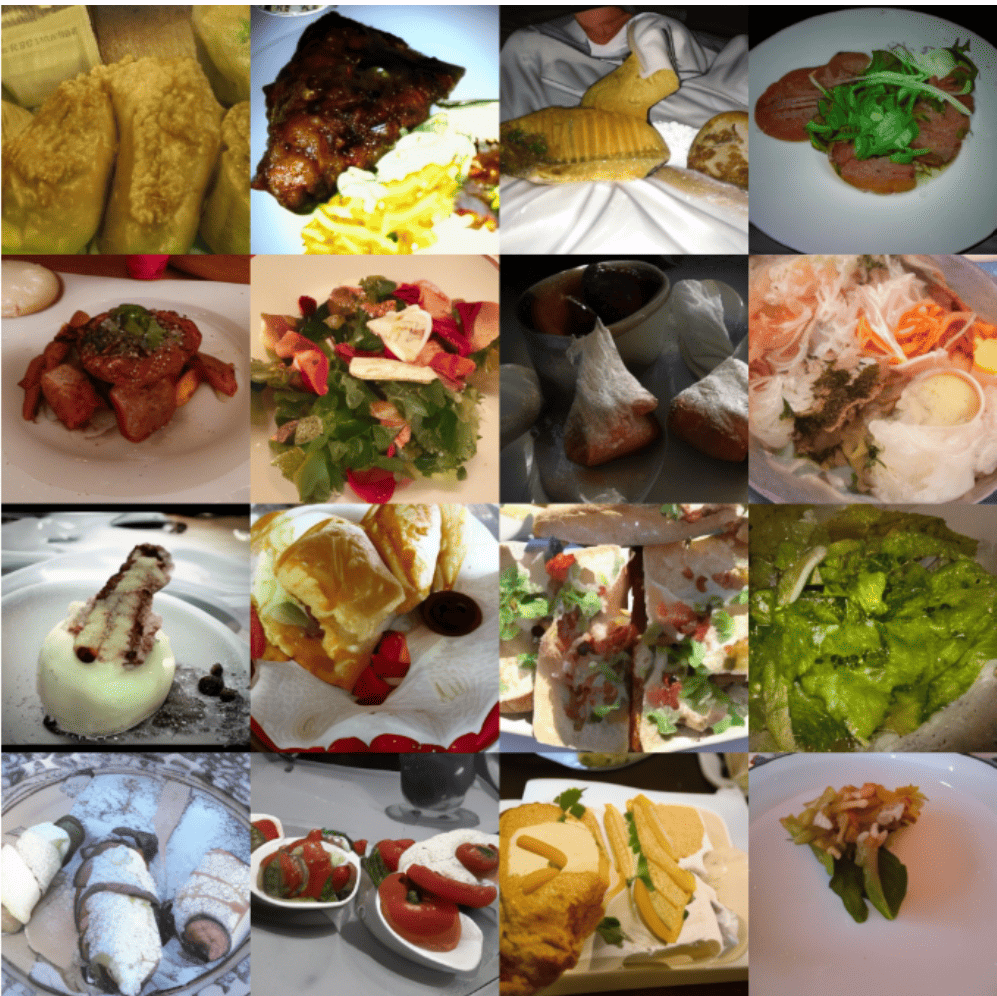}}
\newcolumntype{Z}[1]{>{\centering\arraybackslash}m{#1}}

\begin{figure*}[t]
\centering
\scriptsize

\caption{Target-domain generated images using \texttt{MF-T+CAMF} initialized from different source models. Images are uncurated.} \vspace{-5pt}
\label{fig:qualitative-appendix}

\begin{tabularx}{\linewidth}{
    @{}
    Z{0.03\textwidth}
    Y Y Y Y
    @{}
}
\toprule
\textbf{Dataset}
&
\textbf{iMF-XL/2} $(4~\text{NFE})$ 
&
\textbf{SiT-XL/2} $(4~\text{NFE})$ 
&
\textbf{DiT-XL/2} $(4~\text{NFE})$
&
\textbf{JiT-H/16} $(4~\text{NFE})$ 

\\

\midrule
\textbf{ArtB.}
&
\imftwolinesfournfeartZ
&
\sittwolinenfeartZ
&
\dittwolinenfeartZ
&
\jittwolinenfeartZ
\\[-2pt]

\midrule
\textbf{Calt.}
&
\imftwolinesfournfecaltZ
&
\sittwolinenfecaltZ
&
\dittwolinenfecaltZ
&
\jittwolinenfecaltZ
\\[-2pt]

\midrule
 \textbf{CUB}
&
\imftwolinesfournfeZ
&
\sittwolinenfeZ
&
\dittwolinenfeZ
&
\jittwolinenfeZ
\\[-2pt]

\midrule
\textbf{Food}
&
\imftwolinesfournfefoodZ
&
\sittwolinenfefoodZ
&
\dittwolinenfefoodZ
&
\jittwolinenfefoodZ
\\[-2pt]

\midrule
\textbf{Cars}
&
\imftwolinesfournfecarsZ
&
\sittwolinenfecarsZ
&
\dittwolinenfecarsZ
&
\jittwolinenfecarsZ
\\[-2pt]

\bottomrule
\end{tabularx}

\vspace{2pt}
\end{figure*}

  \begin{figure*}[t!]
  \centering
  \begin{minipage}[t]{0.485\textwidth}

  \hrule
  \vspace{3pt}
  \refstepcounter{algorithm}\label{alg:mfa_init}
  {\bfseries Algorithm~\thealgorithm:} \texttt{MF-T} initialization.
  \vspace{3pt}
  \hrule
  \vspace{6pt}
  {\ttfamily\small
  \textcolor{algblue}{\# src: pretrained source model}\\
  \textcolor{algblue}{\# param\_type: one of \{x, eps, v, u\}}\\
  \textcolor{algblue}{\# T, flip: source time convention}\\[1em]
  \textcolor{algviolet}{def} \textcolor{algpink}{map\_time}(t):\\
  \hspace*{1.5em}\textcolor{algblue}{\# flow time -> source time}\\
  \hspace*{1.5em}\textcolor{algviolet}{return} ((1 \textcolor{algviolet}{-} t) \textcolor{algviolet}{if} flip \textcolor{algviolet}{else} t) \textcolor{algviolet}{*} T\\[1em]
  \textcolor{algviolet}{class} \textcolor{algpink}{VelocityWrapper}(src, param\_type):\\
  \hspace*{1.5em}\textcolor{algviolet}{def} \textcolor{algpink}{forward}(z, t, c):\\
  \hspace*{3em}s \textcolor{algviolet}{=} \textcolor{algpink}{map\_time}(t)\\[1em]
  \hspace*{3em}\textcolor{algviolet}{if} param\_type \textcolor{algviolet}{==} "x":\\
  \hspace*{4.5em}x\_hat \textcolor{algviolet}{=} \textcolor{algpink}{src}(z, s, c)\\
  \hspace*{4.5em}\textcolor{algviolet}{return} (z \textcolor{algviolet}{-} x\_hat) \textcolor{algviolet}{/} t\\[1em]
  \hspace*{3em}\textcolor{algviolet}{elif} param\_type \textcolor{algviolet}{==} "eps":\\
  \hspace*{4.5em}eps\_hat \textcolor{algviolet}{=} \textcolor{algpink}{src}(z, s, c)\\
  \hspace*{4.5em}\textcolor{algviolet}{return} (eps\_hat \textcolor{algviolet}{-} z) \textcolor{algviolet}{/} (1 \textcolor{algviolet}{-} t)\\
  \hspace*{3em}\textcolor{algviolet}{elif} param\_type \textcolor{algviolet}{==} "v":\\
  \hspace*{4.5em}\textcolor{algviolet}{return} \textcolor{algpink}{src}(z, s, c)\\[1em]
  \hspace*{3em}\textcolor{algviolet}{elif} param\_type \textcolor{algviolet}{==} "u":\\
  \hspace*{4.5em}\textcolor{algviolet}{return} \textcolor{algpink}{src}(z, s, s, c)\\
  \hspace*{4.5em}\textcolor{algblue}{\# or use velocity head}\\[1em]
  src \textcolor{algviolet}{=} \textcolor{algpink}{VelocityWrapper}(src, param\_type)\\
  tgt \textcolor{algviolet}{=} \textcolor{algpink}{deep\_copy}(src)\\
  tgt \textcolor{algviolet}{=} \textcolor{algpink}{add\_r\_w\_c\_inputs}(tgt)\\
  }
  \hrule
  \end{minipage}
  \hfill
  \begin{minipage}[t]{0.485\textwidth}
  \hrule
  \vspace{3pt}
  \refstepcounter{algorithm}\label{alg:mfa_train}
  {\bfseries Algorithm~\thealgorithm:} \texttt{MF-T} mid-training.
  \vspace{3pt}
  \hrule
  \vspace{6pt}
  {\ttfamily\small
  \textcolor{algblue}{\# src: velocity-wrapped source model}\\
  \textcolor{algblue}{\# tgt: target MeanFlow model}\\
  \textcolor{algblue}{\# x: target-domain training batch}\\
  \textcolor{algblue}{\# c: condition batch}\\[1em]
  t, r, w \textcolor{algviolet}{=} \textcolor{algpink}{sample\_t\_r\_cfg}()\\
  e \textcolor{algviolet}{=} \textcolor{algpink}{randn\_like}(x)\\
  v\_star \textcolor{algviolet}{=} e \textcolor{algviolet}{-} x \\[1em]
  z \textcolor{algviolet}{=} (1 \textcolor{algviolet}{-} t) \textcolor{algviolet}{*} x \textcolor{algviolet}{+} t \textcolor{algviolet}{*} e\\[1em]
  v\_c \textcolor{algviolet}{=} \textcolor{algpink}{tgt}(z, t, t, w, c)\\
  v\_u \textcolor{algviolet}{=} \textcolor{algpink}{tgt}(z, t, t, w, None)\\
  \textcolor{algblue}{\# DogFit variant: v\_u = src(z, t, None)}\\[1em]
  v\_g \textcolor{algviolet}{=} \textcolor{algpink}{stopgrad}(v\_star \textcolor{algviolet}{+} k \textcolor{algviolet}{*} (v\_c \textcolor{algviolet}{-} v\_u))\\
  c \textcolor{algviolet}{=} \textcolor{algpink}{cond\_drop}(c)\\[1em]
  f \textcolor{algviolet}{=} \textcolor{algviolet}{lambda} z, t, r: \textcolor{algpink}{tgt}(z, t, t \textcolor{algviolet}{-} r, w, c)\\
  (u, v), dudt \textcolor{algviolet}{=} \textcolor{algpink}{jvp}(f, (z, t, r), \\ \hspace*{8.5em}(\textcolor{algpink}{stopgrad}(v\_c), 1, 0))\\[1em]
  V \textcolor{algviolet}{=} u \textcolor{algviolet}{+} (t \textcolor{algviolet}{-} r) \textcolor{algviolet}{*} \textcolor{algpink}{stopgrad}(dudt)\\[1em]
  loss \textcolor{algviolet}{=} \textcolor{algpink}{adp}(\textcolor{algpink}{sq}(V \textcolor{algviolet}{-} v\_g)) \textcolor{algviolet}{+} \textcolor{algpink}{adp}(\textcolor{algpink}{sq}(v \textcolor{algviolet}{-} v\_g))\\
      loss \textcolor{algviolet}{=} \textcolor{algpink}{adp}(\textcolor{algpink}{sq}(V \textcolor{algviolet}{-} v\_g)) \textcolor{algviolet}{+}
  \textcolor{algpink}{adp}(\textcolor{algpink}{sq}(v \textcolor{algviolet}{-} v\_g))\\
    \textcolor{algblue}{\# adp: Adaptive loss weighting, sq: Squared}\\[1em]
    g \textcolor{algviolet}{=} \textcolor{algpink}{grad}(loss, tgt)\\
    g \textcolor{algviolet}{=} \textcolor{algpink}{clip\_norm}(g, c\_max)\hspace*{1em}\textcolor{algblue}{\# DiT, JiT}\\
    tgt \textcolor{algviolet}{=} \textcolor{algpink}{optimizer\_step}(tgt, g)\\

  }
  \vspace{3pt}
  \hrule
  \end{minipage}
  \label{fig:mfa_algorithms}
  \end{figure*}

  \begin{figure*}[t!]
  \centering

  \begin{minipage}[t]{\textwidth}
  \hrule
  \vspace{3pt}
  \refstepcounter{algorithm}\label{alg:caimf-training}
  {\bfseries Algorithm~\thealgorithm:} \texttt{CAMF} post-training: alternating schedule.
  \vspace{3pt}
  \hrule
  \vspace{6pt}
  {\ttfamily\small
  \textcolor{algblue}{\# G\_theta: target-adapted MeanFlow generator \ \ \ \ \# D\_phi: continuous discriminator potential}\\
  \textcolor{algblue}{\# p\_tgt: target-domain data distribution \ \ \ \ \ \ \ \ \# lambda\_adv, lambda\_cp >= 0}\\
  \textcolor{algblue}{\# N\_D: disc. updates per generator update \ \ \ \ \ \ \ \# N\_warm: discriminator-only warmup}\\[1em]
  step \textcolor{algviolet}{=} 0\\
  \textcolor{algviolet}{while not} converged:\\
  \hspace*{1.5em}\textcolor{algblue}{\# Sample a target-domain transport interval}\\
  \hspace*{1.5em}y, x0 \textcolor{algviolet}{=} \textcolor{algpink}{sample\_target}(p\_tgt)\\
  \hspace*{1.5em}x1 \textcolor{algviolet}{=} \textcolor{algpink}{sample\_normal\_like}(x0)\\
  \hspace*{1.5em}r, t \textcolor{algviolet}{=} \textcolor{algpink}{sample\_interval}() \ \ \textcolor{algblue}{\# 0 <= r <= t - eps}\\
  \hspace*{1.5em}delta \textcolor{algviolet}{=} t \textcolor{algviolet}{-} r\\[1em]
  \hspace*{1.5em}xt \textcolor{algviolet}{=} (1 \textcolor{algviolet}{-} t) \textcolor{algviolet}{*} x0 \textcolor{algviolet}{+} t \textcolor{algviolet}{*} x1\\
  \hspace*{1.5em}xr \textcolor{algviolet}{=} (1 \textcolor{algviolet}{-} r) \textcolor{algviolet}{*} x0 \textcolor{algviolet}{+} r \textcolor{algviolet}{*} x1\\
  \hspace*{1.5em}batch \textcolor{algviolet}{=} (xt, xr, r, t, delta, y)\\[1em]
  \hspace*{1.5em}\textcolor{algviolet}{if} step \textcolor{algviolet}{<} N\_warm \textcolor{algviolet}{or} (step \textcolor{algviolet}{-} N\_warm) \textcolor{algviolet}{\%} (N\_D \textcolor{algviolet}{+} 1) \textcolor{algviolet}{<} N\_D:\\
  \hspace*{3em}phi \textcolor{algviolet}{=} \textcolor{algpink}{DiscriminatorUpdate}(batch, phi) \ \ \ \textcolor{algblue}{\# Alg.~\ref{alg:caimf-disc}}\\
  \hspace*{1.5em}\textcolor{algviolet}{else}:\\
  \hspace*{3em}theta \textcolor{algviolet}{=} \textcolor{algpink}{GeneratorUpdate}(batch, theta) \ \ \ \ \textcolor{algblue}{\# Alg.~\ref{alg:caimf-gen}}\\[1em]
  \hspace*{1.5em}step \textcolor{algviolet}{=} step \textcolor{algviolet}{+} 1\\
  }
  \vspace{3pt}
  \hrule
  \end{minipage}

  \vspace{10pt}

  \begin{minipage}[t]{0.485\textwidth}
  \hrule
  \vspace{3pt}
  \refstepcounter{algorithm}\label{alg:caimf-disc}
  {\bfseries Algorithm~\thealgorithm:} \texttt{DiscriminatorUpdate}.
  \vspace{3pt}
  \hrule
  \vspace{6pt}
  {\ttfamily\small
  \textcolor{algblue}{\# Freeze G; update the potential D\_phi}\\
  \textcolor{algviolet}{def} \textcolor{algpink}{DiscriminatorUpdate}(batch, phi):\\[1em]
  \hspace*{1.5em}xt, xr, r, t, delta, y \textcolor{algviolet}{=} batch\\[1em]
  \hspace*{1.5em}u\_pred \textcolor{algviolet}{=} \textcolor{algpink}{stopgrad}( \textcolor{algpink}{G\_theta}(xt, r, t,\\
  \hspace*{9.5em}condition\textcolor{algviolet}{=}y) )\\
  \hspace*{1.5em}xhat\_r \textcolor{algviolet}{=} xt \textcolor{algviolet}{-} delta \textcolor{algviolet}{*} u\_pred\\[1em]
  \hspace*{1.5em}D\_xt \ \ \textcolor{algviolet}{=} \textcolor{algpink}{D\_phi}(xt, \ \ \ \ \ t, r, t, cond\textcolor{algviolet}{=}y)\\
  \hspace*{1.5em}D\_xr \ \ \textcolor{algviolet}{=} \textcolor{algpink}{D\_phi}(xr, \ \ \ \ \ r, r, t, cond\textcolor{algviolet}{=}y)\\
  \hspace*{1.5em}D\_xhat \textcolor{algviolet}{=} \textcolor{algpink}{D\_phi}(xhat\_r, r, r, t, cond\textcolor{algviolet}{=}y)\\[1em]
  \hspace*{1.5em}a\_real \textcolor{algviolet}{=} (D\_xt \textcolor{algviolet}{-} D\_xr) \ \ \textcolor{algviolet}{/} delta\\
  \hspace*{1.5em}a\_fake \textcolor{algviolet}{=} (D\_xt \textcolor{algviolet}{-} D\_xhat) \textcolor{algviolet}{/} delta\\[1em]
  \hspace*{1.5em}L\_D\_adv \textcolor{algviolet}{=} \textcolor{algpink}{mean}( (a\_real \textcolor{algviolet}{-} 1) \textcolor{algviolet}{**} 2\\
  \hspace*{8em}\textcolor{algviolet}{+} (a\_fake \textcolor{algviolet}{+} 1) \textcolor{algviolet}{**} 2 )\\
  \hspace*{1.5em}L\_D\_cp \ \textcolor{algviolet}{=} \textcolor{algpink}{mean}( D\_xt \textcolor{algviolet}{**} 2 \textcolor{algviolet}{+} D\_xr \textcolor{algviolet}{**} 2\\
  \hspace*{8em}\textcolor{algviolet}{+} D\_xhat \textcolor{algviolet}{**} 2 )\\
  \hspace*{1.5em}L\_D \textcolor{algviolet}{=} L\_D\_adv \textcolor{algviolet}{+} lambda\_cp \textcolor{algviolet}{*} L\_D\_cp\\[1em]
  \hspace*{1.5em}\textcolor{algviolet}{return} \textcolor{algpink}{optimizer\_D}( phi, \textcolor{algpink}{grad}(L\_D, phi) )\\
  }
  \vspace{3pt}
  \hrule
  \end{minipage}
  \hfill
  \begin{minipage}[t]{0.485\textwidth}
  \hrule
  \vspace{3pt}
  \refstepcounter{algorithm}\label{alg:caimf-gen}
  {\bfseries Algorithm~\thealgorithm:} \texttt{GeneratorUpdate}.
  \vspace{3pt}
  \hrule
  \vspace{6pt}
  {\ttfamily\small
  \textcolor{algblue}{\# Freeze D; update the generator G\_theta}\\
  \textcolor{algviolet}{def} \textcolor{algpink}{GeneratorUpdate}(batch, theta):\\[1em]
  \hspace*{1.5em}xt, \_, r, t, delta, y \textcolor{algviolet}{=} batch\\[1em]
  \hspace*{1.5em}u\_pred \textcolor{algviolet}{=} \textcolor{algpink}{G\_theta}(xt, r, t, condition\textcolor{algviolet}{=}y)\\
  \hspace*{1.5em}xhat\_r \textcolor{algviolet}{=} xt \textcolor{algviolet}{-} delta \textcolor{algviolet}{*} u\_pred\\[1em]
  \hspace*{1.5em}D\_xt \ \ \textcolor{algviolet}{=} \textcolor{algpink}{D\_phi}(xt, \ \ \ \ t, r, t, cond\textcolor{algviolet}{=}y)\\
  \hspace*{1.5em}D\_xhat \textcolor{algviolet}{=} \textcolor{algpink}{D\_phi}(xhat\_r, r, r, t, cond\textcolor{algviolet}{=}y)\\[1em]
  \hspace*{1.5em}a\_fake \textcolor{algviolet}{=} (D\_xt \textcolor{algviolet}{-} D\_xhat) \textcolor{algviolet}{/} delta\\[1em]
  \hspace*{1.5em}L\_G\_adv \textcolor{algviolet}{=} \textcolor{algpink}{mean}( (a\_fake \textcolor{algviolet}{-} 1) \textcolor{algviolet}{**} 2 )\\
  \hspace*{1.5em}L\_G \textcolor{algviolet}{=} lambda\_adv \textcolor{algviolet}{*} L\_G\_adv\\[1em]
  \hspace*{1.5em}\textcolor{algblue}{\# The general objective also carries}\\
  \hspace*{1.5em}\textcolor{algblue}{\# \ \ lambda\_ot * mean(u ** 2);}\\
  \hspace*{1.5em}\textcolor{algblue}{\# we use the pure-adversarial regime,}\\
  \hspace*{1.5em}\textcolor{algblue}{\# lambda\_ot = 0.}\\[1em]
  \hspace*{1.5em}\textcolor{algviolet}{return} \textcolor{algpink}{optimizer\_G}( theta,\\
  \hspace*{8em}\textcolor{algpink}{grad}(L\_G, theta) )\\
  }
  \vspace{3pt}
  \hrule
  \end{minipage}
  \label{fig:caimf_algorithm}
  \end{figure*}


\end{document}